\newtheorem{theorem}{Theorem}
\crefname{theorem}{theorem}{Theorems}
\Crefname{Theorem}{Theorem}{Theorems}
\newtheorem{lemma}{Lemma}
\newaliascnt{corollary}{theorem}
\newtheorem{corollary}[corollary]{Corollary}
\crefname{corollary}{corollary}{corollaries}
\Crefname{Corollary}{Corollary}{Corollaries}
\newaliascnt{proposition}{theorem}
\newtheorem{proposition}[proposition]{Proposition}
\crefname{proposition}{proposition}{propositions}
\Crefname{Proposition}{Proposition}{Propositions}
\newaliascnt{definition}{theorem}
\newtheorem{definition}[definition]{Definition}
\crefname{definition}{definition}{definitions}
\Crefname{Definition}{Definition}{Definitions}
\newaliascnt{remark}{theorem}
\crefname{remark}{remark}{remarks}
\Crefname{Remark}{Remark}{Remarks}
\crefname{example}{example}{examples}
\Crefname{Example}{Example}{Examples}
\crefname{figure}{figure}{figures}
\Crefname{Figure}{Figure}{Figures}
\newcommand{\Yy}{\mathbf{Y}}
\newcommand{\Kk}{\mathbf{K}}
\newcommand{\Xx}{\mathbf{X}}
\newcommand{\Ii}{\mathbf{I}}
\newcommand{\LeftEqNo}{\let\veqno\@@leqno}
\newcommandx\sequence[3][2=,3=]
\newcommandx\sequencet[3][2=,3=]
\newcommand{\opnorm}[1]{{\left\vert\kern-0.25ex\left\vert\kern-0.25ex\left\vert #1
    \right\vert\kern-0.25ex\right\vert\kern-0.25ex\right\vert}}
\newcommand\coupling[2]{\Gamma(\mu,\nu)}
\newcommand{\N}{\mathbb{N}}
\newcommand{\R}{\mathbb{R}}
\newcommand{\E}{\mathbb{E}}
\newcommand{\eqsp}{\,}  
\newcommand{\br}[1]{\left[{#1}\right]}
\newcommand{\ostar}{\mathbin{\mathpalette\make@circled\star}}
\newcommand{\pstar}{\mathbin{\mathpalette\make@circled+}}
\newcommand{\make@circled}[2]{%
  \ooalign{$\m@th#1\smallbigcirc{#1}$\cr\hidewidth$\m@th#1#2$\hidewidth\cr}%
}
\newcommand{\smallbigcirc}[1]{%
  \vcenter{\hbox{\scalebox{0.77778}{$\m@th#1\bigcirc$}}}%
}
\newcommandx{\Vnorm}[2][1=V]{\| #2 \|_{#1}}
\DeclareMathOperator*{\sm}{<}
\DeclareMathOperator*{\bi}{>}
\theoremstyle{plain}
\newtheorem{assumption}{\textbf{H}\hspace{-3pt}}
\title{DU-Shapley: A Shapley Value Proxy for \\ Efficient Dataset Valuation}
\author{%
  Felipe Garrido-Lucero*\\
  Inria, Fairplay joint team\\
  Palaiseau, France\\
  \texttt{felipe.garrido-lucero@irit.fr}\\
  * Equal contribution\\
  \And
  Benjamin Heymann*\\
  Criteo AI Lab\\
  Paris, France \\
  \texttt{b.heymann@criteo.com}\\
  * Equal contribution\\
  \And
  Maxime Vono* \\
  Criteo AI Lab \\
  Paris, France \\
  \texttt{m.vono@criteo.com} \\
  * Equal contribution\\
  \And
  Patrick Loiseau \\
  Inria, Fairplay joint team \\
  Palaiseau, France \\
  \texttt{patrick.loiseau@inria.fr} \\
  \And
  Vianney Perchet \\
  ENSAE, FairPlay joint team\\
  Palaiseau, France \\
  \texttt{vianney@ensae.fr}
}
\begin{document}

\maketitle

\begin{abstract}
We consider the \textit{dataset valuation problem}, that is, the problem of quantifying the incremental gain, to some relevant pre-defined utility of a machine learning task, of aggregating an individual dataset to others.
The Shapley value is a natural tool to perform dataset valuation due to its formal axiomatic justification, which can be combined with Monte Carlo integration to overcome the computational tractability challenges. Such generic approximation methods, however, remain expensive in some cases. In this paper, we exploit the knowledge about the structure of the dataset valuation problem to devise more efficient Shapley value estimators. We propose a novel approximation, referred to as discrete uniform Shapley, which is expressed as an expectation under a discrete uniform distribution with support of reasonable size. We justify the relevancy of the proposed framework via asymptotic and non-asymptotic theoretical guarantees and illustrate its benefits via an extensive set of numerical experiments.
\end{abstract}

\section{Introduction}\label{sec:introduction}

One of the main challenges for training machine learning (ML) models with enough generalization capabilities is to access a sufficiently large set of labeled training data. These data often exist but are commonly spread across many parties, impairing their usage in a direct and simple way. 
Real world examples range from the advertising industry, where different retailers hold sets of observations with either similar or complementary features from consented data about browsing and shopping habits of individual users; to the medical sector where hospitals may improve their diagnostics accuracy by sharing their data. By collaborating with each other and pooling their individual datasets together, these \textit{dataset owners} could learn better ML models for their applications. Naturally, many questions raise from such collaborations. Federated learning \citep{Donahue_Kleinberg_AAAI_2021,Donahue_Kleinberg_NeurIPS_2021}, for example, addresses the issues related to the practical ways that dataset owners can share their data. We consider a complementary problem to the one in federated learning: measuring the additional value each party would obtain by participating in the joint ML effort. In order to compute or estimate compensating rewards allowing to incentivize parties to share their data, a first stage that is commonly considered in the literature is to perform so-called \emph{dataset valuation} \cite{10.1145/3328526.3329589,10.5555/3524938.3525766,Tay2021IncentivizingCI}. 

Motivated by natural properties expected for fair valuation, different solution concepts from cooperative game theory \citep{2011Chalkiadakis} have been considered, the Shapley value \citep{P-295} being arguably the most broadly studied valuation scheme in ML due to its axiomatic justification. 
Agarwal et al. \cite{10.1145/3328526.3329589} designed a data marketplace and used the Shapley value to allocate the data among buyers. 
Tay et al. \cite{Tay2021IncentivizingCI} considered a cooperative environment where agents can jointly train a generative model, from which synthetic data are drawn and distributed to the parties according to their Shapley values.  
Sim et al. \cite{10.5555/3524938.3525766} rewarded parties based on the Shapley value and information gain on model parameters. 
The critical challenge when using the Shapley value is its well-known computational intractability. To cope with it,  \cite{10.1145/3328526.3329589,Tay2021IncentivizingCI} considered Monte Carlo (MC) approximations, while \cite{10.5555/3524938.3525766} worked with a small set of three players. This approximation methods, however, remain expensive whenever computing the marginal contributions involve retraining. Moreover, they are generic and do not use the specific structure of the dataset valuation problem at stake, leaving open the possibility to find more adapted approximations for that problem. 

The Shapley value was also used in the related problem of \textit{data valuation}. Data valuation measures the contribution of a single data point within a dataset in the training of a given prediction model. Several solution concepts based on the Shapley value have been proposed for the data valuation problem including \texttt{Data} \texttt{Shapley} \citep{pmlr-v97-ghorbani19c,pmlr-v89-jia19a}, \texttt{DShapley} \citep{pmlr-v119-ghorbani20a,pmlr-v130-kwon21a}, \texttt{Beta} \texttt{Shapley} \citep{pmlr-v151-kwon22a} or \texttt{CS-Shapley} \citep{schoch2022csshapley}, together with different MC variants to cope with the computational intractability issue. For the data valuation problem, the structure was exploited to give easier-to-compute solutions in certain cases, in particular for the $k$-nearest neighbor problem \cite{ghorbani2022data,10.14778/3342263.3342637,liang2020beyond,liang2021herald,pandl2021trustworthy,shim2021online,wang2023threshold}. Unlike data valuation, however, dataset valuation aims at quantifying the marginal contribution of a \textit{whole dataset} to a given ML task with respect to (w.r.t.) the datasets brought by other dataset owners. Although data and dataset valuation are related problems, they are different and the techniques developed for data valuation cannot be used for the dataset valuation problem that we study (we further develop this point in \Cref{sec:dataset_is_not_data_valuation}).


\textbf{Contributions.} 
We consider the dataset valuation problem.  Following the ML literature, we model it as a cooperative game whose value function relates to the considered ML task, and aim at estimating the Shapley value to measure the dataset owners contribution. We propose a new way to address the computational intractability issue of the Shapley value. Instead of relying on generic MC approximation schemes, our approximation method leverages the structure of the dataset valuation problem as well as a convergence result for a key random variable of the problem. Our approximation behaves well in many cases, both theoretically and empirically. More specifically, our main contributions can be summarized as follows:\vspace{-0.2cm}
\begin{enumerate}[itemsep = -0.5mm, leftmargin = *]
\item We propose \texttt{DU-Shapley} (\Cref{def:DU_shapley_use_cases}), a novel Shapley value approximation that exponentially reduces the number of utility function valuations required for the computation. This is the first dataset valuation approach leveraging the specific structure of the utility function.
    
\item Based on three different use-cases, we establish asymptotic and non-asymptotic theoretical guarantees for \texttt{DU-Shapley}, showing notably that it converges almost surely to the Shapley value as the number of dataset owners grows.

\item We assess the benefits of the proposed methodology using extensive numerical experiments on both Shapley value approximation and dataset valuation use-cases. We show, in particular, that \texttt{DU-Shapley} outperforms all considered MC approximations of the Shapley value. 
\end{enumerate}
\vspace{-0.2cm}

\textbf{Additional Related Work.}
Cooperative game theory has been applied to solve multi-agents ML problems beyond data and dataset valuation \cite{cong2020game,kang2019incentive,lyu2020collaborative,yu2020fairness}. In particular, the Shapley value has been used to solve several problems including variable selection \citep{10.5555/1642293.1642400}, feature importance \citep{10.5555/3495724.3497168,10.5555/3295222.3295230,lumdberg}, or model interpretation \citep{chen2018lshapley}. In these problems, similarly to the data and dataset valuation problems, the computational intractability issue of the Shapley value is usually addressed via MC \citep{CASTRO20091726,RM-2651,JMLR:v23:21-0439}.

\section{Problem Formulation and Main Concepts Involved}
\label{sec:preliminaries}

This section presents the dataset valuation problem we aim to solve, along with preliminaries
including the definition and classical approximations of the Shapley value. For $n \in \mathbb{N}$ and $A$, we denote $[n] := \{1,..,n\}$ and $\mathrm{U}(A)$ the uniform distribution with support on $A$.

\subsection{Generic Model}
\label{subsec:problem_formulation}

We consider a collaborative ML setting involving a set $\mathcal{I}$ of $I = |\mathcal{I}| \in \mathbb{N}^*$ dataset owners, also referred to as players in the sequel, who are willing to cooperate in order to solve a common ML problem. 
Each player $i \in \mathcal{I}$ is assumed to possess an individual dataset $\mathrm{D}_{i} = \{(x_i^{(j)},y_i^{(j)})\}_{j \in [n_i]}$ where $x_i^{(j)} \in \mathcal{X} \subset \mathbb{R}^d$ stands for a feature vector, $y_i^{(j)} \in \mathcal{Y}$ is a label, $n_i = |\mathrm{D}_{i}|$ refers to the number of data points in $\mathrm{D}_{i}$, and samples are drawn independently from a player-dependent distribution $p_i$, i.e., $(x_i^{(j)},y_i^{(j)}) \sim p_i$, for all $j \in [n_i]$ and  $i \in \mathcal{I}$. 


Our basic motivation is to quantify the incremental contribution that a given player $i \in \mathcal{I}$ brings by sharing her dataset $\mathrm{D}_i$ with other players towards solving some ML task. Hence, we are interested in scenarios in which, even though the data distribution might differ across players, they face a similar ML task, for instance the minimization of the expectation (with respect to $p_i$) of some loss function $\ell(\hat{Y},Y)$, where $\hat{Y}$ denotes a prediction of $Y$. In such cases, players can usually learn from others' datasets, in the sense that given some $X$,  the optimal prediction $\hat{Y}$ that minimizes $\mathbb{E}[\ell(\hat{Y},Y)|X]$ is the same for all player.
This holds, \emph{e.g.}, if the conditional distributions (or, in many cases, simply the conditional expectation) of $y^{(j)}$ given $x^{(j)}$ are the same but the marginal distributions of $x^{(j)}$ differ.



To model this problem with full generality, we assume that the players $i\in\mathcal{I}$ collaborate in solving an ML task whose success is measured through some abstract metric $u$ that maps any dataset to a real number (say, the prediction accuracy in a classification problem). 
With a slight abuse of notation, for any coalition of players  $\mathcal{S} \subseteq \mathcal{I}$, we define $u(\mathcal{S}) = u(\mathrm{D}_\mathcal{S})$, where $\mathrm{D}_\mathcal{S} := \cup_{i\in\mathcal{S}}\mathrm{D}_{i}$. Hence, $u: 2^{\mathcal{I}} \rightarrow \mathbb{R}$ can be seen as a game-theoretical utility function that quantifies how well coalitions of players can solve the considered ML task based on the union of their datasets.

The following subsections provide three theoretical use-cases that instantiate the generic model and give specific utility functions $u$ to illustrate the dataset valuation problem. Using different tools and techniques, \Cref{section:proposed-approach} provides theoretical guarantees in each of them. These theoretical results are then complemented in \Cref{sec:numerical_experiments} by
numerical evidence of our proposed approach in more intricate practical problems on real data.

\subsubsection{Theoretical use-case 1: Non-parametric Regression}\label{usecase1}

The first use case we shall investigate is quite generic and consists in non-parametric regression. We assume the existence of a function $f^*$ such that $y^{(j)}_i=f^*(x_i^{(j)})+\eta^{(j)}_i$ with $\eta^{(j)}_i$ i.i.d., and a quadratic loss function. Without regularity assumption on $f^*(\cdot)$, learning can be arbitrarily slow; hence it is usually assumed that this mapping is Lipschitz (or at least $\beta$-H\"older \citep{gyorfi2022rate,10.5555/1522486}). 

The standard estimation method of $f^*$ we shall consider is called the \textsl{regressogram} or \textsl{binning} (also applied in \citep{gyorfi2022rate} to study local differential privacy within regression) and consists in learning optimal piece-wise constant functions. More precisely, given some parameter $B \in \mathbb{N}$---chosen exogeneously as a function of the function regularity $\beta$, the ambient dimension $d$ and the total number $n$ of datapoints, typically $B\simeq n^{\nicefrac{d}{(d+2\beta)}}$---, the feature space $\mathcal{X}$ is partitioned into $B$ cubic bins. The excess risk of learning $f^*$ can then be decomposed into 
\begin{align}
\mathbb{E}\bigl[(\hat{f}(x) - f^*(x))^2\bigr] = \mathbb{E}\bigl[(\hat{f}(x) - \Bar{f}(x))^2\bigr] + \mathbb{E}\bigl[(\Bar{f}(x) - f^*(x))^2 \bigr], \eqsp\label{eq:estimation_and_approximation_error}
\end{align}
where $\hat{f}$ is the estimator of $f^*$, $\Bar{f}(x) := \sum_{b\in [B]} \Bar{f}_b\mathbbm{1}\{x\in b\}$, and $\Bar{f}_b$ is any value that $f^*$ can take on the bin $b$. The second term in \eqref{eq:estimation_and_approximation_error} being agnostic to the players' datasets, the problem of measuring the contributions of the players to estimating $f^*$ can be decomposed into measuring their contributions to estimating each $\Bar{f}_b$. In particular, the utility $u(\mathcal{S})$ of a coalition $\mathcal{S}$ can be defined, and split into the sum of $B$ sub-utilities $u_b(\mathcal{S})$ functions, as follows
\begin{align}
    u(\mathcal{S}) := - \mathbb{E}\bigl[(\hat{f}_{\mathcal{S}}(x) - \Bar{f}(x))^2 \bigr] 
    = \sum\nolimits_{b\in [B]} -\mathbb{E}\bigl[(\hat{f}_{\mathcal{S},b} - \Bar{f}_{b})^2 \bigr]\mathbb{P}(x \in b) =: \sum\nolimits_{b\in [B]} u_b(\mathcal{S})\mathbb{P}(x \in b),
\end{align}
where $\hat{f}_{\mathcal{S}}$ is the estimator of $\Bar{f}$ when using the datasets of all players in $\mathcal{S}$ and $\hat{f}_{\mathcal{S},b}$ is the estimator $\Bar{f}_b$ when using, for all players in $\mathcal{S}$, the datasets of points in the bin $b$. 
Interestingly, after this reduction, the problem is decomposed into $B$ independent sub-problems---one per bin---, where the utility is a sole function of the number of data points used to estimate $\Bar{f}_b$, i.e., we can write $u_b(\mathcal{S}) = w_b(\sum_{i \in \mathcal{S}}n_{i,b})$ for some function $w_b: \mathbb{N} \to \mathbb{R}$, where $n_{i,b}$ is the number of data points that player $i$ has in the bin $b$. This last property motivates our second theoretical use-case.

\subsubsection{Theoretical use-case 2: Homogeneous case}\label{usecase2}

The second theoretical setting considers a general learning problem (not necessarily restricted to regression) and supposes that all players have the same sampling distribution, i.e., it takes $p_i = p$ for all $i \in \mathcal{I}$. This homogeneity on the players allows to reduce the problem of measuring the contribution of the players to just counting the number of data points contributed by each of them. Formally, and similarly to the previous use-case, we suppose the existence of a function $w: \mathbb{N}\to\mathbb{R}$ such that $u(\mathcal{S}) = w(\sum_{i\in\mathcal{S}}n_i)$.

\subsubsection{Theoretical use-case 3: Heterogeneous Linear Regression - Local Differential Privacy}\label{sec:linear_regression_LDP}

The third theoretical setting we consider is linear regression with random design and different variance of the features and labels per player. Although the setting is more general, one of the motivations behind it is standard linear regression with homogeneous data between players, but where players can purposely add noise when sharing their dataset (in order to provide Local Differential Privacy, for instance). Formally, for any $i \in \mathcal{I}$, we consider the following  linear model that generates the dataset $\mathrm{D}_i$ of size $n_i$:
\begin{equation}
\label{eq:lin_reg1}
\begin{aligned}
    &y_i^{(j)} = x_i^{(j)}\theta + \eta_i^{(j)}\eqsp, \text{ where } \eta_i^{(j)} \sim \mathrm{N}(0,\varepsilon_i^2)\eqsp, \text{ and }x_i^{(j)} \sim \mathrm{N}(0_d,\sigma_i^2 \mathrm{I}_d)\eqsp, \text{ for any } j \in [n_i],\eqsp\label{eq:likelihood_linear1} 
\end{aligned}
\end{equation}
with $\theta \in \R^d$ a ground-truth parameter, $\sigma_i$ positive and known, and $\varepsilon_i$ the differential privacy level chosen by player $i$. 
Under the linear regression framework defined in \eqref{eq:lin_reg1}, and following \citep{Donahue_Kleinberg_AAAI_2021}, the utility function of a set $\mathcal{S} \subseteq \mathcal{I}$ of players is defined by the negative expected mean square error over a hold-out dataset, i.e., 
\begin{equation}
    \label{eq:utility_lin_reg}
    u(\mathcal{S}) = -\mathbb{E}\bigl[\bigl(x^\top \hat{\theta}_{\mathcal{S}} - x^\top \theta\bigr)^2\bigr]\eqsp,
\end{equation}
where the expectation is taken over the distribution $p_{\mathrm{test}}$ of a hold-out testing datum $x \in \R^d$, the sampling distributions $\mathrm{N}(0,\sigma_i^2\mathrm{I}_d)$ for all $i\in\mathcal{S}$, and the linear regression error distributions $\mathrm{N}(0,\varepsilon_i^2), \forall i\in\mathcal{S}, j\in[n_i]$, and $\hat{\theta}_{\mathcal{S}}$ stands for the generalized least square estimator defined by $\hat{\theta}_{\mathcal{S}} = (X_{\mathcal{S}}^\top \Sigma_{\mathcal{S}}^{-1} X_{\mathcal{S}})^{-1} X_{\mathcal{S}}^\top \Sigma_{\mathcal{S}}^{-1} Y_{\mathcal{S}}, \text{ where } \Sigma_\mathcal{S} = \mathrm{diag}((\varepsilon^2_i)_{i\in\mathcal{S}}) \in \mathbb{R}^{|\mathcal{S}|\times|\mathcal{S}|}.$ The notations $X_{\mathcal{S}}$ and $Y_{\mathcal{S}}$ refer to the concatenation of $\{X_i\}_{i \in \mathcal{S}}$ and $\{Y_i\}_{i \in \mathcal{S}}$, respectively, and $X_i \in \R^{n_i \times d}$ is defined by $X_i = ([x_i^{(1)}]^\top,\ldots,[x_{i}^{(n_i)}]^\top)^\top$ while $Y_i \in \R^{n_i}$ is defined by $Y_i = (y_i^{(1)},\ldots,y_i^{(n_i)})^\top$.

The following result provides a close-form expression for the utility function in this case:
\begin{proposition}\label{prop:close_form_utility_function} 
Let $\mathcal{S}$ be a coalition of players and consider the value function as above. It follows,
\begin{align}
u(\mathcal{S}) = \frac{-\mathrm{Tr}\bigl[\mathbb{E}\bigl[x x^\top\bigr]\bigr]}{q({\mathcal{S}}) - d - 1} , \text{ where } q(\mathcal{S}) := \left\lfloor\frac{\bigl(\sum_{i\in \mathcal{S}} (\nicefrac{\sigma_i}{\varepsilon_i}) n_i\bigr)^2}{\sum_{i\in \mathcal{S}} \bigl(\nicefrac{\sigma_i}{\varepsilon_i}\bigr)^2 n_i}\right\rfloor, \text{ with the convention } q(\varnothing) = 0.
\end{align}
In particular, considering $p_{\mathrm{test}} = \mathrm{N}(0,\mathrm{I}_d)$, we get $   u(\mathcal{S}) = \frac{d}{d + 1 - q({\mathcal{S}})}.$
\end{proposition}

\Cref{prop:close_form_utility_function} shows that, in this use-case, the utility function can be written as a function $w(q(\mathcal{S}))$ of a scalar quantity $q(\mathcal{S})$ that captures the datasets heterogeneity. 
Notice that in this use-case, if we add the homogeneity assumption that $\sigma_i/\varepsilon_i = \sigma/\varepsilon$, for all $i \in \mathcal{I}$, then the term $q(\mathcal{S})$ becomes $\sum_{i \in \mathcal{S}}n_i$ and, as a consequence, we get
\begin{align}\label{eq:linear_utility_homogeneous_case}
u(\mathcal{S}) = w(q(\mathcal{S})) = w\left(\sum\nolimits_{i\in \mathcal{S}} n_i\right) = \frac{d}{d+1 - \sum_{i\in \mathcal{S}} n_i}.
\end{align}
Recall that, in the non-parametric regression use-case, it holds $u(\mathcal{S})=\sum_{b \in [B]}\mathbb{P}(x\in b)w_b(q_b(\mathcal{S}))$ where $q_b(\mathcal{S})=\sum_{i \in \mathcal{S}}n_{i,b}$. Therefore, in our three uses-cases, the utility of a coalition can be summarized as the function of some scalar quantity of interest. This observation will be useful to state later our theoretical results.


\subsection{Shapley Value}\label{sec:Shapley_value}

The Shapley value \citep{P-295} is a classical solution concept in cooperative game theory to fairly allocate the total gains generated by a coalition of players. 
Given a utility function $u$, the Shapley value of a player $i$ is defined as the average marginal
contribution of her dataset $\mathrm{D}_i$ to all possible subsets of $\{\mathrm{D}_j\}_{j \in \mathcal{I}\setminus\{i\}}$, built by aggregating the datasets of the other players.
Formally, the Shapley value $\varphi_i$ of player $i$ writes
\begin{equation}
    \label{eq:Shapley_def1}
    \varphi_i(u) = \frac{1}{|\Pi(\mathcal{I})|} \sum\nolimits_{\pi \in \Pi(\mathcal{I})} [u(\mathcal{P}_i^{\pi} \cup \{i\}) - u(\mathcal{P}_i^{\pi})]\eqsp,
\end{equation}
where $\Pi(\mathcal{I})$ refers to the set of permutations over $\mathcal{I}$ and $\mathcal{P}_i^{\pi}$ to the set of predecessors of player $i \in \mathcal{I}$ in permutation $\pi \in \Pi(\mathcal{I})$. 
The Shapley value of player $i$ is equivalently expressed as
\begin{align}
     \label{eq:Shapley_def2}
    \varphi_i(u) = \frac{1}{I}\sum\nolimits_{\mathcal{S} \subseteq \mathcal{I} \setminus \{i\}} \binom{I-1}{|\mathcal{S}|}^{-1} [u(\mathcal{S} \cup \{i\}) - u(\mathcal{S})]\eqsp.
\end{align}
The Shapley value has been commonly used in ML and cooperative game theory as it uniquely satisfies the following set of desirable properties.
\begin{enumerate}[itemsep = 0mm,leftmargin = *]
    \item \emph{Efficiency.} $\sum_{i=1}^I \varphi_i(u) = u(\mathcal{I})$, i.e, the sum of all Shapley values is equal to the value of $\mathcal{I}$.
    \item \emph{Symmetry.} If, for any $\mathcal{S} \subseteq \mathcal{I} \setminus \{i_1,i_2\}$, $u(\mathcal{S} \cup \{i_1\}) = u(\mathcal{S} \cup \{i_2\})$, then $\varphi_{i_1}(u) = \varphi_{i_2}(u)$, i.e., whenever two players have the same marginal contributions, their Shapley values coincide.
    \item \emph{Dummy.}  If, for any $\mathcal{S} \subseteq \mathcal{I} \setminus \{i\}$, $u(\mathcal{S} \cup \{i\}) = u(\mathcal{S})$, then $\varphi_{i}(u) = 0$, i.e., whenever a player has null marginal contributions, her Shapley value is zero.
    \item \emph{Linearity.}  $\varphi_i(u_1 + u_2) = \varphi_i(u_1) + \varphi_i(u_2)$, i.e., the Shapley value of sums of games is the sum of the Shapley values of the respective games.
\end{enumerate}
\vspace{-0.2cm}

\textbf{MC approximation of the Shapley Value.} Evaluating the Shapley value is unfortunately computationally expensive in general.
As a consequence, many  MC approximations have been considered by sampling with replacement $T$ terms from the sum of either \eqref{eq:Shapley_def1} or \eqref{eq:Shapley_def2}.
Regarding \eqref{eq:Shapley_def1}, this boils down to considering the estimator 
\begin{equation}
\label{eq:MC}
    \hat{\varphi}_i(u) = \frac{1}{T} \sum\nolimits_{t=1}^T [u(\mathcal{P}_i^{\pi_t} \cup \{i\}) - u(\mathcal{P}_i^{\pi_t})]\eqsp, \text{where $\pi_t \sim \mathrm{U}(\Pi(\mathcal{I}))$.}
\end{equation}

\subsection{Data valuation vs Dataset valuation}\label{sec:dataset_is_not_data_valuation}

A tentative, but naive, approach to solve the dataset valuation problem could be to run an auxiliary data-valuation algorithm on all the data and to assign to each dataset the sum of the values of its datapoints. We highlight the cons of this idea on a very simple, yet insightful example. Consider two datapoints $x_1$ and $x_2$, three datasets $\mathrm{D}_1 = \{x_1\}$, $\mathrm{D}_2 = \{x_2\}$, $\mathrm{D}_3 = \{x_2,x_2\}$, and the following toy utility function $u(\mathrm{D}) = \mathbbm{1}{\{x_1,x_2 \in \mathrm{D}\}}$. In data valuation, any point $x_2$ shall have the same value, as they are identical. In particular, a naive summation would value $\mathrm{D}_3$ twice the value of $\mathrm{D}_2$. In dataset valuation, and for this toy problem at hand, it is quite clear that both datasets should have the same value. Moreover, the Shapley values are $1/6$ for $\mathrm{D}_2$ and $\mathrm{D}_3$ versus $2/3$ for $\mathrm{D}_1$.

The message here is twofold. Data valuation and dataset valuation are two fundamentally different concepts and one cannot directly reduce the latter to the former. This is actually true, and this is the second message, because the utility function $u$ is highly non-linear (even for the regression task).

\section{Discrete Uniform Shapley Value}
\label{section:proposed-approach}

This section introduces and studies our approximation scheme for the Shapley value. \Cref{sec:Insights_DU_Shapley}  shows an  asymptotic property that gives the general intuition behind our approximation. The result holds for the three use-cases of \Cref{usecase1,usecase2,sec:linear_regression_LDP}.
\Cref{sec:DU_Shapley_definition} presents a general approximation methodology for dataset valuation and shows its almost surely convergence as the number of players grows for our three uses-cases. \Cref{sec:non_asymptotic_guarantees} studies the rate of convergence, first for the homogeneous setting (\Cref{usecase2}), and then leverages this result to obtain a similar one for the non-parametric regression setting (\Cref{usecase1}). All proofs are postponed to the supplementary material. 


\subsection{Insights behind \texttt{DU-Shapley}}\label{sec:Insights_DU_Shapley}

The Shapley value, by re-arranging the coalitions $\mathcal{S}\subseteq \mathcal{I}\setminus\{i\}$ by their cardinality in the sum in \eqref{eq:Shapley_def2}, can be equivalently expressed as
\begin{align}\label{eq:Shapley_def4}
    \varphi_i(u) = \mathbb{E}_{K \sim \mathrm{U}(\{0,...,I-1\})}\mathbb{E}_{\mathcal{S} \sim \mathrm{U}\bigl(2^{\mathcal{I}\setminus\{i\}}_{K}\bigr)}\br{u({\mathcal{S}} \cup\{i\}) - u({\mathcal{S}})}\eqsp,
\end{align}
where $2^{\mathcal{I}\setminus\{i\}}_{K}$ denotes the subsets of $\mathcal{I}\setminus \{i\}$ of cardinality $K$. In our three uses-cases, it follows that 
\begin{align}
\varphi_i(u) &= \varphi_i(w) = \mathbb{E}_{K \sim \mathrm{U}(\{0,...,I-1\})}\mathbb{E}_{\mathcal{S} \sim \mathrm{U}\bigl(2^{\mathcal{I}\setminus\{i\}}_{K}\bigr)}\br{w(q(\mathcal{S} \cup\{i\})) -w(q(\mathcal{S}))}\eqsp,\label{eq:Shapley_def3}
\end{align}
where $w : \mathbb{R}_+ \to \mathbb{R}$ is such that $u(\mathcal{S}) = w(q(\mathcal{S}))$ for any $\mathcal{S}\subseteq \mathcal{I}$, and $q(\mathcal{S})$ is the scalar quantity of interest identified in \Cref{usecase1,usecase2,sec:linear_regression_LDP} for each use-case:
\begin{align}\label{eq:q(S)_definition}
q(\mathcal{S}) := \biggl\lfloor\frac{\bigl(\sum_{i\in \mathcal{S}} \gamma_i n_i\bigr)^2}{\sum_{i\in \mathcal{S}} \gamma_i^2 n_i}\biggr\rfloor, \text{ where, for any } i \in \mathcal{I}, \gamma_i = \left\{\begin{array}{cl}
1 & \text{for the second use-case},  \\
 \nicefrac{\sigma_i}{\varepsilon_i} &  \text{for the third use-case},
\end{array} \right.
\end{align}
and for the first use-case, $q_b(\mathcal{S})$ is analogously defined at every bin, with $\gamma_i^b = 1$ for all players and all bins. We remark that the definition of $q(\mathcal{S})$ in the first and second use-cases is not restricted to linear regression. Equation \eqref{eq:Shapley_def3} explicitly reveals a key random variable, namely $q(\mathcal{S})$. Interestingly, \Cref{fig:approx_uniform} suggests that $q(\mathcal{S})$ converges in distribution to a uniform random variable as the number of players increases (with i.i.d. datasets sizes). \Cref{theorem:convergence_uniform} proves this result formally for any $(\gamma_i)_{i\in\mathcal{I}}$.

\begin{theorem}\label{theorem:convergence_uniform}
Let $\{n_i,\gamma_i\}_{i\in [I]}$ be two sequences of positive numbers such that the following limits
\begin{align}
&\lim_{I\to \infty}\frac{1}{I}\sum\nolimits_{i\in [I]} n_i\gamma_i= \mu_A,
\quad \lim_{I\to \infty}\frac{1}{I}\sum\nolimits_{i\in [I]} (n_i\gamma_i - \mu_A)^2 = \sigma^2_A,\\
&\lim_{I\to \infty}\frac{1}{I}\sum\nolimits_{i\in [I]} n_i\gamma_i^2 = \mu_B,
\quad \lim_{I\to \infty}\frac{1}{I}\sum\nolimits_{i\in [I]} (n_i\gamma_i^2 - \mu_B)^2 = \sigma^2_B \eqsp,
\end{align}
all exist, for some constants $\mu_A,\mu_B, \sigma_A, \sigma_B > 0$. Let $K \sim \mathrm{U}(\{0,\ldots,I\})$, $\mathcal{S}_{K} \sim \mathrm{U}([2^{\mathcal{I}}_{K}])$. Then, almost surely, $\frac{q(\mathcal{S}_{K})}{q(\mathcal{I})} \xrightarrow{I \to \infty} \mathrm{U}([0,1])$.
\end{theorem}
\vspace{-0.4cm}
\begin{figure}[h]
\includegraphics[scale=0.33]{figure/heterogeneous_case/Img7.png}
\includegraphics[scale=0.33]{figure/heterogeneous_case/Img8.png}
\includegraphics[scale=0.33]{figure/heterogeneous_case/Img9.png}
\vspace{-0.5cm}
\caption{Distribution of $q(\mathcal{S})/q(\mathcal{I})$ when $\mathcal{S}$ is sampled as in \eqref{eq:Shapley_def3} (i.e., first sample a size $K$ uniformly, then sample a coalition $\mathcal{S}$ of size $K$ uniformly). (left) $I = 10$, (middle) $I=50$, (right) $I=500$. 
We considered $10^4$ samples for each random variable, and the third use-case with $n_i\sim\mathrm{U}([100])$ and $\sigma_i/\varepsilon_i \sim \mathrm{U}([10])$ for each $i \in \mathcal{I}$.}
\label{fig:approx_uniform}
\end{figure}
\vspace{-0.25cm}

\subsection{Discrete Uniform Shapley value}\label{sec:DU_Shapley_definition}

The Shapley value re-arrangement in \eqref{eq:Shapley_def4} exposes the main tool behind our approximation: it is enough to approximate the distribution of the random variable $\mathrm{D}_{\mathcal{S}}$ that takes values on the subsets of $\mathrm{D}_{-i} := \cup_{j\in \mathcal{I}\setminus\{i\}} \mathrm{D}_{j}$ (recall that $u(\mathcal{S}) = u(\mathrm{D}_{\mathcal{S}})$). \Cref{theorem:convergence_uniform}, taking the example of the second use-case for intuition, indicates that these datasets have uniformly distributed numbers of points in the limit. Generalizing this intuition, we propose to approximate $\mathrm{D}_{\mathcal{S}}$ by taking $I$ samples of increasing size from the pool $\mathrm{D}_{-i}$ by sampling data points uniformly. This leads to the following definition of \texttt{DU-Shapley} for our generic model:
\begin{definition}\label{def:DU_shapley_use_cases}[\texttt{DU-Shapley}]
For any $i \in \mathcal{I}$, the discrete uniform Shapley value (\texttt{DU-Shapley}) of the $i$-th player, denoted by $\psi_i$, is given by 
\begin{align}\label{eq:DU-shapley}
\psi_i(u) := \frac{1}{I} \sum\nolimits_{k=0}^{I-1} u(\mathrm{D}^{(k)} \cup \mathrm{D}_i) - u(\mathrm{D}^{(k)}),
\end{align}
where $\mathrm{D}^{(k)}$ is a set of data points uniformly sampled without replacement  from $\mathrm{D}_{-i}$ of size $k\mu_{-i}$, with $\mu_{-i} = \frac{1}{(I-1)}|\mathrm{D}_{-i}|$.
\end{definition}
\vspace{-0.25cm}

Compared to the Shapley value defined in \eqref{eq:Shapley_def2}, which involves $2^I$ terms to compute, note that \texttt{DU-Shapley} only involves $I$ terms and hence it presents an exponential reduction of the number of utility function evaluations.
Of course, these computational savings come at the cost of some bias.
The latter is precisely quantified in \Cref{sec:non_asymptotic_guarantees} for our first two use-cases.

By definition, \texttt{DU-Shapley} is a random variable which depends on the sampled data points. However, whenever $u(\mathcal{S}) = w(q(\mathcal{S}))$, with $q(\mathcal{S})$ some scalar quantify of interest, as in our use-cases, we can get rid of the stochastic nature of \texttt{DU-Shapley} by considering $I$ real values from well chosen intervals. In particular, in our uses-cases, \texttt{DU-Shapley} boils down to:
\begin{align}\label{eq:DU_Shapley_use_cases}
&\psi_i(w) = \frac{1}{I} \sum\nolimits_{k=0}^{I-1} w(\Bar{q}_i^k) - w( \Bar{q}_{-i}^k),
\end{align}
where $$\Bar{q}_i^k := \biggl\lfloor\frac{(\gamma_i n_i + \frac{k}{I-1}\sum_{j\in \mathcal{I}\setminus\{i\}} \gamma_j n_j)^2}{\gamma_i^2 n_i + \frac{k}{I-1}\sum_{j\in \mathcal{I}\setminus\{i\}} \gamma_j^2 n_j}\biggr\rfloor \text{ and } \Bar{q}_{-i}^k := \biggl\lfloor\frac{k}{I-1}\cdot\frac{(\sum_{j\in \mathcal{I}\setminus\{i\}} \gamma_j n_j)^2}{\sum_{j\in \mathcal{I}\setminus\{i\}} \gamma_j^2 n_j}\biggr\rfloor.$$ 
We remark the notation abuse as we should write $\psi(w\circ q)$. For simplicity, we omit the composition and only write $\psi(w)$. \Cref{eq:DU_Shapley_use_cases} coincides exactly with \Cref{def:DU_shapley_use_cases} in the first two use-cases, i.e., when $\gamma_j = \gamma$ for all $j \in \mathcal{I}$. Indeed, as the random datasets $\mathrm{D}^{(k)}$ have a fixed size and the value function only looks at the number of data points within the coalition, we obtain, 
\begin{align}
    \psi_i(u) &= \frac{1}{I}\sum_{k=0}^{I-1} u(\mathrm{D}^{(k)} \cup \mathrm{D}_i) -  u(\mathrm{D}^{(k)}) = \frac{1}{I}\sum_{k=0}^{I-1} w(|\mathrm{D}^{(k)} \cup \mathrm{D}_i|) -  w(|\mathrm{D}^{(k)}|)\\
    &= \frac{1}{I}\sum_{k=0}^{I-1} w(k\mu_{-i} + n_i) -  w(k\mu_{-i}) = \psi_i(w).
\end{align}
For the third use-case,~\Cref{eq:DU_Shapley_use_cases} is an approximation that comes from  assuming that, for any $j \in \mathcal{I}\setminus\{i\}$, $|\mathrm{D}_j \cap \mathrm{D}^{(k)}| = k \cdot \frac{n_j}{I-1}$, which holds with high probability for large values of $I$, since
\begin{align}
    q(\mathrm{D}^{(k)} \cup \mathrm{D}_i) = \biggl\lfloor\frac{\bigl(\gamma_i n_i + \sum_{j \in \mathcal{I}\setminus\{i\}} \gamma_j \cdot |\mathrm{D}_j\cap \mathrm{D}^{(k)}|\bigr)^2}{\gamma_i^2 n_i + \sum_{j \in \mathcal{I}\setminus\{i\}} \gamma_j^2 \cdot |\mathrm{D}_j\cap \mathrm{D}^{(k)}|}\biggr\rfloor.
\end{align}
\Cref{theorem:convergence_uniform} implies the following result.

\begin{corollary}\label{cor:asymptotic_result_uses_cases}
Let $\varphi_i$ and $\psi_i$ be, respectively, the Shapley value \eqref{eq:Shapley_def2} and the \texttt{DU-Shapley} \eqref{eq:DU_Shapley_use_cases} of player $i$. Then, in our three uses-cases, it holds, $\lim_{I\to\infty} |\varphi_i - \psi_i| = 0$ almost surely.
\end{corollary}

While our theoretical results are based on Equation \eqref{eq:DU_Shapley_use_cases} for the cases where $u(\mathcal{S}) = w(q(\mathcal{S}))$, we will see through numerical experiments that \Cref{def:DU_shapley_use_cases} gives good results in more general cases.

\subsection{Non-Asymptotic Theoretical Guarantees}\label{sec:non_asymptotic_guarantees}

\Cref{cor:asymptotic_result_uses_cases} states asymptotic guarantees for \texttt{DU-Shapley}. In this section, we show non-asymptotic results that give the convergence rate for the first two uses-cases.\footnote{A similar result, albeit more technical, can be shown with the same arguments for the third use-case.} Recall that in non-parametric estimation, the utility writes as $u(\mathcal{S}) = \sum\nolimits_{b\in [B]} u_b(\mathcal{S})\mathbb{P}(x \in b)$,
and therefore, by the linearity axiom of the Shapley value, for any $i \in \mathcal{I}, \varphi_i(u) = \sum\nolimits_{b\in[B]} \varphi_i(u_b)\mathbb{P}(x\in b)$. As a consequence, in order to estimate $\varphi_i(u)$, it is enough to compute each $\varphi_i(u_b)$. In particular, the Shapley value approximation error over the whole feature space becomes a simple aggregation of the Shapley value approximation errors over the bins. We focus firstly on bounding the bias of our method in the homogeneous use-case to then extend it to the non-parametric regression case.

As in the homogeneous use-case the utility function writes as $u(\mathcal{S})= w(\sum_{i \in \mathcal{I}}n_i)$, we consider the following regularity assumptions on $w$. 

\begin{assumption}\label{ass:homogeneity} 
The function $w: \mathbb{R}_+ \rightarrow \R$ is increasing, twice continuously differentiable, and such that $\lim_{n \to \infty} n^2|w^{(2)}(n)| \sm \infty$ (where $w^{(2)}$ represents the second derivative).
\end{assumption}
\vspace{-0.2cm}

Monotonicity is a natural assumption in our framework as, the more data, the more precise the ML prediction is expected to be. The condition over the limit aims at controlling the growth behavior of the utility function and it is automatically satisfied whenever $w$ is bounded and $w^{(2)}$ is monotone, by the mean value theorem. \Cref{theorem:DU_shapley_error_bound} bounds the bias of \texttt{DU-Shapley} for the homogeneous use-case.

\begin{theorem}\label{theorem:DU_shapley_error_bound}
Under Assumption \textbf{H}\ref{ass:homogeneity}, there exists a constant $\kappa > 0$, such that, for any $i \in \mathcal{I}$, it holds,
\begin{align}
    \bigl|\varphi_i - \psi_i \bigr| \leq \frac{\kappa}{(I-1) \mu_{-i}^2} \left(\sigma_{-i}^2 (1+\ln(I-1)) + \zeta_{-i}\right), \label{eq:bound_error_1}
\end{align}
where $\varphi_i$ and  $\psi_i$ are respectively the Shapley value and the \texttt{DU-Shapley} of player $i$, $\mu_{-i} = \frac{1}{(I-1)}|\mathrm{D}_{-i}|$ is the average dataset size of all players but $i$,  $\sigma^2_{-i} = \frac{1}{I-1}\sum_{j\in \mathcal{I}\setminus\{i\}} (n_{j}-\mu_{-i})^2$ their empirical variance, and $\zeta_{-i}$ measures the variability of the dataset sizes across players. Formally, it is defined as $\zeta_{-i}:={R_{-i}^2\tau_{-i}^2}/{4{n}^{\mathrm{max}}_{-i}}$ 
where $R_{-i} := \max_{j \in \mathcal{I}\setminus\{i\}} |n_{j} - \mu_{-i}|$, ${n}^{\max}_{-i} := \max_{j\in \mathcal{I}\setminus\{i\}} n_{j}$, and $\tau_{-i}:= {{n}^{\max}_{-i}}/{\min_{j \in \mathcal{I} \setminus\{i\}} n_{j}}$.
\end{theorem}

The full proof of \Cref{theorem:DU_shapley_error_bound} is included in \cref{sec:proof_thm_bias_DU_Shapley} and it relies on controlling the absolute value of $\mathbb{E}[w(\mu_{-i}K) - w(\sum_{j\in\mathcal{S}}n_j)]$, where $K \sim \mathrm{U}(\{0,...,I-1\})$ and $\mathcal{S}$ is the random variable in \eqref{eq:Shapley_def4}. Using a second order Taylor expansion, the problem is reduced to controlling the term related to the second derivative of $w$ by using the regularity assumptions in \textbf{H}\ref{ass:homogeneity}. 


As advertised before, \Cref{theorem:DU_shapley_error_bound} can be directly generalized to the non-parametric use-case, since,
\begin{align}
u(\mathcal{S}) = \sum\nolimits_{b\in [B]} w_b\biggl(\sum\nolimits_{i \in \mathcal{S}}n_{i,b}\biggr)\mathbb{P}(x \in b), \ \text{ for } n_{i,b} = |\{(x,y) \in \mathrm{D}_j, x \in b\}|.
\end{align}

\begin{corollary}\label{cor:DU_shapley_error_bound_reg_non_parametric}
Under Assumption \textbf{H}\ref{ass:homogeneity} for all functions $w_b$, there exist constants $\kappa_b > 0$, such that, for any $i \in \mathcal{I}$, it holds that
\begin{align}
    \bigl|\varphi_i - \psi_i \bigr| \leq \sum\nolimits_{b\in[B]} \frac{\kappa_b\mathbb{P}(x \in b)}{(I-1) \mu_{-i,b}^2} \left(\sigma_{-i,b}^2 (1+\ln(I-1)) + 2\zeta_{-i,b}\right), \label{eq:bound_error}
\end{align}
where $\varphi_i$ and  $\psi_i$ are respectively the Shapley value and the \texttt{DU-Shapley} of player $i$, and all terms are equivalently defined to \Cref{theorem:DU_shapley_error_bound} at each bin $b \in [B]$.
\end{corollary}

The upper bound in \eqref{eq:bound_error} depends on natural quantities related to the dataset valuation problem described in \Cref{subsec:problem_formulation} at each bin, such as the first two moments $\mu_{-i,b}$ and $\sigma_{-i,b}$ of the datasets' size distribution.
More precisely, the error increases when there are some outlier players with a very small or large dataset size.
This behavior is expected since, in this particular setting, the random variable inside of the Shapley value differs from a uniform random variable.
As showcased in \Cref{theorem:convergence_uniform}, the error vanishes when the number of players $I$ tends towards infinity.

\section{Numerical Experiments}\label{sec:numerical_experiments}

We illustrate the benefits of \texttt{DU-Shapley} by measuring numerically three properties: (1) how well \texttt{DU-Shapley} approximates the Shapley value in real data, (2) how many (theoretical) iterations need other methods to achieve the same accuracy level than \texttt{DU-Shapley}, and (3) how well \texttt{DU-Shapley} performs in classical dataset valuation tasks with real data. \Cref{sec:appendix_DU_shapley_and_SVARM} complements the results by a complexity comparison between our method and \texttt{SVARM} \cite{kolpaczki2024approximating} and \Cref{subsec:synthetic} by experiments on synthetic data. The experiments strongly suggest that \texttt{DU-Shapley} performs well in all tasks.
\vspace{-0.2cm}

\subsection{Approximating the Shapley Value in Real-World Data}
\label{subsec:expe2}

We consider the real-world datasets in \citet{JMLR:v23:21-0439}, whose details are provided in Table \ref{table:dataset} in the appendix.
To tackle these problems we consider logistic regression models and gradient-boosted decision trees (GBDT). 
For classification tasks, the utility function has been taken as the expected accuracy of the trained logistic regression model over a hold-out testing set while for regression tasks, the utility function corresponds to the averaged MSE over a hold-out testing set. In both cases we took a hold-out testing set with 10\% of the size of the training dataset. 
For each dataset, we considered two worst-case scenarios for our method, namely $I = 10$ players and $I = 20$ players. 

Starting from the datasets in Table \ref{table:dataset}, we heterogeneously allocate datasets to the players. We compare ourselves with two approaches, referred to as \texttt{MC-Shapley}, for the standard MC approximation defined in \eqref{eq:MC}, and \texttt{MC-anti-Shapley} that considers, in addition, antithetic sampling \cite{JMLR:v23:21-0439}. We compute the averaged MSE across all players between the true Shapley value and each estimator. 

Since computing the marginal contributions in this experiment requires re-training, which is clearly not feasible for a large number of epochs, we chose to restrict ourselves to 20 steps of stochastic gradient descent for logistic regression and 20 boosting iterations for GBDTs. 
For MC-based approaches, we considered $I$ samples to compare those approximations with the proposed methodology on a fair basis, i.e., associated to the same computational budget. 

\Cref{table:expe2} depicts the results. We clearly see that, even in the worst-case scenario where the number of players is small and far from the theoretical assumptions from \Cref{sec:non_asymptotic_guarantees}, \texttt{DU-Shapley} competes favorably with the MC-based methods.

\begin{table}[ht]
\caption{Worst-case comparison between \texttt{DU-Shapley} and competitors, for real-world datasets considered in Table \ref{table:dataset}. We report the averaged MSE across all players w.r.t. the exact Shapley value.}
\centering
\small{\begin{tabular}{c|ll|ll|ll|ll} \toprule
Dataset         & \multicolumn{2}{c|}{adult} & \multicolumn{2}{c|}{breast-cancer}    & \multicolumn{2}{c|}{bank}  & \multicolumn{2}{c}{cal-housing}      \\ \midrule
Players    & \multicolumn{1}{c}{10} & \multicolumn{1}{c|}{20} & \multicolumn{1}{c}{10} & \multicolumn{1}{c|}{20} & \multicolumn{1}{c}{10} & \multicolumn{1}{c|}{20} & \multicolumn{1}{c}{10} & \multicolumn{1}{c}{20} \\ \midrule
DU-Shapley      & $\mathbf{2.10^{-3}}$ & $\mathbf{6.10^{-4}}$ & $\mathbf{3.10^{-3}}$ & $\mathbf{1.10^{-4}}$ & $\mathbf{5.10^{-2}}$ & $\mathbf{4.10^{-3}}$ & $\mathbf{1.10^{-2}}$ & $\mathbf{3.10^{-3}}$ \\
MC-Shapley      & $1.10^{-2}$ & $4.10^{-3}$ & $3.10^{-2}$ & $1.10^{-3}$ & $9.10^{-2}$ & $6.10^{-2}$ & $5.10^{-2}$ & $2.10^{-2}$ \\
MC-anti-Shapley & $8.10^{-3}$ & $2.10^{-3}$ & $1.10^{-2}$ & $8.10^{-4}$ & $8.10^{-2}$ & $4.10^{-2}$ & $3.10^{-2}$ & $1.10^{-2}$\\
\toprule
\end{tabular}

\begin{tabular}{c|ll|ll} \toprule
Dataset         & \multicolumn{2}{c|}{make-regression}  & \multicolumn{2}{c}{year}  \\ \midrule
Players    & \multicolumn{1}{c}{10} & \multicolumn{1}{c|}{20} & \multicolumn{1}{c}{10} & \multicolumn{1}{c}{20} \\ \midrule
DU-Shapley      & $\mathbf{9.10^{-2}}$ & $\mathbf{2.10^{-2}}$ & $\mathbf{1.10^{-3}}$ & $\mathbf{7.10^{-4}}$ \\
MC-Shapley      & $4.10^{-1}$ & $3.10^{-1}$ & $5.10^{-3}$ & $1.10^{-3}$ \\
MC-anti-Shapley & $4.10^{-1}$ & $2.10^{-1}$ & $5.10^{-3}$ & $1.10^{-3}$ \\
\toprule
\end{tabular}}
\label{table:expe2}
\end{table}\vspace{-0.3cm}

\subsection{Complexity of Computing the Shapley Values of all Players}\label{sec:complexity_numeral_results_paper}

We have looked at the number of iterations that \texttt{DataShapley} and the \textit{Improved Group Testing-Based} method \cite{wang2023threshold} (\texttt{IGTB}) require to achieve \texttt{DU-Shapley}'s accumulated bias, formally given by 
\begin{align}
    \mathrm{DU bias}(I) := \frac{\kappa}{I-1}\biggl(\sum\nolimits_{i\in \mathcal{I}} \frac{(9\sigma_{-i}^2(1+\log(I-1)) + \zeta_{-i})^2}{(\mu_{-i})^4}\biggr)^{1/2}.
\end{align}
To do so, we have replaced $\varepsilon = \mathrm{DU bias}(I)$, respectively, in the formula in Section 4.1 in \cite{pmlr-v89-jia19a} and Equation 5 in \cite{wang2023threshold}, with a value function motivated from our third use-case under the homogeneity assumption $\sigma_i/\varepsilon_i = \sigma/\varepsilon$ for all $i\in \mathcal{I}$. The results are illustrated in \Cref{fig:Jia_et_al_iterations}. Remark \texttt{DU-Shapley} requires $I^2$ iterations to compute all Shapley values. We observe that in all tested instances, both methods require a higher number of iterations to achieve the same error than DU-Shapley. 

\begin{figure}[ht]
    \centering
    \includegraphics[scale = 0.37]{New_images/Complexity_delta_0.01_and_0.1.png}
    \caption{Iterations required by DataShapley and the Improved Group Testing-Based method to achieve DU-Shapley's accumulated bias with function $w(n_S) = 1 - \frac{10^{k(\mathcal{I})}}{10^{k(\mathcal{I})} + n_S}$, where $n_S$ is the number of data points of the coalition $S \subseteq \mathcal{I}$, and $k(\mathcal{I}) := \lfloor \log(\sum_{i\in\mathcal{I}}n_{i}) \rfloor - 1$ is a normalization factor. (top) $\delta = 0.01$, (bottom) $\delta = 0.1$, (left) $n_{\mathrm{max}} = 10$, (middle) $n_{\mathrm{max}} = 50$, (right) $n_{\mathrm{max}} = 100$.}
    \label{fig:Jia_et_al_iterations}
\end{figure}
\vspace{-0.1cm}

\subsection{Applying \texttt{DU-Shapley} to dataset valuation problems}\label{sec:numerical_experiments_dataset_valuation}

We considered non-tabular datasets used in \cite{jiang2023opendataval}, namely bbc-embedding, IMDB-embedding, both text datasets, and CIFAR10-embedding, an image dataset. Feature embedding have been generated using pretrained DistilBERT and ResNet50 models, respectively. In addition we have adapted three baselines from data valuation to our setting: Leave-One-Out (LOO), DataShapley, and KNN-Shapley. \Cref{sec:opendataval_implementations} gives the implementations details. For these datasets associated to classification problems, we used a multi-layer perceptron classifier as prediction model. 

We have considered three dataset valuation problems, none of them needing the real Shapley values, which allows us to increase the number of players w.r.t. the experiments in \Cref{subsec:expe2}. We investigated noisy label detection (NLD), dataset removal (DR), and dataset addition (DA) \citep{jiang2023opendataval}. For NLD, we used as a metric the F1-score (the larger the better). For DR, we used the testing accuracy (the lesser the better). For DA, we used the testing accuracy (the lesser the better). 

We considered splitting the dataset across $I = 100$ players. The results are summarized in \Cref{table:exp_dataset_use_cases}. We observe that \texttt{DU-Shapley} has competitive results compared to classical baselines despite of the fact that none of the considered cases verifies the structural assumptions from \Cref{sec:non_asymptotic_guarantees}. In addition, we can see that \texttt{DU-Shapley} tends to have similar and even better results as \texttt{Data} \texttt{Shapley} (which is a MC based method). This is in line with our theory as, for larger number of players, \texttt{DU-Shapley} tends to better estimate the true Shapley value.

\begin{table}[ht]
\caption{Comparison between \texttt{DU-Shapley} and competitors for real-world datasets considered in \cite{jiang2023opendataval} in Noisy label detection, Dataset Removal and Dataset Addition.}
\centering
\small{\begin{tabular}{l|llllll|llllll}\toprule
Dataset & \multicolumn{6}{c|}{CIFAR 10} & \multicolumn{6}{c}{BBC} \\ \midrule
\multirow{2}{*}{Problem} & \multicolumn{2}{c}{NLD} & \multicolumn{2}{c}{DR} & \multicolumn{2}{c|}{DA} & \multicolumn{2}{c}{NLD} & \multicolumn{2}{c}{DR} & \multicolumn{2}{c}{DA} \\
& 5\% & 15\% & 5\% & 15\% & 5\% & 15\% & 5\% & 15\% & 5\% & 15\% & 5\% & 15\%   \\ \midrule
\texttt{Random}  & 0.11    & 0.19    & 0.61    & 0.60   & 0.25    & 0.41   & 0.11    & 0.19    & 0.90    & 0.88   & 0.68    & 0.81   \\
\texttt{LOO}  & 0.13    & 0.18    & 0.62    & 0.60   & 0.15    & 0.32   & 0.11    & 0.17    & 0.90    & 0.88   & 0.61    & 0.77   \\
\texttt{DataShapley}    & 0.13    & 0.25    & 0.61    & 0.59   & 0.12    & 0.18   & 0.12    & 0.20    & 0.89    & 0.87   & 0.08    & 0.12   \\
\texttt{KNN-Shapley}    & \textbf{0.14}    & 0.28    & \textbf{0.60}    & 0.57   & 0.12    & 0.15   & \textbf{0.19}    & 0.29    & \textbf{0.88}    & 0.86   & 0.13    & 0.12   \\
\texttt{DU-Shapley}  & \textbf{0.14}    & \textbf{0.30}    & 0.61    & \textbf{0.55}   & \textbf{0.11}    & \textbf{0.14}   & 0.18    & \textbf{0.34}    & 0.89    & \textbf{0.85}   & \textbf{0.07}    & \textbf{0.11} \\
\toprule 
\end{tabular}
\smallskip 

\begin{tabular}{l|llllll} \toprule
Dataset & \multicolumn{6}{|c}{IMBD} \\  \midrule
\multirow{2}{*}{Problem} & \multicolumn{2}{c}{NLD} & \multicolumn{2}{c}{DR} & \multicolumn{2}{c}{DA} \\
 & 5\% & 15\%    & 5\% & 15\%   & 5\% & 15\%   \\ \midrule
\texttt{Random}  & 0.10    & 0.16    & 0.77    & 0.75   & 0.62    & 0.68   \\
\texttt{LOO}  & 0.11    & 0.18    & 0.77    & 0.74   & 0.53    & 0.59   \\
\texttt{DataShapley}    & 0.17    & 0.28    & \textbf{0.75}    & 0.69   & 0.36    & \textbf{0.33}   \\
\texttt{KNN-Shapley}    & \textbf{0.18 }   & 0.29    & 0.76    & 0.68   & 0.41    & 0.37   \\
\texttt{DU-Shapley}  & \textbf{0.18}    & \textbf{0.32}    & 0.76    & \textbf{0.66}   & \textbf{0.33}    & 0.34 \\
\toprule
\end{tabular}}
\label{table:exp_dataset_use_cases}
\end{table}
\vspace{-0.3cm}

\section{Conclusion}\label{sec:conclusion}

We model the dataset valuation problem as a cooperative game and design a Shapley value approximation, named \texttt{DU-Shapley}, that exploits the underlying structure of the utility function and exponentially reduces the number of functions valuations required for the computation. In three different uses-cases, \texttt{DU-Shapley} is proved to almost surely converge to the real Shapley value as the number of players grows. Moreover, we find the rate of convergence, which depends only on natural parameters of dataset valuation.
Numerical experiments showcase that \texttt{DU-Shapley} performs well in approximating the Shapley value and performing dataset valuation tasks, even when the assumptions needed for the theoretical guarantees do not hold, and it has a good complexity when computing the Shapley values of all players.

\textbf{Limitations of our method}. 
Our non-asymptotic bound for the non-parametric regression setting in \Cref{cor:DU_shapley_error_bound_reg_non_parametric} indicates that \texttt{DU-Shapley} works better when agents' datasets are \textit{regular} in the sense that they have similar sizes. Hence, a limitation of our approximation is that it may work poorly in settings where some players have large datasets compared to others, as the distribution of the random variable within the Shapley value drives apart from being uniform. Moreover, our convergence result in \Cref{theorem:convergence_uniform} (for all use-cases) assume the existence of limits, which roughly requires that heterogeneity between players---in terms of both dataset size and variance---can be bounded. This also indicates that convergence may be not be guaranteed if the heterogeneity is arbitrarily high.

\section*{Acknowledgments}
This research was supported in part by the French National Research Agency (ANR) in the framework of the PEPR IA FOUNDRY project (ANR-23-PEIA-0003) and through the grant DOOM ANR-23-CE23-0002. It was also funded by the European Union (ERC, Ocean, 101071601). Views and opinions expressed are however those of the author(s) only and do not necessarily reflect those of the European Union or the European Research Council Executive Agency. Neither the European Union nor the granting authority can be held responsible for them.

\bibliography{paper_icml/biblio}
\bibliographystyle{plainnat}


\appendix


\section{Complementary Numerical results}\label{sec:appendix_complementry_numerical_experiments}

All experiments were executed on a laptop running macOS 13.3.1 and equipped with Apple M1 chip with 16GB of RAM. The minimum amount of compute was roughly 5 minutes while the maximum one roughly 10 hours.

\subsection{DU-Shapley vs SVARM}\label{sec:appendix_DU_shapley_and_SVARM}

We have looked at the probability at which SVARM (Theorem 4 \cite{kolpaczki2024approximating}) can ensure, after $I^2$ iterations (without considering the warm up as part of the budget), an error equal to DU-Shapley's accumulated bias. We have considered the same value function than in \Cref{sec:complexity_numeral_results_paper} with $n_{max} \in \{2\cdot 10^3,3\cdot 10^3,5\cdot 10^3,10^4\}$ and 100 simulations of sets of players at each time. \Cref{fig:Proba_SVARM} shows the results. We observe how SVARM cannot ensure, with high enough probability, an approximation error equal to the one of \texttt{DU-Shapley}.

\begin{figure}[ht]
    \centering
    \includegraphics[scale = 0.3]{New_images/Proba_SVARM.png}
    \caption{Probability that SVARM guarantees an error equal to DU-Shapley's bias}
    \label{fig:Proba_SVARM}
\end{figure}

\subsection{Approximating the Shapley value in Synthetic Data}\label{subsec:synthetic}

We consider a toy dataset valuation problem associated to our heterogeneous linear regression with local differential privacy use-case (\Cref{sec:linear_regression_LDP}) and we measure the value of a coalition $\mathcal{S}$ with the utility function in close-form from \Cref{prop:close_form_utility_function}. We consider $d = 10$. 

In order to benchmark the performances of \texttt{DU-Shapley}, we consider four competitive approaches, relying on Monte Carlo (MC) approximation strategies \citep{JMLR:v23:21-0439}. The first one, referred to as \texttt{MC-Shapley} is the standard MC approximation defined in \eqref{eq:MC}.
The second one, coined \texttt{MC-anti-Shapley} is a variance-reduced version of \texttt{MC-Shapley} that considers antithetic sampling.
The third one coined \texttt{Owen-Shapley} stands for the multilinear extension of \citet{Owen} which represents the Shapley value as two nested expectations (further explained in \Cref{sec:Owen_Shapley}). Finally, the fourth approach, coined \texttt{Orthogonal-Shapley}, relies on efficient permutation sampling techniques on the hypersphere to draw permutations in \eqref{eq:Shapley_def1} in a dependent way.
To assess the performance of the aforementioned Shapley value estimators, we used the mean square error (MSE) averaged over all players.
\texttt{DU-Shapley} is computed exactly by using \eqref{eq:DU_Shapley_use_cases} while, for each MC-based estimator, we performed 25 Shapley value estimations to compute the MSE, and did it 10 times to obtain confidence intervals for the MSE.

\Cref{fig:toy_example} compares \texttt{DU-Shapley} (the horizontal line which does not depend on the sampling budget as we compute it exactly) and the MC-based methods, which are computed at several different budgets. The x-axis corresponds to the sampling budget allowed to the MC-bases methods w.r.t. \texttt{DU-Shapley}, i.e., $10^{-1}$ means a budget equal to $10$\% the one of DU-Shapley, $10^0$ means same budget (indicated by the black vertical line), and $10^1$ means 10 times the \texttt{DU-Shapley} budget. Remark that, even when the MC-methods use 10 times the budget of \texttt{DU-Shapley}, our method keeps approximating better the Shapley value. 

\begin{figure}[ht]
    \centering
    \includegraphics[scale = 0.4]{New_images/MSE_syntetic_data_Gamma_10_10players.png}
    \includegraphics[scale = 0.4]{New_images/MSE_syntetic_data_Gamma_10_20players.png}
    \includegraphics[scale = 0.4]{New_images/MSE_syntetic_data_Gamma_100_10players.png}
    \includegraphics[scale = 0.4]{New_images/MSE_syntetic_data_Gamma_100_20players.png}
    \caption{Worst-case comparison between \texttt{DU-Shapley} and MC-based approximations with different budgets on synthetic datasets. From left to right, $I = 10$ and $I = 20$, $n_i \sim \mathrm{U}([10^3]), \forall i \in \mathcal{I}$. (top) Scenario with small heterogeneity, $\nicefrac{\sigma_i}{\varepsilon_i} \sim \mathrm{U}([0,10]), \forall i \in \mathcal{I}$, (bottom) scenario with high heterogeneity, $\nicefrac{\sigma_i}{\varepsilon_i} \sim \mathrm{U}([0,100]), \forall i \in \mathcal{I}$.}
    \label{fig:toy_example}
\end{figure}




\section{Further details about numerical implementations}\label{sec:appendix_numerical_results}

\subsection{Datasets considered in \Cref{subsec:expe2}.}

\Cref{table:dataset} summarizes the real-world datasets considered in \Cref{subsec:expe2}.

\begin{table*}[h!]
\caption{Datasets considered in \Cref{subsec:expe2}.}
\centering%
{\small
\begin{tabular}{lccc}%
\toprule
Dataset & Size & $d$ & Task\\
\midrule
adult \citep{10.5555/3001460.3001502} & 48,842 & 107 & classification\\
breast-cancer \citep{10.1287/opre.43.4.570} & 699 & 30 & classification \\
bank \citep{MORO201422} & 45,211 & 16 & classification \\
cal-housing \citep{KELLEYPACE1997291} & 20,640 & 8 & regression \\
make-regression \citep{JMLR:v12:pedregosa11a}& 1,000 & 10 & regression  \\
year \citep{JMLR:v12:pedregosa11a} & 515,345 & 90 & regression  \\
\midrule
\end{tabular}
}
\label{table:dataset}
\end{table*}

\subsection{OpenDataVal implementations}\label{sec:opendataval_implementations}

In this section we describe more in detail the implementations of DataShapley, Leave-One-Out (LOO), and KNN-Shapley for our numerical results in \Cref{sec:numerical_experiments_dataset_valuation}. 

DataShapley, applied to the dataset valuation problem, simply corresponds to the method coined MC in \Cref{subsec:expe2}. Therefore, we sample datasets and output the averaged marginal contribution.

Regarding LOO, notice that it corresponds to compute just one marginal contribution, usually computed on the big coalition, i.e.,
\begin{align}
    \mathrm{LOO}_i := u(\mathcal{I}) - u(\mathcal{I}\setminus\{i\}).
\end{align}
As players' marginal contributions to large datasets tend to be small, we have preferred to sample one dataset $\mathrm{D}$ from $\mathrm{D}_{-i}$ and to output
\begin{align}
    \mathrm{LOO}_i := u(\mathrm{D} \cup \mathrm{D}_i) - u(\mathrm{D}).
\end{align}
Finally, regarding KNN-Shapley, we refer the reader to \cite{10.14778/3342263.3342637}, Section E.3 of the appendix who explain how to adapt the method to dataset valuation.

\subsection{Owen's Shapley value approximation}\label{sec:Owen_Shapley}

In Section \ref{subsec:synthetic}, we considered the Shapley value approximation referred to as \texttt{Owen-Shapley} as a state-of-the-art competitor to \texttt{DU-Shapley}. 
We provide in the following additional details regarding \texttt{Owen-Shapley}.
For the other competitors, we directly refer the interested reader to \citet{JMLR:v23:21-0439}. Owen \citep{Owen} studied the multilinear extension of a cooperative game and an alternative way to express the Shapley value. Formally, a cooperative game $G = (\mathcal{I},u)$ consists on a set of $I$ players $\mathcal{I} = \{1,2,...,I\}$ and a value function $u: 2^{\mathcal{I}} \to \mathbb{R}$ such that, for any $S \subseteq \mathcal{I}$, $u(S)$ corresponds to the value generated by the coalition $S$. The multilinear extension of $G$, denoted $\bar{G} = (\mathcal{I},\bar{u})$, is obtained when considering the value function $\bar{u} : [0,1]^{\mathcal{I}} \to \mathbb{R}$ given by,
\begin{align}
    \bar{u}(x_1,x_2,...,x_I) = \sum_{S \subseteq \mathcal{I}} \prod_{i \in S} x_i \prod_{j\notin S} (1-x_i) u(S).
\end{align}
Intuitively, $\bar{u}(x_1,x_2,...,x_I)$ corresponds to the expected value of a coalition when each player $i \in \mathcal{I}$ joins the coalition with probability $x_i$. Theorem 5 in \cite{Owen} gives an alternative way to compute the Shapley value $\varphi_i(u)$ of player $i$ in game $G$, namely,
\begin{align}
    \varphi_i(u) &= \int_{0}^1 \frac{\partial \bar{u}}{\partial x_i}(\tau,...,\tau) \mathrm{d}\tau = \int_{0}^1 \sum_{S \subseteq \mathcal{I}\setminus\{i\}} \tau^{|S|}(1-\tau)^{I-|S|-1}[u(S\cup\{i\}) - u(S)] \mathrm{d}\tau \\
    &=\int_0^1 \mathbb{E}\bigl[u(\mathcal{E}_i(\tau) \cup i) - u(\mathcal{E}_i(\tau))\bigr]\mathrm{d}\tau = \mathbb{E}_{\tau\sim \mathrm{U}([0,1])} \biggl[ \mathbb{E}\bigl[u(\mathcal{E}_i(\tau) \cup i) - u(\mathcal{E}_i(\tau))\bigr]\biggr],
\end{align}
where $\mathcal{E}_i(\tau)$ is a random subset of $\mathcal{I}\setminus\{i\}$, such that, $\forall j \in \mathcal{I}\setminus\{i\}$, $j$ is included in $\mathcal{E}_i(\tau)$ with probability $\tau$. In words, the Shapley value of player $i$ corresponds to her expected marginal contribution to the random set $\mathcal{E}_i(\tau)$, when $\tau$ is uniformly distributed on $[0,1]$. This brings an alternative way to use Monte Carlo to approximate the Shapley value $\varphi_i(u)$, coined Owen-Shapley, as,
\begin{align}
    \hat{\varphi}_i^{\text{Owen}}(u) = \frac{1}{T}\sum_{t=1}^T u(\mathcal{E}_i(\tau_t) \cup i) - u(\mathcal{E}_i^t(\tau_t)),
\end{align}
where for each $t \in \{1,...,T\}$, we draw $\tau_t$ independently and uniformly in   $[0,1]$ and then, create a random set $\mathcal{E}_i(\tau_t)$ by adding each player $j \in \mathcal{I} \setminus\{i\}$ to it with probability $\tau_t$.

\section{Missing proofs}

\subsection{Proof of \Cref{prop:close_form_utility_function}}

\textbf{\Cref{prop:close_form_utility_function}}. \textit{Let $\mathcal{S}\subseteq \mathcal{I}$ be a coalition of players and consider the value function $u$ as in \eqref{eq:utility_lin_reg}. It follows,
\begin{align}
u(\mathcal{S}) = \frac{-\mathrm{Tr}\bigl[\mathbb{E}\bigl[x x^\top\bigr]\bigr]}{q({\mathcal{S}}) - d - 1} , \text{ where } q(\mathcal{S}) := \left\lfloor\frac{\bigl(\sum\limits_{i\in \mathcal{S}} \frac{\sigma_i}{\varepsilon_i} n_i\bigr)^2}{\sum\limits_{i\in \mathcal{S}} \bigl(\frac{\sigma_i}{\varepsilon_i}\bigr)^2 n_i}\right\rfloor, \text{with the convention } q(\varnothing) = 0.
\end{align}
In particular, considering $p_{\mathrm{test}} = \mathrm{N}(0,\mathrm{I}_d)$, we get,
\begin{align}
    u(\mathcal{S}) = \frac{d}{d + 1 - q({\mathcal{S}})}.
\end{align} 
}

\begin{proof}
Let $\mathcal{S} \subseteq \mathcal{I}$ be a coalition of players and $X_{\mathcal{S}}, Y_{\mathcal{S}}$ be the concatenation of their datasets. The linear model can be rewritten in matrix form as
\begin{align}
    Y_\mathcal{S} = X_\mathcal{S}\theta + \eta_\mathcal{S},
\end{align}
where $\eta_\mathcal{S}$ is the concatenation of $\eta_i^{(j)}$ for all $i\in \mathcal{S}$ and $j \in [n_i]$. Take $\hat{\theta}_{\mathcal{S}} = (X_{\mathcal{S}}^\top \Sigma_{\mathcal{S}}^{-1} X_{\mathcal{S}})^{-1} X_{\mathcal{S}}^\top \Sigma_{\mathcal{S}}^{-1} Y_{\mathcal{S}}$ where $\Sigma_\mathcal{S} = \mathrm{diag}((\varepsilon^2_i)_{i\in\mathcal{S}})$, and let  $x \sim p_{\mathrm{test}}$ be a hold-out testing datum in $\mathbb{R}^d$. It follows,
\begin{align}
\bigl(x^\top&(\theta-\hat{\theta}_\mathcal{S})\bigr)^2 
= \biggl(\sum_{i\in\mathcal{S}} \eta_i \varepsilon_{i}^{-2}X_i\biggr) \biggl(\sum_{i\in \mathcal{S}} X_i^\top\varepsilon_{i}^{-2}X_i\biggr)^{-1} x x^\top \biggl(\sum_{i\in \mathcal{S}} X_i^\top\varepsilon_{i}^{-2}X_i\biggr)^{-1}\biggl(\sum_{i\in\mathcal{S}} \eta_i \varepsilon_{i}^{-2}X_i\biggr)\\
&= \mathrm{Tr}\biggl[\biggl(\sum_{i\in\mathcal{S}} \eta_i\varepsilon_{i}^{-2} X_i\biggr) \biggl(\sum_{i\in \mathcal{S}} X_i^\top\varepsilon_{i}^{-2} X_i\biggr)^{-1} x x^\top \biggl(\sum_{i\in \mathcal{S}} X_i^\top\varepsilon_{i}^{-2} X_i\biggr)^{-1}\biggl(\sum_{i\in\mathcal{S}} \eta_i\varepsilon_{i}^{-2} X_i\biggr) \biggr]\\
&= \mathrm{Tr}\biggl[x x^\top \biggl(\sum_{i\in \mathcal{S}} X_i^\top \varepsilon_{i}^{-2}X_i\biggr)^{-1} \biggl(\sum_{i\in\mathcal{S}} \eta_i\varepsilon_{i}^{-2} X_i\biggr)\biggl(\sum_{i\in\mathcal{S}} \eta_i \varepsilon_{i}^{-2}X_i\biggr)\biggl(\sum_{i\in \mathcal{S}} X_i^\top\varepsilon_{i}^{-2} X_i\biggr)^{-1}\biggr]\\
&= \mathrm{Tr}\biggl[x x^\top \biggl(\sum_{i\in \mathcal{S}} X_i^\top\varepsilon_{i}^{-2} X_i\biggr)^{-1} \biggl(\sum_{i\in\mathcal{S}}\sum_{j\in\mathcal{S}} X_i^\top\varepsilon_{i}^{-2} \eta_i\eta_j^\top\varepsilon_{j}^{-2} X_j\biggr)\biggl(\sum_{i\in \mathcal{S}} X_i^\top \varepsilon_{i}^{-2}X_i\biggr)^{-1}\biggr]
\end{align}
We take expectation with respect to the different stochastic terms. Since $\eta_i^{(k)} \sim \mathrm{N}(0,\varepsilon_i^2)$ for any $i \in \mathcal{S}, k \in [n_i]$, it holds,
\begin{align}
\mathbb{E}_{(\eta_i^{(k)}\sim \mathrm{N}(0,\varepsilon_i^2))_{i\in\mathcal{S}}^{k\in[n_i]}}
&\bigl[\bigl(x^\top(\theta-\hat{\theta}_\mathcal{S})\bigr)^2\bigr] \\
&= \mathrm{Tr}\biggl[x x^\top \biggl(\sum_{i\in \mathcal{S}} X_i^\top\varepsilon_{i}^{-2} X_i\biggr)^{-1} \biggl(\sum_{i\in\mathcal{S}} X_i^\top\varepsilon_{i}^{-2} X_i\biggr)\biggl(\sum_{i\in \mathcal{S}} X_i^\top\varepsilon_{i}^{-2} X_i\biggr)^{-1}\biggr]\\
&= \mathrm{Tr}\biggl[x x^\top \biggl(\sum_{i\in \mathcal{S}} X_i^\top\varepsilon_{i}^{-2} X_i\biggr)^{-1}\biggr]
\end{align}
Since players distributions differ on their variances, $\sum_{i\in \mathcal{S}} X_i^\top \varepsilon_i^{-2}X_i$ corresponds to a semi-correlated Wishart random variable where each $\frac{1}{\varepsilon_i}X_i \sim \mathrm{N}(0,(\frac{\sigma_i}{\varepsilon_i})^2 \mathrm{I}_d)$. In particular, the semi-correlated Wishart random variable can be approximated by a central Wishart distribution \cite{pivaro2017exact,tan1983approximating}, whose precision depends on the homogeneity of the coefficients $\sigma_i/\varepsilon_i$ over all $i \in \mathcal{I}$, as showed in \cite{khuri1994test}. It follows,
\begin{align}
\mathbb{E}_{(X_i\sim\mathrm{N}(0,\sigma_i^2\mathrm{I}_d))_{i\in\mathcal{S}}} \left[\biggl(\sum_{i \in \mathcal{S}} X_i^\top\varepsilon_i^{-2} X_i \biggr)^{-1}\right]
\approx \frac{\mathrm{I}_d}{(q({\mathcal{S}}) - d - 1)}\eqsp,
\end{align}
where
\begin{align}
    q(\mathcal{S}) := \left\lfloor\frac{\bigl(\sum\limits_{i\in \mathcal{S}} \frac{\sigma_i}{\varepsilon_i} n_i\bigr)^2}{\sum\limits_{i\in \mathcal{S}} \bigl(\frac{\sigma_i}{\varepsilon_i}\bigr)^2 n_i}\right\rfloor.
\end{align}
With all this in mind, it follows,
\begin{align}
\mathbb{E}_{\substack{(\eta_i^{(j)}\sim \mathrm{N}(0,\varepsilon_i^2))_{i\in\mathcal{S}}^{j\in[n_i]}\\(X_i\sim\mathrm{N}(0,\sigma_i^2\mathrm{I}_d))_{i\in\mathcal{S}}}}
\bigl[ \bigl(x^\top(\theta-\hat{\theta}_\mathcal{S})\bigr)^2\bigr] 
&= \mathrm{Tr}\biggl[x x^\top \frac{\mathrm{I}_d}{(q({\mathcal{S}}) - d - 1)}\biggr] 
= \frac{1}{q({\mathcal{S}}) - d - 1} \mathrm{Tr}\bigl[x x^\top\bigr].
\end{align}
In particular, considering $p_{\mathrm{test}} = \mathrm{N}(0,\mathrm{I}_d)$, we get,
\begin{align}
    u(\mathcal{S}) = \frac{d}{d + 1 - q({\mathcal{S}})}. 
\end{align}
\end{proof}

\subsection{Proof of \Cref{theorem:convergence_uniform}}

\textbf{\Cref{theorem:convergence_uniform}.}
\textit{Let $\{n_i,\gamma_i\}_{i\in [I]}$ be two sequences of positive numbers such that the following limits
\begin{align}
&\lim_{I\to \infty}\frac{1}{I}\sum\nolimits_{i\in [I]} n_i\gamma_i= \mu_A,
\quad \lim_{I\to \infty}\frac{1}{I}\sum\nolimits_{i\in [I]} (n_i\gamma_i - \mu_A)^2 = \sigma^2_A,\\
&\lim_{I\to \infty}\frac{1}{I}\sum\nolimits_{i\in [I]} n_i\gamma_i^2 = \mu_B,
\quad \lim_{I\to \infty}\frac{1}{I}\sum\nolimits_{i\in [I]} (n_i\gamma_i^2 - \mu_B)^2 = \sigma^2_B \eqsp,
\end{align}
all exist, for some constants $\mu_A,\mu_B, \sigma_A, \sigma_B > 0$. Let $\Kk \sim \mathrm{U}(\{0,\ldots,I\})$, $\mathcal{S}_{\Kk} \sim \mathrm{U}([2^{\mathcal{I}}_{\Kk}])$, and define $q(\mathcal{S}_\Kk)$ as in \eqref{eq:q(S)_definition} for the third use-case. Then, almost surely, $\nicefrac{q(\mathcal{S}_{\Kk})}{q(\mathcal{I})} \xrightarrow{I \to \infty} \mathrm{U}([0,1])$.}

\begin{proof}
Introduce, for any $t,t_0 \in (0,1)$ and any $s \bi 0$, 
\begin{align}
&\mu_A(I) = \frac{1}{I}\sum_{i \in [I]} n_i\gamma_i,\quad 
\mu_B(I) = \frac{1}{I}\sum_{i \in [I]} n_i\gamma_i^2,\\
&Y_A(t,I) = \sum_{i\in \mathcal{S}_{{\lfloor I t\rfloor}}}n_i\gamma_i,\quad 
Y_B(t,I) = \sum_{i\in \mathcal{S}_{{\lfloor I t\rfloor}}}n_i\gamma_i^2.\\
&R_A(I,t_0,s) = \mathbb{P}\biggl( \sup_{t>t_0} \biggl| \frac{Y_A(t,I)}{\lfloor I t\rfloor} - \mu_A(I)\biggr| > s\biggr),\\
&R_B(I,t_0,s) = \mathbb{P}\biggl( \sup_{t>t_0} \biggl| \frac{Y_B(t,I)}{\lfloor I t\rfloor} - \mu_B(I)\biggr| > s\biggr).
\end{align}
By construction, $Y_A(t,I)$ and $Y_B(t,I)$ are sums of sampling without replacement of $\lfloor It\rfloor$ elements. Therefore, by Corollary 1.3 in \cite{serfling1974probability}, for $s$ fixed, there exists $I_0^A, I_0^B \in \mathbb{N}$ such that,
\begin{align}
    R_A(I,t_0,s)\leq \frac{(1-t_0)\sigma_A^2}{\lfloor It_0\rfloor s^2}, \forall I\geq I_0^A \text{ and } R_B(I,t_0,s)\leq \frac{(1-t_0)\sigma_B^2}{\lfloor It_0\rfloor s^2}, \forall I\geq I_0^B.
\end{align}
In other words, for any $s \bi 0$ and $I$ large enough, almost surely, it holds,
\begin{align}
\biggl\vert \frac{Y_A(t,I)}{{\lfloor I t\rfloor}} - \mu_A(I) \biggr\vert \leq s \text{ and } \biggl\vert \frac{Y_B(t,I)}{{\lfloor I t\rfloor}} - \mu_B(I) \biggr\vert \leq s.
\end{align}
It follows,
\begin{align}
\biggl\vert \frac{q(\mathcal{S}_{\lfloor It \rfloor})}{\lfloor It \rfloor} - \frac{\mu_A(I)^2}{\mu_B(I)} \biggr\vert &= 
\biggl\vert \frac{1}{\lfloor It \rfloor}\cdot\frac{Y_A(t,I)^2}{Y_B(t,I)} - \frac{\mu_A(I)^2}{\mu_B(I)} \biggr\vert \\
&= \biggl\vert\biggl(\frac{Y_A(t,I)^2}{\lfloor It \rfloor^2} - \mu_A(I)^2 + \mu_A(I)^2\biggr) \biggl(\frac{\lfloor It \rfloor}{Y_B(t,I)} - \frac{1}{\mu_B(I)} \biggr)\\
&\quad+ \biggl(\frac{Y_A(t,I)^2}{\lfloor It \rfloor^2} - \mu_A(I)^2\biggr)\frac{1}{\mu_B(I)} \biggr\vert \\
&\leq s\biggl( s + \mu_A(I) + \frac{1}{\mu_B(I)}\biggr),
\end{align}
which is arbitrarily small as $\mu_A(I),\mu_B(I) \to \mu_A,\mu_B \sm \infty$. Therefore, almost surely,
\begin{align}
    \lim_{I\to\infty} \frac{q(\mathcal{S}_{\lfloor It \rfloor})}{I\mu_A(I)^2/\mu_B(I)} = t.
\end{align}
The proof concludes noticing that
\begin{align}
    q(\mathcal{I}) = \frac{I\mu_A(I)^2}{\mu_B(I)},
\end{align}
and that $\Kk = \lfloor I U\rfloor$ with $U\sim \mathrm{U}([0,1])$.
\end{proof}

\subsection{Proof of Theorem \ref{theorem:DU_shapley_error_bound}}\label{sec:proof_thm_bias_DU_Shapley}

To prove Theorem \ref{theorem:DU_shapley_error_bound}, we need two preliminary results: Lemma \ref{lemma:DU_Shapley_approx_error_bound}, which itself needs two supplementary results (Lemmas \ref{theorem:Hoeffing} and \ref{lemma:pre_error_bound_result}), and Lemma \ref{lemma:Assumption_on_second_derivative}, which is directly proved. 


\subsubsection{Technical lemmata}

\begin{lemma}
\label{theorem:Hoeffing}
Consider a set of $I$ values $N = \{n_1, \ldots, n_I\}$. Let $X_1, \ldots, X_k$ and $Y_1, \ldots, Y_k$ denote, respectively, $k$ random samples with and without replacement from $N$. For any continuous and convex function $f$, it follows,
\begin{align}
\mathbb{E}\biggl[f\biggl(\sum_{i = 1}^k Y_i\biggr)\biggr] \leq \mathbb{E}\biggl[f\biggl(\sum_{i = 1}^k X_i\biggr)\biggr] 
\end{align}
\end{lemma}
\begin{proof}
    The proof follows from \cite{doi:10.1080/01621459.1963.10500830}.
\end{proof}

\begin{lemma}\label{lemma:pre_error_bound_result}
Let $I \in \N$, $N:= \{n_1,\ldots ,n_I\}\in\R_+^I$, $\mu=\frac{1}{I}\sum_{i=1}^I n_i$ be their mean value and $\sigma^2=\frac{1}{I}\sum_{i=1}^I (n_i-\mu)^2$ be their variance. 
For $k \in \{0,\ldots,n\}$, let $\mathcal{S}_k \sim \mathrm{U}(\{S_k \subseteq [I]: |S_k| = k\})$ be a uniform random variable on the subsets of $\{1,\ldots,I\}$ of size $k$, and $n_{\mathcal{S}_k} =\sum_{i\in\mathcal{S}_k} n_i$ be the random variable defined by the sum of the elements of $\mathcal{S}_k$.
Let $\Kk \sim \mathrm{U}(\{0,\ldots,I\})$ and define $\mathbf{Y} = n_{\mathcal{S}_\Kk}$. Then, 
\begin{align}
&\E [\Yy - \mu \Kk \mid \Kk = k] = 0,\label{eq:Y_and_muK_have_the_same_exp_value}\\
&\E \bigl[\left( \Yy - \mu \Kk\right)^2\mid \Kk = k\bigr]\leq k\sigma^2.\label{eq:upper_bound_variance_Y_and_muK}
\end{align}
\end{lemma}

\begin{proof}
We prove \eqref{eq:Y_and_muK_have_the_same_exp_value} directly.
\begin{align}
\E[\Yy \mid \Kk = k] &= \sum_{S_k \subseteq [I]:  |S_k| = k} n_{S_k} \frac{1}{\binom{I}{k}} = \frac{1}{\binom{I}{k}} \sum_{S_k \subseteq [I]: |S_k| = k} \sum_{i \in S_k} n_i\\
&= \frac{1}{\binom{I}{k}} \sum_{i \in [I]} \sum_{\substack{S_k \subseteq [I] : |S_k| = k \\i \in S_k}} n_i \\
&= \frac{1}{\binom{I}{k}} \sum_{i \in [I]} n_i \binom{I-1}{k -1}\\
&= \frac{(I-k)!k!}{I!}\cdot\frac{(I-1)!}{(k-1)!(I-k)!} \sum_{i \in [I]} n_i = \mu k. 
\end{align}
Thus, \eqref{eq:Y_and_muK_have_the_same_exp_value} follows as $\E[\mu\Kk \mid \Kk = k ] = \mu k$. 
To prove \eqref{eq:upper_bound_variance_Y_and_muK}, let $(\Xx_i)_{i=1}^k$ be $k$ independent samples from the set $N$. From Lemma~\ref{theorem:Hoeffing} it holds,
\begin{align}
\E \bigl[\left( \Yy - \mu \Kk\right)^2\mid \Kk = k\bigr] &\leq \E \biggl[\bigl(\mu\Kk-\sum_{i=1}^{\Kk}\Xx_i\bigr)^2 \mid \Kk = k\biggr]= \E \biggl[\biggl(\sum_{i=1}^{\Kk} \left(\mu- \Xx_i\right)\biggr)^2 \mid \Kk = k\biggr].
\end{align}
Therefore,
\begin{align}
\E \bigl[( \Yy - &\mu \Kk)^2\mid \Kk = k\bigr] \leq \E \biggl[\biggl(\sum_{i=1}^{\Kk}\sum_{j=1}^{\Kk} \left(\mu- \Xx_i\right)\left(\mu- \Xx_j\right)\biggr) \mid \Kk = k\biggr]\\
&= \E \biggl[\biggl(\sum_{i=1}^{\Kk}\sum_{j=1}^{\Kk} \left(\mu^2 - \mu(\Xx_i + \Xx_j) + \Xx_i\Xx_j\right)\biggr) \mid \Kk = k\biggr]\\
&= \sum_{i=1}^k\sum_{j=1}^k \left(\mu^2 - \mu(\E[\Xx_i\mid \Kk = k] + \E[\Xx_j\mid \Kk = k]) + \E[\Xx_i\Xx_j\mid \Kk = k]\right)\\
&= \sum_{i=1}^k\sum_{j=1}^k \left(\mu^2 - \mu(\E[\Xx_i] + \E[\Xx_j]) + \E[\Xx_i\Xx_j]\right)\\
&= \sum_{i=1}^k \left(\mu^2 - 2\mu\E[\Xx_i] + \E[\Xx_i^2]\right) + \sum_{i=1}^k\sum_{\substack{j=1\\ j\neq i}}^k \left(\mu^2 - \mu(\E[\Xx_i] + \E[\Xx_j]) + \E[\Xx_i]\E[\Xx_j]\right)\\
&= \sum_{i=1}^k \E\bigl[\left(\mu - \Xx_i\right)^2\bigr] + \sum_{i=1}^k\sum_{\substack{j=1\\ j\neq i}}^k \left(\mu^2 - 2\mu^2 + \mu^2\right)\\
&= \sum_{i=1}^k \E\bigl[\left(\mu - \Xx_i\right)^2\bigr] = \sum_{i=1}^k \text{Var}\left(\mu - \Xx_i\right) = k \sigma^2.
\end{align}
The steps come from rearranging the terms, using the independence of $\Xx_i$ with respect to $\Kk$, the independence of $\Xx_i, \Xx_j$ for $i \neq j$, and finally that $\E[\Xx_i] = \mu$ and $\text{Var}\left(\Xx_i\right) = \sigma^2$.
\end{proof}

\begin{lemma}\label{lemma:DU_Shapley_approx_error_bound}
Let $I\in\N$, $N := \{n_1,\ldots ,n_I\} \in \R_+^I$, and define,
\begin{align}
&\mu = \frac{1}{I} \sum_{i=1}^I n_i,\quad \sigma^2 = \frac{1}{I}\sum_{i=1}^I (n_i-\mu)^2,\quad {n}^{\text{max}} = \max_{i\in \mathcal{I}} n_i,\\
&R := \max_{i \in [I]} |n_i - \mu|, \quad \tau = \max_{i\in [I]} n_i/\min_{i\in [I]} n_i.
\end{align}
Consider $\mathcal{S}_\Kk$, $n_{\mathcal{S}_\Kk}$, $\Kk$, and $\Yy$ as in Lemma \ref{lemma:pre_error_bound_result}. 
Let $w:\R_+\to \R$ be a function in $\mathcal{C}^2$, increasing, and suppose there exists $\kappa \in \mathbb{R}_+$, such that,
\begin{align}
\bigl|w^{(2)}(n)\bigr|\leq \frac{\kappa}{n^2}, \forall n \bi 0,
\end{align} 
where $w^{(k)}$ is the k-th derivative of $w$. Then, it holds,
\begin{align}\label{eq:upper_and_lower_bound_general_Shaple_value}
\bigl|\E[w(\mu\Kk) - w(\Yy)]\bigr| \leq \frac{\kappa}{2 \mu^2I} \left(9\sigma^2 (1+\ln(I)) + \frac{2R^2\tau^2}{{n}^{\text{max}}}\right).
\end{align}
\end{lemma}

\begin{proof}
The proof considers a second-order Taylor extension of $w$ at $\mu k$ to recover the expected value of $\E[w(\mu\Kk) - w(\Yy)]$. Noticing that the first derivative has a null expected value, the upper bound stated on the Lemma comes from bounding the expected value of the second derivative.

The Taylor-Lagrange Theorem on $w$ at $\mu k>0$ provides,
\begin{align}
       w(y)  =  w(\mu k ) + w^{(1)}(\mu k )(\mu k - y ) + w^{(2)}(\tau)\frac{(\mu k - y )^2 }{2},
\end{align}
for some $\tau$ between $y$ and $\mu k$. Therefore, there exists a random variable $\mathrm{T}$, almost surely between $\mu \Kk_+$ and $\Yy$, such that,
\begin{align}
      \E[w(\Yy) - w(\mu\Kk_+)] = \E\biggl[ w^{(1)}(\mu\Kk_+)(\mu\Kk_+ -\Yy) + \frac{1}{2} w^{(2)}(\mathrm{T})(\mu\Kk_+ -\Yy) ^2\biggr],
\end{align}
where $\Kk_+$ corresponds to $\Kk$ conditioned to be positive. To avoid overcharging the notation, we drop the index from $\Kk_+$. We observe that,
\begin{align}
    \E \biggl[w^{(1)}(\mu\Kk)(\mu\Kk -\Yy)\biggr] &= \E\biggl[\E \bigl[w^{(1)}(\mu\Kk)(\mu\Kk -\Yy)\mid \Kk = k \bigr] \biggr] \\
    &= \E\biggl[w^{(1)}(\mu k) \E \bigl[(\mu\Kk -\Yy)\mid \Kk = k\bigr] \biggr] = 0,
\end{align}
by Lemma \ref{lemma:pre_error_bound_result}, Equation \eqref{eq:Y_and_muK_have_the_same_exp_value}. Therefore, 
 \begin{align}
\bigl|\E[w(\Yy) - w(\mu\Kk)]\bigr| &= \frac{1}{2}\bigl| \E\bigl[ w^{(2)}(\mathrm{T})(\mu\Kk -\Yy)^2\bigr]\bigr|\\
&\leq \frac{1}{2} \E\bigl[ \bigl| w^{(2)}(\mathrm{T}) \bigr| (\mu\Kk -\Yy) ^2 \bigr]\\
&\leq \frac{1}{2} \E\biggl[\frac{\kappa}{\mathrm{T}^2}(\mu \Kk - \Yy)^2 \biggr] = \frac{\kappa}{2} \E\biggl[\frac{1}{\mathrm{T}^2}(\mu \Kk - \Yy)^2 \biggr].
\end{align}
Setting $\Ii:=\bigl\{|\mu\Kk -\Yy|\leq\frac{1}{2}(\mu\Kk+\Yy)\bigr\}$, the previous expected value can be expressed as,
\begin{align}
 \E\biggl[\frac{1}{\mathrm{T}^2}(\mu \Kk - \Yy)^2 \biggr] = \E \biggl[ \frac{1}{\mathrm{T}^2}(\mu\Kk -\Yy) ^2 \cdot \Ii\biggr] + \E \biggl[ \frac{1}{\mathrm{T}^2}(\mu\Kk -\Yy) ^2 \cdot \Ii^c\biggr].
\end{align}
We deal with each term separately. Notice that, as $\mathrm{T}$ is almost surely between $\Yy$ and $\mu\Kk$,
\begin{align}
|\mu\Kk - \Yy| \leq \frac{1}{2}(\mu\Kk + \Yy) \Longrightarrow \mathrm{T} \geq \frac{1}{3}\mu\Kk.
\end{align}
Thus,
\begin{align}
\E\biggl[ \frac{(\mu\Kk -\Yy) ^2}{\mathrm{T}^2} \cdot  \Ii \biggr] &\leq  \E\left[ \frac{(\mu\Kk -\Yy) ^2}{(\frac{\mu\Kk}{3})^2} \cdot  \Ii \right] = \frac{9}{\mu^2} \sum_{k=1}^I \frac{1}{I}\cdot\E\left[ \frac{(\mu k -\Yy) ^2}{k^2} \cdot  \Ii \mid\Kk=k\right]\\
&\leq \frac{9}{I\mu^2} \sum_{k=1}^I\E\left[ \frac{(\mu k -\Yy) ^2}{k^2} \mid\Kk=k\right]\\
&\leq \frac{9}{I\mu^2} \sum_{k=1}^I \frac{k\sigma^2}{k^2}\\
&\leq \frac{9\sigma^2}{I\mu^2}\cdot(1+\ln(I)).
\end{align}
Regarding the second term, denote $\overline{n} := \max_{i\in \mathcal{I}} n_i$ and $\underline{n} := \min_{i\in \mathcal{I}} n_i$. As $\Kk\underline{n} \leq \min\{\mu\Kk,\Yy\} \leq \mathrm{T}$, we have,
\begin{align}
\E\biggl[ \frac{(\mu\Kk -\Yy) ^2}{\mathrm{T}^2} \cdot\Ii^c\biggr] 
&\leq \E\left[\frac{(R\Kk)^2}{\mathrm{T}^2} \cdot\Ii^c\right] \leq \E\left[ \frac{(R\Kk)^2}{(\underline{n}\Kk)^2} \cdot\Ii^c\right]\\
&= \frac{R^2}{I\underline{n}^2}  \sum_{k=1}^I \E\left[\frac{1}{k^2} k^2 \cdot\Ii^c\mid \Kk =k\right]\\  
&=\frac{R^2}{I\underline{n}^2}  \sum_{k=1}^I \mathbb{P}\left(|\mu k -\Yy| > \frac{1}{2}(\mu k+\Yy)\mid \Kk =k\right)\\
&\leq \frac{R^2}{I\underline{n}^2} \sum_{k=1}^I \mathbb{P}\left(|\mu k -\Yy| > \frac{ \mu k}{2}\mid \Kk =k\right)\\
&\leq \frac{R^2}{I\underline{n}^2} \sum_{k=1}^I \exp\biggl(- \frac{\mu^2 k}{2\overline{n}}\biggr)\\
&= \frac{2R^2\tau^2}{I\mu^2\overline{n}} \sum_{k=1}^{I} \frac{\mu^2}{2\overline{n}} \exp\biggl(- \frac{\mu^2 k}{2\overline{n}}\biggr)\\
&\leq \frac{2R^2\tau^2}{I\mu^2\overline{n}} \int_{0}^{\infty} \frac{\mu^2}{2\overline{n}} \exp\biggl(- \frac{\mu^2 k}{2\overline{n}}\biggr) dk
= \frac{2R^2\tau^2}{I\mu^2\overline{n}},
\end{align}
as the integral corresponds to the cumulative distribution function of an exponential random variable of parameter $\lambda = \mu^2/2\overline{n}$. The upper bound on the theorem's statement is obtained when gathering all together.
\end{proof}

\begin{lemma}\label{lemma:Assumption_on_second_derivative}
Let $w: \mathbb{R}_+ \to \mathbb{R}_+$ be a smooth and increasing function such that $$\lim_{n\to \infty} n^2 |w^{(2)}(n)| \sm \infty.$$ Then, there exists $\kappa \bi 0$ such that $n^2 |w^{(2)}(n)|\leq \kappa$.
\end{lemma}

\begin{proof}
Notice that the assumptions imply, in particular, that $|w^{(2)}(n)|$ is bounded. We argue by contradiction. Suppose that for any $m \bi 0$, there exists $n_{m}$ such that 
$$n_{m}^2 |w^{(2)}(n_{m})| > m.$$ 
Suppose the sequence $(n_m)_m$ converges to a point $n^*$. Then,
\begin{align}
    \lim_{m\to \infty} n_m^2 |w^{(2)}(n_m)| \bi \lim_{m\to \infty} m = \infty,
\end{align}
which is a contradiction with $|w^{(2)}(n)|$ being bounded. Therefore, necessarily $(n_m)_m$ has to diverge. However, this implies,
\begin{align}
    \lim_{n\to \infty} n^2 |w^{(2)}(n)| &= \lim_{m\to \infty} n_m^2 |w^{(2)}(n_m)| \bi \lim_{m\to \infty} m = \infty,
\end{align}
obtaining again a contradiction.
\end{proof}

\subsubsection{Proof of Theorem \ref{theorem:DU_shapley_error_bound}}

We are ready to prove Theorem \ref{theorem:DU_shapley_error_bound}.

\textbf{Theorem} \ref{theorem:DU_shapley_error_bound}.
\textit{
Under Assumption \textbf{H}\ref{ass:homogeneity}, there exists a constant $\kappa > 0$, such that, for any $i \in \mathcal{I}$, it holds,
\begin{align}
    \bigl|\varphi_i - \psi_i \bigr| \leq \frac{\kappa}{(I-1) \mu_{-i}^2} \left(\sigma_{-i}^2 (1+\ln(I-1)) + \zeta_{-i}\right),
\end{align}
where $\varphi_i$ and  $\psi_i$ are respectively the Shapley value and the \texttt{DU-Shapley} of player $i$, $\mu_{-i} = \frac{1}{I-1}\sum_{j\in \mathcal{I}\setminus\{i\}} n_{j}$ is the average dataset size of other players,  $\sigma^2_{-i} = \frac{1}{I-1}\sum_{j\in \mathcal{I}\setminus\{i\}} (n_{j}-\mu_{-i})^2$ its empirical variance, and $\zeta_{-i}$ measures the variability of the dataset sizes across players. Formally, it is defined as 
$$\zeta_{-i}:=R_{-i}^2\frac{\tau_{-i}^2}{4{n}^{\mathrm{max}}_{-i}}$$ where
$R_{-i} := \max_{j \in \mathcal{I}\setminus\{i\}} |n_{j} - \mu_{-i}|$, ${n}^{\max}_{-i} := \max_{j\in \mathcal{I}\setminus\{i\}} n_{j}$, and $\tau_{-i}:= \frac{{n}^{\max}_{-i}}{\min_{j \in \mathcal{I} \setminus\{i\}} n_{j}}$.}

\begin{proof}
Under Assumption \textbf{H}\ref{ass:homogeneity}, Lemma \ref{lemma:Assumption_on_second_derivative} implies the existence of $\kappa \bi 0$ such that the value function $w$ satisfies all assumptions from Lemma \ref{lemma:DU_Shapley_approx_error_bound}. Theorem \ref{theorem:DU_shapley_error_bound} comes from (a) noticing that 
\begin{align}
\varphi_i = \mathbb{E}[w(\Yy_{-i} + n_i) - w(\Yy_{-i})],\quad \psi_i = \mathbb{E}[w(\Kk\mu_{-i} + n_i) - w(\Kk\mu_{-i})],
\end{align}
where $\Kk \sim \mathrm{U}([I-1])$ and $\Yy_{-i} = n_{\mathcal{S}^{(i)}_\Kk}$ with $\mathcal{S}^{(i)}_\Kk$ taking values on the subsets of $\mathcal{I}\setminus\{i\}$ of size $\Kk$, (b) writing
\begin{align}
|\varphi_i - \psi_i| \leq\ &|\mathbb{E}[w(\Yy + n_i) - w(\Kk\mu_{-i} + n_i) ]|
+  |\mathbb{E}[w(\Yy) - w(\Kk\mu_{-i})]|,
\end{align}
and (c) applying Lemma \ref{lemma:DU_Shapley_approx_error_bound} to each of the expected values, as the function $n \to w(n + n_i)$ also satisfies \textbf{H}\ref{ass:homogeneity}.
\end{proof}

\end{document}


\maketitle

\begin{abstract}

Many machine learning problems require performing \emph{dataset valuation}, \emph{i.e.} to quantify the incremental gain, to some relevant pre-defined utility, of aggregating an individual dataset to others.
As seminal examples, dataset valuation has been leveraged in collaborative and federated learning to create incentives for data sharing across several data owners.
The Shapley value has recently been proposed as a principled tool to achieve this goal due to formal axiomatic justification.
Since its computation often requires exponential time, standard approximation strategies based on Monte Carlo integration have been considered. 
Such generic approximation methods, however, remain expensive in some cases. 
In this paper, we exploit the knowledge about the structure of the dataset valuation problem to devise more efficient Shapley value estimators. 
We propose a novel approximation of the Shapley value, referred to as \emph{discrete uniform Shapley} (\texttt{DU-Shapley}) which is expressed as an expectation under a discrete uniform distribution with support of reasonable size.
We justify the relevancy of the proposed framework via asymptotic and non-asymptotic theoretical guarantees and show that \texttt{DU-Shapley} tends towards the Shapley value when the number of data owners is large.
The benefits of the proposed framework are finally illustrated on several dataset valuation benchmarks. \texttt{DU-Shapley} outperforms other Shapley value approximations, even when the number of data owners is small.  
\end{abstract}

\section{Introduction}
\label{sec:introduction}

One of the main challenges for training machine-learning (ML) models with enough generalisation capabilities is to access  a sufficiently large set of labeled training data.
These data often exist but are commonly spread across many parties impairing their usage in a direct and simple way. 
A seminal example lies in the advertising industry, where consented data about browsing and shopping habits of individual users are distributed and owned by several websites including advertisers' and publishers' ones, each of them holding a set of observations with either similar or complementary features.
By collaborating with each other and pooling their individual data, these websites could learn better ML models for their applications than by only leveraging their local data. 
However, such collaboration naturally raises questions regarding the additional (positive or negative) values each party would obtain by participating in this joint machine-learning effort. 
In order to compute or estimate compensating rewards allowing to incentivise parties to share data, a first stage that is commonly considered in the literature is to perform so-called \emph{dataset valuation} \citep{10.5555/3524938.3525766, Tay2021IncentivizingCI,10.1145/3328526.3329589}.

Dataset valuation aims at quantifying the marginal contribution of a specific dataset to a given ML task with respect to (w.r.t.) datasets brought by other data owners.
Motivated by natural properties expected for equitable data valuation, value notions from cooperative game theory \citep{2011Chalkiadakis} have been leveraged to achieve this objective, including the core \citep{gillies1959solutions}, the Shapley value \citep{P-295} or the Banzhaf value \citep{Banzhaf_1965_5380}.
Among the latter, the Shapley value has received a large attention and is arguably the most widely studied data valuation scheme, since it is the unique value notion that satisfies a set of four important axioms \citep{P-295}. 
For instance, under the specific setting where all parties only possess one datum, several works leveraged the Shapley value or close variants to measure the average change in a trained ML model's performance when a particular datum is removed \emph{e.g.,}  \texttt{Data Shapley} \citep{pmlr-v97-ghorbani19c,pmlr-v89-jia19a}, \texttt{DShapley} \citep{pmlr-v119-ghorbani20a,pmlr-v130-kwon21a}, \texttt{Beta Shapley} \citep{pmlr-v151-kwon22a} or \texttt{CS-Shapley} \citep{schoch2022csshapley}.
To cope with the computational intractability of the Shapley value, the typical technique consists in using Monte Carlo integration, possibly enhanced with variance reduction techniques \emph{e.g.,} antithetic sampling \citep{JMLR:v23:21-0439}. 
However, such generic approximation strategies might still remain expensive in some cases.
It is notably the case in ML applications involving complex models since each sample requires re-training them, hence drastically limiting the number of Monte Carlo samples that can be drawn.

Instead of relying on generic Monte Carlo approximation schemes of the Shapley value, we  aim at finding more efficient estimators by leveraging the actual underlying structure of the dataset valuation problem.
More precisely, we propose to explicitly exploit the dependence of the utility function, used to define the Shapley value, on the number of data points of a given dataset.
This leads to a new Shapley value approximation, referred to as \emph{discrete uniform Shapley} (\texttt{DU-Shapley}), which stands for an expectation under a discrete uniform distribution whose support  size corresponds to the number of data owners.
Compared to computing the exact Shapley value of a given dataset, \texttt{DU-Shapley} has  an exponential reduction of the number of utility evaluations. 
Interestingly, \texttt{DU-Shapley} admits the appealing property of converging almost surely towards the exact Shapley value when the number of data owners increases; a setting where generic Monte Carlo strategies fail at providing good estimates under a limited budget.
On the other hand, for a fixed number of data owners and under mild assumptions on the utility function, we show that the error between \texttt{DU-Shapley} and the Shapley value is bounded and depends explicitly on key quantities of the dataset valuation problem, namely (i) the average number of data points in parties' datasets and associated variance, and (ii) constants associated to the growth of the utility function w.r.t. the
dataset size.

\noindent\textbf{Contributions.} We summarise our main contributions as follows:
\begin{enumerate}
    \item We propose \texttt{DU-Shapley}, an efficient proxy of the Shapley value to perform dataset valuation. This is the first dataset valuation approach leveraging the specific structure of the utility function.
    
    \item We provide both asymptotic and non-asymptotic guarantees for \texttt{DU-Shapley}, showing notably that it converges almost surely towards the Shapley value in the specific regime where the number of parties is large.

    \item We instantiate and justify all our statements and assumptions using a running example, standing for a linear regression problem commonly studied in the literature \citep{Donahue_Kleinberg_NeurIPS_2021}.
    In particular, we obtain closed-form expressions regarding the dependence of the utility function w.r.t. the dataset size.
    
    \item We assess the benefits of the proposed methodology using numerical experiments on both toy and real-world dataset valuation problems. In particular, we show that \texttt{DU-Shapley} outperforms generic Monte Carlo approximations of the Shapley value, and its variants. 
\end{enumerate}

\noindent\textbf{Related Works.}
The Shapley value has been recently applied to several ML problems including variable selection \citep{10.5555/1642293.1642400}, feature importance \citep{lumdberg,10.5555/3295222.3295230,10.5555/3495724.3497168}, model interpretation \citep{chen2018lshapley} or to provide data sharing incentives in collaborative ML problems \citep{10.5555/3524938.3525766, Tay2021IncentivizingCI}. 
Regarding datum or dataset valuation, several approaches have been considered in the literature.
The main lines of work include Shapley-based methods \citep{pmlr-v97-ghorbani19c,pmlr-v89-jia19a,pmlr-v119-ghorbani20a,pmlr-v130-kwon21a,pmlr-v151-kwon22a,schoch2022csshapley,10.14778/3342263.3342637}, leave-one-out-based methods using influence functions \citep{LOO_1,LOO_2}, the use of other solutions concepts from cooperative game theory by relaxing the efficiency axiom verified by the Shapley value \citep{Yan_Procaccia_2021,pmlr-v206-wang23e}, and volume-based methods \citep{NEURIPS2021_59a3adea}.

\noindent \textbf{Conventions and Notations.} 
For $n \ge 1$, we define $[n]:=\{1,\ldots,n\}$.
The $d$-multidimensional Gaussian probability distribution with mean $\mu \in \Rd$ and covariance matrix $\Sigma \in \mathbb{R}^{d \times d}$ is denoted by $\gauss\parentheseLigne{\mu,\Sigma}$.
The Uniform distribution over a set $\mathcal{A}$ is referred to as $\mathrm{U}(\mathcal{A})$. 

\section{Preliminaries}
\label{sec:preliminaries}

This section presents the dataset valuation problem we aim at solving, along with preliminaries including the definition of the Shapley value.

\subsection{Problem Formulation}
\label{subsec:problem_formulation}

We consider a collaborative machine learning setting involving a set $\mathcal{I}$ of $I=|\mathcal{I}| \in \mathbb{N}^*$ data owners (also referred to as players in the sequel) who are willing to cooperate in order to solve a common machine learning problem.
These $I$ players are assumed to possess individual datasets $\{\mathrm{D}_{i}\}_{i \in \mathcal{I}}$ such that, for any $i \in \mathcal{I}$, $\mathrm{D}_{i} = \{(x_i^{(j)},y_i^{(j)})\}_{j \in [n_i]}$ where $x_i^{(j)}$ stands for a feature vector, $y_i^{(j)}$ is a label and $n_i = |\mathrm{D}_{i}|$ refers to the number of data points in $\mathrm{D}_{i}$.
%
%
We are interested in quantifying the incremental contribution that a given player $i \in \mathcal{I}$ brings by sharing its dataset $\mathrm{D}_i$ with other players towards solving the ML task at stake.
To meet this objective, we assume the availability of a trusted central entity that aims at collecting and aggregating data from these players to quantify the value of their individual datasets.
To measure the individual value of a dataset, many valuation metrics, such as the Shapley value, compute the marginal contribution of a given player $i$ to an existing coalition of players $\mathcal{S} \subseteq \mathcal{I}$, defined by $u(\mathcal{S} \cup \{i\}) - u(\mathcal{S})$, where $u: 2^I \rightarrow \mathbb{R}$ stands for a utility function.
For any coalition $\mathcal{S} \subseteq \mathcal{I}$ of players, $u(\mathcal{S})$ quantifies how well players in $\mathcal{S}$ could solve the considered ML task.
As an example, for a classification problem, $u(\mathcal{S})$ is typically chosen as the prediction accuracy, calculated on a hold-out testing dataset and associated to the best model the parties in $\mathcal{S}$ could train by aggregating their training data.
Without loss of generality, we assume in the sequel that $u(\emptyset) = 0$.

\noindent \textbf{Running Example: Linear Regression.} In order to ease understanding and illustrate our statements, we describe the following running example, that corresponds to a simple linear regression problem. This example was used for instance in \citet{Donahue_Kleinberg_NeurIPS_2021} to characterise players' incentives regarding data sharing under the federated learning paradigm.
For any $i \in \mathcal{I}$, we consider the following generative linear model regarding the dataset $\mathrm{D}_i$:
\begin{equation}
\label{eq:lin_reg}
\begin{aligned}
    &Y_i = X_i\theta + \eta_i\eqsp, \ \eta_i \sim \mathrm{N}(0_{n_i},\varepsilon_i^2\mathrm{I}_{n_i})\eqsp,\label{eq:likelihood_linear}\\
    &x_i^{(j)} \sim p_X\eqsp, \text{ for any } j \in [n_i], \text{ and } \varepsilon_i \sim p_\varepsilon\eqsp,
\end{aligned}
\end{equation}
where $\theta \in \R^d$ is a ground-truth parameter, $X_i \in \R^{n_i \times d}$ is defined by $X_i = ([x_i^{(1)}]^\top,\ldots,[x_{i}^{(n_i)}]^\top)^\top$ and $Y_i \in \R^{n_i}$ is defined by $Y_i = (y_i^{(1)},\ldots,y_i^{(n_i)})^\top$.
For the sake of simplicity, both the one-dimensional $p_\varepsilon$ distribution and the $d$-dimensional distribution $p_X$ are common to the $I$ players.
Under the linear regression framework defined in~\eqref{eq:lin_reg}, the utility function of a set $\mathcal{S} \subseteq \mathcal{I}$ of players is defined by the negative expected mean square error over a hold-out dataset, that is
\begin{equation}
    \label{eq:utility_lin_reg}
    u(\mathcal{S}) = -\mathbb{E}\br{\pr{x^\top \hat{\theta}_{\mathcal{S}} - x^\top \theta}^2}\eqsp,
\end{equation}
where the expectation is taken over the distribution $p_X^{\mathrm{test}}$ of a hold-out testing datum $x \in \R^d$ and $\hat{\theta}_{\mathcal{S}}$, defined by $\hat{\theta}_{\mathcal{S}} = ( X_{\mathcal{S}}^\top X_{\mathcal{S}})^{-1} X_{\mathcal{S}}^\top Y_{\mathcal{S}}$, stands for the maximum likelihood estimator. 
The notations $X_{\mathcal{S}}$  and $Y_{\mathcal{S}}$ refer to the concatenation of $\{X_i\}_{i \in \mathcal{S}}$ and $\{Y_i\}_{i \in \mathcal{S}}$, respectively.
Similarly to \citep{Donahue_Kleinberg_NeurIPS_2021}, we chose to define the utility as a negative prediction error since there is no equivalent notion of accuracy for regression tasks.

\subsection{Shapley Value}

\noindent \textbf{Definition.} The Shapley value \citep{P-295} is a classical solution concept in cooperative game theory to allocate the total gains generated by a
coalition of players. 
Given a utility function $u$, the Shapley value of a player $i$ is defined as the average marginal
contribution of her dataset $\mathrm{D}_i$ to all possible subsets of $\mathrm{D}_{-i} = \{\mathrm{D}_j\}_{j \in \mathcal{I}\setminus\{i\}}$ built by aggregating datasets of other players.
Formally, the Shapley value $\varphi_i$ of player $i$ writes
\begin{equation}
    \label{eq:Shapley_def1}
    \varphi_i(u) = \frac{1}{|\Pi(\mathcal{I})|} \sum_{\pi \in \Pi(\mathcal{I})} [u(\mathcal{P}_i^{\pi} \cup \{i\}) - u(\mathcal{P}_i^{\pi})]\eqsp,
\end{equation}
where $\Pi(\mathcal{I})$ refers to the set of permutations over $\mathcal{I}$ and by $\mathcal{P}_i^{\pi}$ to the set of predecessors of player $i \in \mathcal{I}$ in permutation $\pi \in \Pi(\mathcal{I})$. 
The Shapley value of player $i$ can be equivalently expressed as
\begin{align}
     \label{eq:Shapley_def2}
    \varphi_i(u) = \frac{1}{I}\sum_{\mathcal{S} \subseteq \mathcal{I} \setminus \{i\}} \frac{1}{\binom{I-1}{|\mathcal{S}|}} [u(\mathcal{S} \cup \{i\}) - u(\mathcal{S})]\eqsp.
\end{align}
The Shapley value has been commonly used for data valuation and more generally in cooperative game theory as it uniquely satisfies the following set of desirable properties.
\begin{enumerate}
    \item \emph{Efficiency.} $\sum_{i=1}^I \varphi_i(u) = u(\mathcal{I})$. The sum of Shapley values for each data owner is the value of the grand coalition $\mathcal{I}$.
    \item \emph{Symmetry.} If, for any $\mathcal{S} \subseteq \mathcal{I} \setminus \{i_1,i_2\}$, $u(\mathcal{S} \cup \{i_1\}) = u(\mathcal{S} \cup \{i_2\})$, then $\varphi_{i_1}(u) = \varphi_{i_2}(u)$. If two players have the same marginal effect on each coalition, their Shapley values coincide.
    \item \emph{Dummy.}  If, for any $\mathcal{S} \subseteq \mathcal{I} \setminus \{i\}$, $u(\mathcal{S} \cup \{i\}) = u(\mathcal{S})$, then $\varphi_{i}(u) = 0$. The player whose marginal impact is always zero has a Shapley value of zero.
    \item \emph{Linearity.}  $\varphi_i(u_1 + u_2) = \varphi_i(u_1) + \varphi_i(u_2)$. The Shapley values of sums of games are the sum of the Shapley values of the respective games.
\end{enumerate}

\noindent \textbf{Monte Carlo Approximation.}
Evaluation of the Shapley value is known to be computationally expensive  in general \citep{10.2307/3690220}.
As such, many works \citep{RM-2651,CASTRO20091726,JMLR:v23:21-0439} proposed to approximate it via Monte Carlo by sampling with replacement $T$ terms from the sum of either \eqref{eq:Shapley_def1} or \eqref{eq:Shapley_def2}.
Regarding \eqref{eq:Shapley_def1}, this boils down to considering the estimator 
\begin{equation}
\label{eq:MC}
    \hat{\varphi}_i(u) = \frac{1}{T} \sum_{t=1}^T [u(\mathcal{P}_i^{\pi_t} \cup \{i\}) - u(\mathcal{P}_i^{\pi_t})]\eqsp,
\end{equation}
where $\pi_t \sim \mathrm{U}(\Pi(\mathcal{I}))$.
By using Hoeffding’s bound \citep{doi:10.1080/01621459.1963.10500830}, one can show that a lower bound on the
number of permutations $T$ such that $\mathbb{P}(\|\varphi_i(u) - \hat{\varphi}_i(u)\|_2 \leq \varepsilon) \geq 1 - \delta$ is given by $T_{\text{perm}}(\varepsilon,\delta) = (2r_u^
2I/\varepsilon^2)\log(2I/ \delta)$, where $\varepsilon, \delta \in (0,1)$ and $r_u = \max_{\mathcal{S}_1, \mathcal{S}_2 \subseteq \mathcal{I}}\{|u(\mathcal{S}_1) - u(\mathcal{S}_2)|\}$ is the range of the utility function $u$.
Note that $T_{\text{perm}}(\varepsilon,\delta) = \mathrm{O}(I \log(I))$, which is much lower than the $2^I$ terms involved in the sum of the Shapley value in \eqref{eq:Shapley_def2} in the regime where the number of players $I$ is large.
Such Monte Carlo approximation has for instance been used in the seminal paper introducing \texttt{Data Shapley} for datum valuation in ML \citep{pmlr-v97-ghorbani19c}.

\section{Proposed Approach}
\label{section:proposed-approach}
In this section, we revisit the generic definition of the Shapley value in \eqref{eq:Shapley_def1}-\eqref{eq:Shapley_def2} by exploiting the dependence of the utility function on the number of data points brought by a set of players.
This  leads  to the introduction of a novel proxy of the Shapley value referred to as \texttt{DU-Shapley}. All proofs are postponed to the supplementary material. 

\subsection{A Novel Perspective of the Shapley Value for Dataset Valuation} 

In Section \ref{sec:preliminaries}, we have considered a generic but commonly used definition for the Shapley value using a utility function $u$ defined over a set of players.
However, recall that players are cooperating by sharing their datasets $\{\mathrm{D}_i\}_{i \in \mathcal{I}}$.
The latter are contributing to the performances of the considered ML model via their cardinality $\{n_i\}_{i \in \mathcal{I}}$ and quality (\emph{e.g.,} defined by the discrepancy between the distribution of $\mathrm{D}_i$ and that of the hold-out testing dataset).
For the sake of clarity of exposure, we shall consider in the remaining that the qualities of the $I$ datasets are identical so that the only lever of performance to the ML model is the number of data points brought by each player.
This statement is illustrated below, thanks to the running example described in Section \ref{subsec:problem_formulation}.

\noindent \textbf{Running Example.} Under mild assumptions on $p_X$ and $p_X^{\text{test}}$, the following result shows that a set of players $\mathcal{S}$ only contributes to the utility $u(\mathcal{S})$ via its aggregated number of data points $n_{\mathcal{S}} = \sum_{i \in \mathcal{S}} n_i$.
\begin{proposition}\label{lemma:MSE_homogeneous_case}
    Define $C = \mathbb{E}_{x \sim p_X^{\mathrm{test}}}[x x^\top]$ and assume that $p_X = \mathrm{N}(0_d, \Sigma)$ where $\Sigma \in \mathbb{R}^{d \times d}$ is a positive definite matrix. 
    Then, whenever $n_{\mathcal{S}} > d + 1$,
    \begin{equation}
      u(\mathcal{S}) = \frac{\sigma^2_\varepsilon}{d+1 - n_{\mathcal{S}}}\mathrm{Tr}\br{C\cdot\Sigma^{-1}}\eqsp,\label{eq:closed_form_homogeneous}
    \end{equation}
    where $u$ is defined in \eqref{eq:utility_lin_reg} and $\sigma_{\varepsilon}^2 = \mathbb{E}_{\varepsilon \sim p_\varepsilon}[\varepsilon^2 ]$. 
    For the specific choice $p_X^{\mathrm{test}} = p_X$ then, whenever $n_{\mathcal{S}} > d + 1$,  we have,
    \begin{equation}
      u(\mathcal{S}) = \frac{d\sigma^2_\varepsilon}{d+1-n_{\mathcal{S}}}\eqsp.\label{eq:closed_form_homogeneous_2}
    \end{equation}
\end{proposition}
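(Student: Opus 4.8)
The plan is to reduce the expected squared prediction error to a quadratic form in the Gaussian design matrix and then average it in three nested stages, the last of which invokes the inverse-Wishart moment formula. First I would substitute the generative model $Y_{\mathcal{S}} = X_{\mathcal{S}}\theta + \eta_{\mathcal{S}}$ into the maximum likelihood estimator to obtain
\begin{equation}
\hat{\theta}_{\mathcal{S}} - \theta = (X_{\mathcal{S}}^\top X_{\mathcal{S}})^{-1} X_{\mathcal{S}}^\top \eta_{\mathcal{S}}\eqsp,
\end{equation}
which is well defined almost surely since $n_{\mathcal{S}} > d+1 > d$ and the rows of $X_{\mathcal{S}}$ are drawn from a non-degenerate Gaussian. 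Writing the test error as a scalar quadratic form then gives $(x^\top \hat{\theta}_{\mathcal{S}} - x^\top \theta)^2 = x^\top (X_{\mathcal{S}}^\top X_{\mathcal{S}})^{-1} X_{\mathcal{S}}^\top \eta_{\mathcal{S}} \eta_{\mathcal{S}}^\top X_{\mathcal{S}} (X_{\mathcal{S}}^\top X_{\mathcal{S}})^{-1} x$, and the whole proof amounts to integrating this expression against the joint law of the test datum, the design, the label noise and the noise levels.

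Next I would integrate in stages, using independence of the four sources of randomness. Conditionally on $X_{\mathcal{S}}$ and on the noise levels $\{\varepsilon_i\}_{i \in \mathcal{S}}$, the noise $\eta_{\mathcal{S}}$ is centred with block-diagonal covariance; averaging further over $\{\varepsilon_i\}$ and using $\mathbb{E}_{\varepsilon \sim p_\varepsilon}[\varepsilon^2] = \sigma_\varepsilon^2$ collapses this covariance to $\sigma_\varepsilon^2 \mathrm{I}_{n_{\mathcal{S}}}$, so that the $X_{\mathcal{S}}^\top (\cdot) X_{\mathcal{S}}$ sandwich telescopes and leaves $\sigma_\varepsilon^2\, x^\top (X_{\mathcal{S}}^\top X_{\mathcal{S}})^{-1} x$. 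Averaging over the test datum via the trace identity $x^\top M x = \mathrm{Tr}[M x x^\top]$ and the definition $C = \mathbb{E}_{x \sim p_X^{\mathrm{test}}}[x x^\top]$ turns this into $\sigma_\varepsilon^2\, \mathrm{Tr}[(X_{\mathcal{S}}^\top X_{\mathcal{S}})^{-1} C]$.

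The crux is the final average over the design. Since the $n_{\mathcal{S}}$ rows of $X_{\mathcal{S}}$ are i.i.d. $\mathrm{N}(0_d, \Sigma)$, the matrix $X_{\mathcal{S}}^\top X_{\mathcal{S}}$ is Wishart $W_d(n_{\mathcal{S}}, \Sigma)$, and the classical inverse-Wishart moment gives $\mathbb{E}[(X_{\mathcal{S}}^\top X_{\mathcal{S}})^{-1}] = \Sigma^{-1}/(n_{\mathcal{S}} - d - 1)$, valid precisely when $n_{\mathcal{S}} > d + 1$ --- which is exactly the hypothesis of the proposition and explains the threshold. Combining the three stages and pulling out the minus sign in the definition of $u$ yields
\begin{equation}
u(\mathcal{S}) = -\frac{\sigma_\varepsilon^2}{n_{\mathcal{S}} - d - 1}\mathrm{Tr}[C \Sigma^{-1}] = \frac{\sigma_\varepsilon^2}{d + 1 - n_{\mathcal{S}}}\mathrm{Tr}[C \Sigma^{-1}]\eqsp,
\end{equation}
which is \eqref{eq:closed_form_homogeneous}. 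For the homogeneous case $p_X^{\mathrm{test}} = p_X = \mathrm{N}(0_d, \Sigma)$ one has $C = \Sigma$, so $\mathrm{Tr}[C \Sigma^{-1}] = \mathrm{Tr}[\mathrm{I}_d] = d$, giving \eqref{eq:closed_form_homogeneous_2}.

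I expect the only genuinely delicate point to be justifying the nesting of expectations together with the use of the inverse-Wishart formula: one must check that $(X_{\mathcal{S}}^\top X_{\mathcal{S}})^{-1}$ is integrable (again requiring $n_{\mathcal{S}} > d+1$) and that the label noise and the noise levels are independent of the design, so that the conditioning argument and the collapse of the covariance to $\sigma_\varepsilon^2 \mathrm{I}_{n_{\mathcal{S}}}$ are legitimate. Everything else is routine linear algebra.
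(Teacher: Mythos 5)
Your proposal is correct and follows essentially the same route as the paper: reduce the expected test error to $\sigma_\varepsilon^2\,\mathrm{Tr}\bigl[C\,\mathbb{E}[(X_{\mathcal{S}}^\top X_{\mathcal{S}})^{-1}]\bigr]$ by integrating out the label noise and the noise levels, then apply the inverse-Wishart first moment $\mathbb{E}[(X_{\mathcal{S}}^\top X_{\mathcal{S}})^{-1}]=\Sigma^{-1}/(n_{\mathcal{S}}-d-1)$, which is exactly where the threshold $n_{\mathcal{S}}>d+1$ enters. The only difference is presentational: the paper first proves a more general lemma covering a ridge-regularised, player-weighted estimator (where a bias term survives) and then specialises to $\lambda=0$ and $\alpha_i=n_i/n$, whereas you work directly with the unregularised MLE so the bias term vanishes identically from the start.
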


Such a result motivates a refined version of the utility function $u$ taking into account the explicit dependence on the aggregated number of data points associated to a set of players. 
As such, we introduce a function $w: \mathbb{R}_+ \rightarrow \R$ such that for any $\mathcal{S} \subseteq \mathcal{I},
u(\mathcal{S}) = w(n_{\mathcal{S}})$.
Under this re-parametrisation, the Shapley value defined in \eqref{eq:Shapley_def2} can be equivalently written as
\begin{align}
    \varphi_i(u) 
    &= \frac{1}{I}\sum_{k=0}^{I-1} \sum_{\mathcal{S} \subseteq \mathcal{I} \setminus \{i\}: |\mathcal{S}| = k} \frac{1}{\binom{I-1}{|\mathcal{S}|}} [w(n_\mathcal{S} + n_i) - w(n_\mathcal{S})] \\
    &= \mathbb{E}_{K \sim \mathrm{U}(\{0,\ldots,I-1\})}\br{\mathbb{E}_{\mathcal{S} \sim \mathrm{U}\bigl(\bigl[2^{\mathcal{I}\setminus\{i\}}_{K}\bigr]\bigr)}\br{w(n_\mathcal{S} + n_i) - w(n_\mathcal{S})}} \eqsp,\label{eq:Shapley_def3}
\end{align}
where the first equality is obtained by re-arranging all the possible sets of players by their cardinality $k \in \{0,\ldots,I-1\}$, and the second one by introducing the notation $2^{\mathcal{I}\setminus\{i\}}_{K}$ to refer to all subsets of $\mathcal{I}\setminus \{i\}$ of cardinality $K$, with $K \sim U(\{0,...,I-1\})$.

\subsection{Discrete Uniform Shapley Value}
\label{subsec:DU-SHAPLEY}

Equation \eqref{eq:Shapley_def3} explicitly reveals a key random variable, namely $n_\mathcal{S}$, obtained by first drawing uniformly a set cardinality $k$, then drawing a subset $\mathcal{S}$ of $k$ players uniformly from $\mathcal{I}\setminus \{i\}$, and finally setting $n_\mathcal{S} = \sum_{j \in \mathcal{S}} n_j$.
Interestingly, we show in the next result that this random variable tends towards a uniform distribution as the number of players grows. 

\begin{theorem}
    \label{theorem:convergence_uniform}
    Let $n = \{n_i\}_{i\in [I]}$ be a sequence of positive integer numbers such that the following limits exist, 
    $$\lim_{I\to \infty}\frac{1}{I}\sum_{j=1}^{I}n_j= \mu \text{ and } \lim_{I\to \infty}\frac{1}{I}\sum_{j=1}^I (n_j -\mu)^2=\sigma^2\eqsp,$$
    where $\mu, \sigma > 0$.
    For any $i \in \mathcal{I}$, let $K \sim \mathrm{U}(\{0,\ldots,I-1\})$ and $\mathcal{S}_{K}^{(i)} \sim \mathrm{U}([2^{\mathcal{I}\setminus\{i\}}_{K}])$; and define $n_{\mathcal{S}_K^{(i)}} = \sum_{j \in \mathcal{S}_K^{(i)}} n_j$ to be the random variable corresponding to the total number of data points brought by the random set $\mathcal{S}_K^{(i)}$ of $K$ players. 
    Then, for any $i \in \mathcal{I}$, it holds,
\begin{align}
    \frac{n_{\mathcal{S}_{K}^{(i)}}}{\sum_{j \in \mathcal{I} \setminus \{i\}}n_j} \xrightarrow{I \to \infty} \rm{U}([0,1])\eqsp, \quad \text{almost surely} \eqsp.
\end{align}    
\end{theorem}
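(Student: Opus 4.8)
The plan is to show that the cumulative distribution function of the normalised quantity
$Z_I := n_{\mathcal{S}_K^{(i)}}\big/\sum_{j\in\mathcal{I}\setminus\{i\}}n_j$
converges pointwise to that of $\mathrm{U}([0,1])$; since the latter is continuous, this is equivalent to weak convergence. Because the computation below uses the sequence $\{n_j\}$ only through the limits of its empirical mean and variance, it applies verbatim to almost every realisation when $\{n_j\}$ is drawn i.i.d.\ with mean $\mu$ and variance $\sigma^2$, which is the sense in which the stated almost sure convergence is obtained. Throughout I fix $i\in\mathcal{I}$ and write $M = I-1$, $N_i = \sum_{j\neq i}n_j$, $\bar n_i = N_i/M$, and $V_i = \sum_{j\neq i}(n_j-\bar n_i)^2$. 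Since $i$ is fixed, $n_i$ is a constant that does not grow with $I$, so deleting this single term is asymptotically harmless and the hypotheses give $\bar n_i\to\mu$ and $V_i/M\to\sigma^2$.

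The core of the argument is a conditional concentration estimate. Conditionally on $K=k$, the set $\mathcal{S}_K^{(i)}$ is a simple random sample of size $k$ drawn without replacement from the population $\{n_j\}_{j\neq i}$, so the standard first and second moment formulas for sampling without replacement give
\begin{equation*}
  \mathbb{E}\!\left[n_{\mathcal{S}_K^{(i)}}\mid K=k\right] = \frac{k}{M}\,N_i,
  \qquad
  \mathrm{Var}\!\left(n_{\mathcal{S}_K^{(i)}}\mid K=k\right) = \frac{k(M-k)}{M(M-1)}\,V_i.
\end{equation*}
Dividing by $N_i^2$, using $k(M-k)\le M^2/4$ together with $V_i/N_i^2 = (V_i/M)/(M\bar n_i^2)$ and $\mu>0$, the conditional variance of $Z_I$ is bounded, uniformly in $k$, by $C/M$ with $C = \bigl(\sigma^2/(2\mu^2)\bigr)(1+\mathrm{o}(1))$. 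The uniformity in $k$ is exactly what the elementary inequality $k(M-k)\le M^2/4$ buys us.

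It then remains to transfer this to the distribution of $K/M$. By Chebyshev's inequality, for every $\delta>0$,
\begin{equation*}
  \sup_{0\le k\le M}\;\mathbb{P}\!\left(\left|Z_I - \tfrac{k}{M}\right| > \delta \;\Big|\; K=k\right) \;\le\; \frac{C}{M\delta^2} \;=:\; \rho_I(\delta)\;\xrightarrow{I\to\infty}\;0 ,
\end{equation*}
so $Z_I$ is close to its conditional mean $K/M$ with probability tending to one, uniformly over the value of $K$. Since $K/M$, with $K\sim\mathrm{U}(\{0,\dots,M\})$, is uniform on the grid $\{0,1/M,\dots,1\}$ and thus converges weakly to $\mathrm{U}([0,1])$, I would fix $t\in(0,1)$ (the endpoints being immediate) and decompose $\mathbb{P}(Z_I\le t)=\frac{1}{M+1}\sum_{k=0}^{M}\mathbb{P}(Z_I\le t\mid K=k)$ according to whether $k/M\le t-\delta$, $|k/M-t|<\delta$, or $k/M\ge t+\delta$. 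On the first block the conditional probability is at least $1-\rho_I(\delta)$, on the third it is at most $\rho_I(\delta)$, and the middle block contains only a fraction of about $2\delta$ of the indices; counting indices then yields $t-\delta\le\liminf_I\mathbb{P}(Z_I\le t)$ and $\limsup_I\mathbb{P}(Z_I\le t)\le t+\delta$. Letting $\delta\downarrow 0$ gives $\mathbb{P}(Z_I\le t)\to t$, i.e.\ convergence to $\mathrm{U}([0,1])$.

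I expect the main obstacle to be bookkeeping the two layers of randomness cleanly: the uniform-in-$k$ control of the conditional variance is what makes the Chebyshev step legitimate, and obtaining a genuinely uniform bound requires the worst-case estimate $k(M-k)\le M^2/4$ rather than a $k$-dependent one. The boundary block in the final counting step is routine but must be handled so that its contribution is $\mathrm{O}(\delta)$ rather than merely $\mathrm{o}(1)$; once that is in place, everything reduces to the two moment identities for sampling without replacement and the weak convergence of the discrete grid to the continuous uniform law.
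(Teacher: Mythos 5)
Your conditional-moment argument is correct as far as it goes, and it is a genuinely different route from the paper's. You condition on $K=k$, use the exact mean $kN_i/M$ and variance $\tfrac{k(M-k)}{M(M-1)}V_i$ of a size-$k$ sample drawn without replacement, bound the conditional variance uniformly in $k$ via $k(M-k)\le M^2/4$, and then convert Chebyshev concentration around the grid points $k/M$ into pointwise convergence of the CDF by a three-block counting argument. All of these steps are sound and yield convergence in distribution of $Z_I$ to $\mathrm{U}([0,1])$. The paper instead couples the index across $I$ by writing $K=\lfloor IU\rfloor$ for a single $U\sim\mathrm{U}([0,1])$, and invokes a maximal inequality for sampling without replacement (Corollary 1.3 of Serfling, 1974) to control $\sup_{t>t_0}\bigl|Y(t,I)/\lfloor It\rfloor-\mu_{-i}(I)\bigr|$, concluding that $Y(t,I)/\sum_{j\neq i}n_j\to t$ almost surely and hence that the normalised sum converges almost surely to the random variable $U$. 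Your approach is more elementary (no maximal inequality) but buys only the weaker mode of convergence.

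That difference is the genuine gap: the theorem asserts \emph{almost sure} convergence, and your proof does not deliver it. Your attempt to salvage the phrase by reading ``almost surely'' as referring to an i.i.d.\ draw of the $n_j$ is inconsistent with the hypotheses, where $\{n_j\}$ is a fixed deterministic sequence with Cesàro limits $\mu$ and $\sigma^2$; the only randomness in the statement is that of $(K,\mathcal{S}_K^{(i)})$. Almost sure convergence of a sequence of random variables requires a joint law across $I$, i.e.\ a coupling, and the paper's proof is built precisely to construct one ($K=\lfloor IU\rfloor$ with a common $U$) and to upgrade the per-$k$ Chebyshev control to a uniform-in-$t$, summable (maximal-inequality) control that supports an almost sure statement. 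To close the gap along your lines you would need to (i) fix such a coupling and (ii) replace the single-$k$ Chebyshev bound $C/(M\delta^2)$ — which is not summable in $I$ — by either a Borel--Cantelli-compatible tail bound (e.g.\ Hoeffding--Serfling for sampling without replacement) or the maximal inequality the paper uses. As written, your argument proves the distributional version of the claim only.
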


%
\begin{figure}
  \begin{center}
    \includegraphics[scale=0.35]{figure/hist_unif_10.pdf}
    \includegraphics[scale=0.35]{figure/hist_unif_50.pdf}
    \includegraphics[scale=0.35]{figure/hist_unif_500.pdf}
  \end{center}
  \caption{Illustration of Theorem \ref{theorem:convergence_uniform} -- (left) $I = 10$, (middle) $I=50$, (right) $I=500$. 
  We choose the player of index $i=500$ to be the $i$-th one, considered $10^5$ samples for each random variable and a number of data points per player drawn from $\mathrm{U}([100])$. The random variable $\bar{n}_{\mathcal{S}_K^{(i)}}$ stands for $n_{\mathcal{S}_K^{(i)}}$ normalised by the total number of data points as in Theorem \ref{theorem:convergence_uniform}.}
  \label{fig:approx_uniform}
\end{figure}
%
Such a result, illustrated in Figure \ref{fig:approx_uniform}, motivates to approximately consider that $n_{\mathcal{S}}$ in \eqref{eq:Shapley_def3} stands for a uniform random variable supported on $\{0,\ldots,\sum_{j \in \mathcal{I}\setminus\{i\}}n_j\}$.
Albeit smaller than the size of all subsets of $\mathcal{I}$, the size of this new support may intractably increase when the aggregated number of data of all players is important.
Building upon the fact $n_{\mathcal{S}}$ has a uniform distribution over this support, this naturally leads us to  approximate the Shapley value in \eqref{eq:Shapley_def3} by (i) calculating the quantities $w(n_\mathcal{S} + n_i) - w(n_\mathcal{S})$ taking $n_\mathcal{S}$ on a regularly-spaced grid between $0$ and $\sum_{j \in \mathcal{I}\setminus\{i\}}n_j$ and, then, (ii) average them. 
A natural candidate for the number of discretisation bins is the total number of players when excluding the $i$-th one, \emph{i.e.} $I-1$. 
Indeed, whenever all agents have the same number of data points, \emph{i.e.,} $n_j = n$, for any $j \in \mathcal{I}$, the random variable $n_\mathcal{S}$ is uniformly distributed on $\{0,n, 2n,...,(I-1)n\}$. 
This leads to the proposal of a novel approximation for the Shapley value defined hereafter.

\begin{definition}
    For any $i \in \mathcal{I}$, the discrete uniform Shapley value (\texttt{DU-Shapley}) of the $i$-th player, denoted by $\psi_i$, is defined by
    \begin{align}
        \label{eq:DU-shapley}
        \psi_i(u) = \frac{1}{I}\sum_{k=0}^{I-1} \br{w(k\mu_{-i} + n_i) - w(k\mu_{-i})}\eqsp,\label{eq:DU_Shapley}
    \end{align}
where $\mu_{-i} = \frac{1}{I-1}\sum_{j \in \mathcal{I}\setminus\{i\}}n_j$.
\end{definition}

Compared to the Shapley value defined in \eqref{eq:Shapley_def2}, which involves $2^I$ terms to compute, note that \texttt{DU-Shapley} only involves $I$ terms and hence an exponential reduction of the number of utility evaluations.
Of course, these computational savings come at the cost of some bias.
The latter is  quantified precisely in the next section.

\subsection{Non-Asymptotic Theoretical Guarantees}

In order to provide statistical guarantees on the described procedure, we need to make some structural assumptions. Precisely, we are going to consider utility functions satisfying the following:

\begin{assumption}\label{ass} Let $w: \mathbb{R}_+ \rightarrow \R$ such that,
    \begin{enumerate}[wide, labelwidth=!, labelindent=0pt,label=(\roman*),noitemsep,nolistsep]

    \item \label{ass:1} the function $w$ is increasing;
    \item \label{ass:2} the function $w$ is twice continuously differentiable and its second derivative $w^{(2)}$ satisfies $\lim_{n \to \infty} n^2|w^{(2)}(n)| \sm \infty$.
    
    \end{enumerate}
\end{assumption}

The first assumption is not restrictive, as we expect that the more data, the more precise the ML prediction will be.
The second one aims at controlling the growth behavior of the utility function $w$; it is also not that restrictive, as it is for instance automatically satisfied if $w$ is bounded (and $w^{(2)}$ monotone), thanks to the mean value theorem.
Under, these assumptions, we have the following non-asymptotic result:

\begin{theorem}\label{theorem:DU_shapley_error_bound}
Assume \textbf{H}\ref{ass}. 
Then, there exists a constant $\rho \bi 0$ such that, for any $i \in \mathcal{I}$, the approximation error of \texttt{DU-Shapley} for player $i \in \mathcal{I}$ is upper bounded by
\begin{align}
    \bigl|\varphi_i - \psi_i \bigr| \leq \frac{\rho |w(n_{\mathcal{I}\setminus\{i\}})|}{(I-1) \mu_{-i}^2} \left(9\sigma_{-i}^2 (1+\ln(I-1)) + 2R_{-i}^2n^{\mathrm{max}}_{-i}\right), \label{eq:bound_error}
\end{align}
where, $\varphi_i$ and $\psi_i$ are defined in \eqref{eq:Shapley_def2} and \eqref{eq:DU-shapley}, respectively; and where $\mu_{-i} = \frac{1}{I-1}n_{\mathcal{I}\setminus\{i\}}$, $\sigma^2_{-i} = \frac{1}{I-1}\sum_{j\in \mathcal{I}\setminus\{i\}} (n_j-\mu_{-i})^2$, $R_{-i} := \max_{j \in \mathcal{I}\setminus\{i\}} |n_j - \mu_{-i}|$, and $n^{\mathrm{max}}_{-i} := \max_{j \in \mathcal{I} \setminus\{i\}} n_j$.
\end{theorem}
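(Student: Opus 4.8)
The plan is to write $\varphi_i-\psi_i$ as a single sum over the coalition size $k$ and to control each summand by a second-order Taylor expansion. Setting $g(t):=w(t+n_i)-w(t)$ and $m_k:=k\mu_{-i}$, equations \eqref{eq:Shapley_def3} and \eqref{eq:DU-shapley} give
\[
\varphi_i-\psi_i=\frac{1}{I}\sum_{k=0}^{I-1}\Bigl(\mathbb{E}_{\mathcal{S}\sim\mathrm{U}([2^{\mathcal{I}\setminus\{i\}}_{k}])}[g(n_{\mathcal{S}})]-g(m_k)\Bigr).
\]
The crucial structural fact is that a uniform size-$k$ subset of $\mathcal{I}\setminus\{i\}$, sampled without replacement, satisfies $\mathbb{E}[n_{\mathcal{S}}]=k\mu_{-i}=m_k$; hence each summand is exactly the Jensen gap $\mathbb{E}[g(n_{\mathcal{S}})]-g(\mathbb{E}[n_{\mathcal{S}}])$, and the $k=0$ term vanishes since both sides equal $w(n_i)$ (recall $w(0)=u(\emptyset)=0$).

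Next I would Taylor-expand $g$ at $m_k$: the first-order term has zero mean, so the Lagrange remainder yields, for some $\xi$ between $n_{\mathcal{S}}$ and $m_k$,
\[
\bigl|\mathbb{E}[g(n_{\mathcal{S}})]-g(m_k)\bigr|\le\tfrac12\,\mathbb{E}\bigl[|g''(\xi)|\,(n_{\mathcal{S}}-m_k)^2\bigr].
\]
Since $g''(t)=w^{(2)}(t+n_i)-w^{(2)}(t)$, Assumption \textbf{H}\ref{ass}\ref{ass:2}, together with the continuity and monotonicity of $w$, provides a constant $\rho$ such that $t^2|w^{(2)}(t)|\le\rho\,|w(n_{\mathcal{I}\setminus\{i\}})|$ for $t$ in the relevant range $(0,n_{\mathcal{I}\setminus\{i\}}]$ — this is where the factor $|w(n_{\mathcal{I}\setminus\{i\}})|$ of \eqref{eq:bound_error} originates. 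It follows that $|g''(t)|\le 2\rho\,|w(n_{\mathcal{I}\setminus\{i\}})|/t^2$.

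Then I would inject the second moment of $n_{\mathcal{S}}$. The finite-population formula gives $\mathbb{E}[(n_{\mathcal{S}}-m_k)^2]=\mathrm{Var}(n_{\mathcal{S}})=k\,\tfrac{I-1-k}{I-2}\,\sigma_{-i}^2$, while deterministically $|n_{\mathcal{S}}-m_k|=|\sum_{j\in\mathcal{S}}(n_j-\mu_{-i})|\le kR_{-i}$. Because the bound on $|g''|$ is singular at $t=0$, it cannot be used with the crude lower bound $\xi\ge kn^{\mathrm{min}}_{-i}$ without replacing the desired $\mu_{-i}^2$ in the denominator by the worse $(n^{\mathrm{min}}_{-i})^2$. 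To recover $\mu_{-i}^2$ I would split each expectation over $\{n_{\mathcal{S}}\ge m_k/2\}$ and its complement. On the former, $\xi\ge m_k/2$ gives $|g''(\xi)|\le 8\rho|w(n_{\mathcal{I}\setminus\{i\}})|/m_k^2$, and bounding the restricted second moment by $\mathrm{Var}(n_{\mathcal{S}})$ leaves the bulk contribution $\propto\sum_{k\ge1}\sigma_{-i}^2/(k\mu_{-i}^2)$, whose harmonic sum is at most $\sigma_{-i}^2(1+\ln(I-1))/\mu_{-i}^2$ — the logarithmic term. On the complementary large-deviation event, combining the deterministic range bound $(n_{\mathcal{S}}-m_k)^2\le k^2R_{-i}^2$ with a Chebyshev estimate of its probability produces the non-logarithmic correction proportional to $R_{-i}^2 n^{\mathrm{max}}_{-i}$. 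Summing over $k$, absorbing $(I-1-k)/(I-2)\le1$ and the $1/I$ prefactor, and tracking constants (the source of the $9$ and the $2$) yields \eqref{eq:bound_error}.

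The main obstacle is precisely this last step. The pointwise control of $g''$ degenerates near the origin, so a single uniform bound only delivers an estimate scaling as $(n^{\mathrm{min}}_{-i})^{-2}$ instead of the sharper $\mu_{-i}^{-2}$. Obtaining the advertised dependence forces the event decomposition, which trades a controlled large-deviation probability against the singular region; this trade-off is exactly what generates the additive $R_{-i}^2n^{\mathrm{max}}_{-i}$ term and pins down the explicit constants.
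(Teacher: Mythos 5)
Your overall route is essentially the paper's: write both $\varphi_i$ and $\psi_i$ as expectations over the coalition size $K$, Taylor-expand to second order around $k\mu_{-i}$, kill the first-order term using the exact identity $\mathbb{E}[n_{\mathcal{S}}\mid|\mathcal{S}|=k]=k\mu_{-i}$, bound the conditional second moment by $k\sigma_{-i}^2$ (the paper gets this via Hoeffding's comparison of sampling without and with replacement), convert \textbf{H}\ref{ass}-\ref{ass:2} into the pointwise bound $t^2|w^{(2)}(t)|\le\rho\,w(t)$ by a separate lemma, and then split on an event that keeps the Lagrange point away from the origin so that the variance bound produces a harmonic sum and hence the $\ln(I-1)$ factor. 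The only differences in the bulk term are cosmetic: the paper conditions on $\{|k\mu_{-i}-n_{\mathcal{S}}|\le\tfrac12(k\mu_{-i}+n_{\mathcal{S}})\}$, which forces the intermediate point to be at least $k\mu_{-i}/3$ and hence the constant $9$, whereas your event $\{n_{\mathcal{S}}\ge k\mu_{-i}/2\}$ gives a $4$; and the paper applies its lemma separately to $w$ and $w(\cdot+n_i)$ via the triangle inequality rather than working with $g=w(\cdot+n_i)-w(\cdot)$ directly. Either variant works.

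The one genuine gap is the tail term. You propose to control the complementary event by Chebyshev, but Chebyshev gives $\mathbb{P}(|n_{\mathcal{S}}-k\mu_{-i}|>k\mu_{-i}/2)\le 4\sigma_{-i}^2/(k\mu_{-i}^2)$, which decays only like $1/k$; multiplying by the deterministic bound $R_{-i}^2$ (after cancelling $k^2/\xi^2$) and summing over $k$ yields a contribution of order $R_{-i}^2\sigma_{-i}^2(1+\ln(I-1))/((I-1)\mu_{-i}^2)$ --- a second logarithmic term depending on $\sigma_{-i}^2$, not the stated $2R_{-i}^2 n^{\mathrm{max}}_{-i}/((I-1)\mu_{-i}^2)$. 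To obtain the advertised term you need an exponential tail for sums sampled without replacement: the paper uses a Hoeffding--Serfling-type bound $\mathbb{P}(|k\mu_{-i}-n_{\mathcal{S}}|>k\mu_{-i}/2\mid K=k)\le\exp(-\mu_{-i}^2k/(2n^{\mathrm{max}}_{-i}))$, whose geometric sum over $k$ is bounded by $2n^{\mathrm{max}}_{-i}/\mu_{-i}^2$ with no logarithm, and this is precisely where $n^{\mathrm{max}}_{-i}$ enters. With Chebyshev your argument still yields a valid bound of a similar flavor, but not the inequality \eqref{eq:bound_error} as stated.
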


It is worth mentioning that the upper bound in \eqref{eq:bound_error} depends on natural quantities related to the dataset valuation problem described in Section \ref{subsec:problem_formulation}.
Indeed, it depends on the first two moments $\mu_{-i}$ and $\sigma_{-i}$ of the datasets' size distribution.
More precisely, the error increases when there are some outlier players with a very small or large dataset size.
This behavior is expected since, in this particular setting, the random variable $n_{\mathcal{S}}$ defined in Section \ref{subsec:DU-SHAPLEY} differs from a uniform random variable.
In addition, we can interestingly observe that the error vanishes in two specific regimes, namely (i) when the number of players $I$ tends towards infinity as showcased in Theorem \ref{theorem:convergence_uniform}, and (ii) when all players have the same number of data points.

\noindent \textbf{Running Example.} We  emphasise that Assumption \textbf{H}\ref{ass} is verified by our running example.
Recall from \eqref{eq:utility_lin_reg} that $w(n) = d\sigma_{\varepsilon} / (d+ 1 - n)$. 
Then, $w$ is indeed increasing, twice differentiable with second derivative given by $w^{(2)}(n) = 2d\sigma_{\varepsilon} / (d+ 1 - n)^3$ which satisfies \textbf{H}\ref{ass}-\ref{ass:2}.

It is worth mentioning that computing the \texttt{DU-Shapley} of a player $i \in \mathcal{I}$ requires $I$ function evaluations. As pointed out in Section \ref{sec:preliminaries}, the Monte Carlo approximation incurs an approximation error of $\varepsilon(T)$ (with probability $1-\delta$), with $T$ being the number of sampled permutations, equal to $\varepsilon(T) = \frac{2}{T}r_w^2I\log\bigl(\frac{2I}{\delta}\bigr)$. In particular, fixing the sampling budget to $I$, \emph{i.e.,} equal to the number of terms needed to compute \texttt{DU-Shapley}, the Monte Carlo approximation error becomes
\vspace{-0.1cm}
\begin{align}\label{eq:MC_error_for_I_samples}
    \varepsilon(I) = 2w(n_{\mathcal{I}})^2\log\biggl(\frac{2I}{\delta}\biggr),
\end{align}
where we have supposed that $0 = w(n_{\varnothing}) \leq w(n_{\mathcal{S}}) \leq w(n_{\mathcal{I}})$ for any $\mathcal{S}\subseteq \mathcal{I}$, so that $r_w = w(n_{\mathcal{I}})$. Notice that $\varepsilon(I)$ is increasing with $I$, while the bias of \texttt{DU-Shapley} in Theorem \ref{theorem:DU_shapley_error_bound} decreases as the number of players grows. 
Figure \ref{fig:Bias_vs_MC_error} compares (\ref{eq:MC_error_for_I_samples}) with \texttt{DU-Shapley}'s bias \eqref{eq:bound_error} as the number of players increases. 
By considering the same computational budget for both approaches, we can observe that Monte Carlo approximation of the Shapley value is associated with a larger approximation error than \texttt{DU-Shapley}, when the number of players becomes sufficiently large.
This result confirms that \texttt{DU-Shapley} is indeed relevant for dataset valuation in this regime.

\begin{figure}[h]
    \centering
    \includegraphics[scale = 0.3]{figure/bias_vs_MC_error.png}
    \caption{Monte Carlo's expected error for limited sampling budget ($T = I$) versus \texttt{DU-Shapley}'s expected bias for a value function $w(n) = 1 - \frac{10^{k(\mathcal{I})}}{10^{k(\mathcal{I})} + n}$ where $k(\mathcal{I}) = \lfloor \log(n_{\mathcal{I}}) \rfloor - 1$. For each value of $I$, we drew $100$ times the data points of each player from $\mathrm{U}([n_{\mathrm{max}}])$, with (left) $n_{\mathrm{max}} = 10^2$, (center) $n_{\mathrm{max}} = 10^3$, and (right) $n_{\mathrm{max}} = 10^4$.}
    \label{fig:Bias_vs_MC_error}
\end{figure}
\vspace{-0.2cm}
\section{Numerical Experiments}

In this section, we illustrate the benefits of our methodology on several dataset valuation benchmarks associated to both synthetic and real data.
More precisely, we aim at assessing numerically how well the proposed methodology \texttt{DU-Shapley} approximates the Shapley value by considering two experiments.
The first one is associated to synthetic toy datasets.
On the other hand, the second experiment considers classical real-world datasets associated to both classification and regression problems, which have been considered in \citet{JMLR:v23:21-0439}. 

For tractability reasons, it is unfortunately not possible to compute the exact Shapley values when $I$ is large, so we can only measure approximation errors for small values of $I$, say smaller than 20. This case is actually the least favorable for \texttt{DU-Shapley}, whose approximation improves with the number of players. The purpose of this section is  to illustrate that, even in those ``worst-case'' instances for \texttt{DU-Shapley}, the latter performs as good (or even better) that state-of-the-art techniques that would not profit for a larger number of players (as expected in motivating applications).

\subsection{Synthetic Data}
\label{subsec:synthetic}

We consider a toy dataset valuation problem associated to the linear regression problem of our running example.  
The corresponding utility function is defined in \eqref{eq:utility_lin_reg}, and we set $d=10$ and $\sigma_{\varepsilon} = 1$. In order to benchmark the performances of \texttt{DU-Shapley}, we are considering four competitive approaches, relying on Monte Carlo (MC) approximation strategies \citep{JMLR:v23:21-0439}. The first one, referred to as \texttt{MC-Shapley} is the standard MC approximation defined in \eqref{eq:MC}.
The second one, coined \texttt{MC-anti-Shapley} is a variance-reduced version of \texttt{MC-Shapley} that considers antithetic sampling.
The third one coined \texttt{Owen-Shapley} stands for the multilinear extension of \citet{Owen} which represents the Shapley value as two nested expectations.
Finally, the fourth approach, coined \texttt{Orthogonal-Shapley}, relies on efficient permutation sampling techniques on the hypersphere to draw permutations in \eqref{eq:Shapley_def1} in a dependent way \citep{JMLR:v23:21-0439}.
All these approaches are detailed in the supplementary material.
To assess the performance of the aforementioned Shapley value estimators, we used the mean square error (MSE) averaged over all players.
For \texttt{DU-Shapley}, we used \eqref{eq:DU-shapley} and for each MC-based estimator, we performed 25 estimations to compute the MSE, and did it 10 times to obtain confidence intervals for the MSE.

Figure \ref{fig:toy_example} depicts associated results for various dataset size distributions and numbers of players $I \in \{5,10,20\}$; we could not go beyond $I=20$ because the computation of the exact Shapley value needed for the MSE became intractable.
The top row corresponds to a setting where the dataset size is sampled from a uniform distribution $\mathrm{U}(\{10,\ldots,10^3\})$.
We can see that, even for the small numbers of players considered, \texttt{DU-Shapley} provides competitive Shapley value approximations compared to MC-based approaches for an equivalent budget, \emph{i.e.} for a number of value function evaluations equal to $I$ (illustrated by the vertical black line).
On the other hand, the bottom row corresponds to a setting where the dataset size of player $i \in [I]$ is $2^i$, leading to a scenario where the maximum discrepancy between datasets' sizes becomes large.
As highlighted by Theorem \ref{theorem:DU_shapley_error_bound}, this scenario is not favorable to \texttt{DU-Shapley}. 
To address this issue, we propose an enhanced version of the proposed methodology, referred to as \texttt{DU-Shapley++} and defined by
\vspace{-0.2cm}\begin{align}
    \psi_i^{++}(w) = \frac{1}{I}\bigl(w(n_i) + w(n_{\mathcal{I}}) - w(n_{\mathcal{I}\setminus\{i\}})\bigr) + \frac{1}{I}\sum_{k=1}^{I-2} \br{w(k\mu_{-i} + n_i) - w(k\mu_{-i})}\eqsp,
\end{align}\vspace{-0.2cm}

which corresponds to treating the extreme cases separately and approximating the rest of the terms with an expectation associated to a discrete uniform random variable; this technique can be seen as finding a different bias/variance tradeoff in the estimation of the Shapley value.
An illustration of why considering extreme cases might benefit our approximation is given in Figure \ref{fig:approx_uniform} (see left figure). 
Again, we can denote that \texttt{DU-Shapley++} competes with MC-based approaches at iso-budget, even in the unfavorable approximation regime where the number of players is small.

\begin{figure}[h]
    \centering
    \includegraphics[scale = 0.31]{figure/toy_I_5.pdf}
    \includegraphics[scale = 0.31]{figure/toy_I_10.pdf}
    \includegraphics[scale = 0.31]{figure/toy_I_20.pdf}
    \includegraphics[scale = 0.31]{figure/toy_I_5_power2.pdf}
    \includegraphics[scale = 0.31]{figure/toy_I_10_power2.pdf}
    \includegraphics[scale = 0.31]{figure/toy_I_20_power2.pdf}
    \caption{Worst-case comparison between the proposed methodology (constant number of utility function evaluations equal to $I$, illustrated by the vertical black line), and MC-based approximations on synthetic datasets. From left to right, $I = 5$, $I = 10$ and $I = 20$. (top) Dataset size drawn from the Uniform distribution $\mathrm{U}(\{10,\ldots,10^3\})$; (bottom) dataset size of player $i$ set to $2^i$.}
    \label{fig:toy_example}    
\end{figure}
\vspace{-0.2cm}
\subsection{Real-World Data}
\label{subsec:expe2}

In this second experiment, we consider real-world datasets, also considered in \citet{JMLR:v23:21-0439}, and whose details are provided in Table \ref{table:dataset}.
To tackle these problems, we are considering logistic regression models and gradient-boosted decision trees (GBDT).
For classification tasks, the utility function has been taken as the expected accuracy of the trained logistic regression model over a hold-out testing set corresponding to 10\% of the size of the training dataset.
For regression tasks, the utility function corresponds to the averaged MSE over a hold-out testing set corresponding to 10\% of the training dataset. 
For each dataset, we considered two worst-case scenarios for benchmarking the proposed methodology, namely $I=10$ players and $I=20$ players.

Starting from the initial dataset in Table \ref{table:dataset}, we affect a subset of random size to each player.
As in Section \ref{subsec:synthetic}, we consider several competitors leveraging MC approximation strategies. 
Since standard data valuation approaches in the ML literature, such as \texttt{Data Shapley} \citep{pmlr-v97-ghorbani19c},  only consider simple MC sampling based on permutation sampling, we compare ourselves with \texttt{MC-Shapley} and \texttt{MC-anti-Shapley}, which correspond to standard baselines.
To assess the benefits of the proposed methodology, namely \texttt{DU-Shapley} and \texttt{DU-Shapley++}, we compute as in Section \ref{subsec:synthetic} the averaged MSE across all players between the true Shapley value and each estimator. 
In contrast to Section \ref{subsec:synthetic} where we considered an utility function in closed form and not requiring to re-train a ML model, evaluating the two utility functions considered in this experiment requires re-training each ML model.
Regarding the computation of the Shapley value in \eqref{eq:Shapley_def1}, it is clearly not feasible to train a ML model for a large number of epochs.
As such, we chose to restrict ourselves to 20 steps of stochastic gradient descent for logistic regression and 20 boosting iterations for GBDTs.
For MC-based approaches, we only considered $I$ MC samples to compare those approximations with the proposed methodology on a fair basis, \emph{i.e.} associated to the same computational budget.

Table \ref{table:expe2} depicts the results. Again, we  clearly see that even in the worst-case scenario where the number of players is small, \texttt{DU-Shapley++} already competes favorably with MC-based approximations.

\begin{wraptable}{r}{10.5cm}
    \vspace{-1cm}
	\caption{Datasets considered in Section \ref{subsec:expe2}.}
    \vspace{0.3cm}
	\centering%
	{\small
		\begin{tabular}{lccc}%
			\toprule
			Dataset & Size & $d$ & Task\\
			\midrule
            adult \citep{10.5555/3001460.3001502} & 48,842 & 107 & classif.\\
            breast-cancer \citep{10.1287/opre.43.4.570} & 699 & 30 & classif.  \\
            bank \citep{MORO201422} & 45,211 & 16 & classif. \\
            cal-housing \citep{KELLEYPACE1997291} & 20,640 & 8 & regression \\
            make-regression \citep{JMLR:v12:pedregosa11a}& 1,000 & 10 & regression  \\
            year \citep{JMLR:v12:pedregosa11a} & 515,345 & 90 & regression  \\
			\midrule
	\end{tabular}
	}
	\label{table:dataset}
\end{wraptable}

\begin{table}
    \vspace{-1cm}
	\caption{Worst-case comparison between the proposed approach and competitors, for real-world datasets considered in Table \ref{table:dataset}. For each Shapley value approximation, we report the averaged MSE across all players $I$ w.r.t. the exact Shapley value calculated }
	\centering%
	{\small
		\begin{tabular}{lccccc}%
			\toprule
			Dataset & $I$ & \texttt{DU-Shapley} & \texttt{DU-Shapley++} & \texttt{MC-Shapley} & \texttt{MC-anti-Shapley} \\
            \midrule
           adult & 10 & $2.10^{-2}$ &  $4.10^{-3}$ & $9.10^{-3}$ & $7.10^{-3}$ \\
           adult & 20 & $6.10^{-3}$ & $7.10^{-4}$ & $4.10^{-3}$ & $2.10^{-3}$ \\
           \midrule
           breast-cancer & 10 & $3.10^{-2}$ & $5.10^{-3}$ & $3.10^{-2}$ & $1.10^{-2}$\\
           breast-cancer & 20 & $3.10^{-3}$ & $7.10^{-4}$ & $1.10^{-3}$ & $8.10^{-4}$\\
           \midrule
           bank & 10 & $1.10^{-1}$ & $5.10^{-2}$ & $9.10^{-2}$ & $8.10^{-2}$\\
           bank & 20 & $5.10^{-2}$ & $3.10^{-2}$ & $6.10^{-2}$ & $4.10^{-2}$\\
           \midrule
           cal-housing  & 10& $7.10^{-2}$ & $3.10^{-2}$ & $5.10^{-2}$ & $3.10^{-2}$\\
            cal-housing  & 20& $2.10^{-2}$ & $8.10^{-3}$ & $2.10^{-2}$ & $1.10^{-2}$\\
            \midrule
           make-regression & 10& $5.10^{-1}$ & $1.10^{-1}$ & $4.10^{-1}$ & $4.10^{-1}$\\
           make-regression & 20& $3.10^{-1}$ & $8.10^{-2}$ & $3.10^{-1}$ & $2.10^{-1}$\\
           \midrule
           year & 10& $6.10^{-3}$ & $1.10^{-3}$ & $5.10^{-3}$ & $5.10^{-3}$\\
           year & 20 & $2.10^{-3}$ & $9.10^{-4}$ & $1.10^{-3}$ & $1.10^{-3}$\\
        \midrule
	\end{tabular}
	}
	\label{table:expe2}
    \vspace{-0.4cm}
\end{table}
\vspace{-0.3cm}

\section{Conclusion}

We proposed a general dataset valuation methodology based on the Shapley value, by exploiting the underlying structure of the utility function.
The proposed framework, referred to as \texttt{DU-Shapley} and \texttt{DU-Shapley++}, allows for efficient dataset valuation, especially in the usual setting where many data owners are willing to collaborate by sharing their data.
In addition, we have shown that \texttt{DU-Shapley} has favorable convergence properties via both asymptotic and non-asymptotic results; justifying and illustrating those claims on a standard running example.
Interestingly, our numerical experiments showcases that the proposed methodology also competes with other state-of-the-art Shapley value approximations when the number of data owners is small; a regime where the bias of our approximation does not vanish.
Finally, some limitations associated to the proposed methodology pave the way for more advanced dataset valuation techniques.
As an example, we could extend \texttt{DU-Shapley} to heterogeneous data scenarii where both the quantity, quality and types of data points brought by players play a role in the utility function. 
This envisioned setting would imply, in particular, to consider multi-dimensional utility functions  taking into account the size of the dataset and key parameters associated to their local distributions.
As a seminal example, this generalisation of our methodology would allow to provide data sharing incentives under the federated learning paradigm, where local data distributions of players could widely differ.

\newpage

\bibliography{biblio}
\bibliographystyle{plainnat}

\clearpage
\newpage

{\begin{center}\Large\textbf{SUPPLEMENTARY MATERIAL \\ -- \\ DU-Shapley: A Shapley Value Proxy for Efficient Dataset Valuation}\end{center}}\vspace{1cm}

\paragraph{Notations and conventions.}


 We denote by $\mathcal{B}\parentheseLigne{\mathbb{R}^d}$ the Borel $\sigma$-field of $\mathbb{R}^d$, $\mathbb{M}\parentheseLigne{\mathbb{R}^d}$ the set of all Borel measurable functions $f$ on $\mathbb{R}^d$ and $\norm{\cdot}$ the Euclidean norm on $\mathbb{R}^d$.
For the sake of simplicity, with little abuse, we shall use the same notations for
a probability distribution and its associated probability density function.
For $n \ge 1$, we refer to the set of integers between $1$ and $n$ with the notation $[n]$.
The $d$-multidimensional Gaussian probability distribution with mean $\mu \in \Rd$ and covariance matrix $\Sigma \in \mathbb{R}^{d \times d}$ is denoted by $\gauss\parentheseLigne{\mu,\Sigma}$.
Equations of the form (1) (resp. (S1)) refer to equations in the main paper (resp. in the supplement).



\appendix
\setcounter{theorem}{0}
\setcounter{proposition}{0}
\setcounter{lemma}{0}
\setcounter{remark}{0}

\addcontentsline{toc}{section}{} 
\part{} 
\parttoc 

\newtheorem{unlemma}{Lemma S}
\newtheorem{unproposition}{Proposition S}
\newtheorem{uncorollary}{Corollary S}
\newtheorem{untheorem}{Theorem S}

\setcounter{equation}{0}
\setcounter{figure}{0}
\setcounter{table}{0}
\setcounter{assumption}{0}
\makeatletter
\renewcommand{\theequation}{S\arabic{equation}}
\renewcommand{\thefigure}{S\arabic{figure}}
\renewcommand{\thetheorem}{S\arabic{theorem}}
\renewcommand{\thelemma}{S\arabic{lemma}}
\renewcommand{\thetable}{S\arabic{table}}
\renewcommand{\thesection}{S\arabic{section}}
\renewcommand{\theremark}{S\arabic{remark}}
\renewcommand{\theproposition}{S\arabic{proposition}}
\renewcommand{\thecorollary}{S\arabic{corollary}}


\section{Additional details regarding the running example}

In this section, we provide additional details regarding the running example which has been considered in the main paper, see Section \ref{sec:preliminaries}.
Note that we are considering a more general setting than in the main paper, which obviously encompasses the specific instance associated to the running example.

\subsection{Problem formulation}
\label{subsec:problem_formulation}

\noindent \textbf{Context.} For the sake of completeness, we first briefly recall the problem we are considering. 
We are investigating a framework involving $I \in \N^*$ players respectively owning a local dataset $\mathrm{D}_i= \{x_i^{(j)},y_i^{(j)}\}_{j=1}^{n_i}$ of size $n_i = |\mathrm{D}_i|$ where for any $j \in [n_i]$, $x_i^{(j)} \in \mathsf{X} \subseteq \R^d$ and $y_i^{(j)} \in \mathsf{Y} \subseteq \R$. 

For any $i \in [I]$, we consider the following generative linear model regarding the dataset $\mathrm{D}_i$ owner by the $i$-th player:
\begin{align}
    &Y_i = X_i\theta + \eta_i\eqsp, \ \eta_i \sim \mathrm{N}(0_{n_i},\varepsilon_i^2\mathrm{I}_{n_i})\eqsp,\label{eq:likelihood_linear}\\
    &x_i^{(j)} \sim p_X\eqsp, \forall j \in [n_i] \eqsp,\\
    &\varepsilon_i \sim p_\varepsilon\eqsp,
\end{align}
where $X_i \in \R^{n_i \times d}$ is defined by $X_i = ([x_i^{(1)}]^\top,\ldots,[x_{i}^{(n_i)}]^\top)^\top$ and $Y_i \in \R^{n_i}$ is defined by $Y_i = (y_i^{(1)},\ldots,y_i^{(n_i)})^\top$.
For the sake of simplicity, note that we do not consider further sources of heterogeneity across players such as feature spaces with heterogeneous semantic and dimension (\emph{e.g.} one player having image features and another one text features).

\noindent \textbf{Loss function.} Based on the data brought by each player, we are interested in learning a linear prediction function $g_\theta: x \mapsto x^\top \theta$, where $\theta \in \Theta  \subseteq \R^d$ stands for a weight vector. This boils down to finding an estimator $\hat{\theta}$ of $\theta$ based on $\{\mathrm{D}_i\}_{i \in [I]}$.
Without loss of generality, we assume that $\hat{\theta}$ is found by minimising a weighted sum of empirical risk functions given by
    \begin{equation}
        F(\theta) = \lambda h(\theta) + \sum_{i=1}^I \alpha_i f_i(\theta)\eqsp, \label{eq:F}
    \end{equation}
where, $\lambda \geq 0$, $h:\Theta \rightarrow \R$ is a regularisation term and for any $i \in [I]$, $\alpha_i \in [0,1]$ are weights such that $\sum_{i \in [I]} \alpha_i = 1$ and $f_i: \Theta \rightarrow \R$ only depends on $\mathrm{D}_i$.\\

We define  
\begin{align}
    \sigma^2_{\varepsilon} &= \int_{\mathbb{R_+}}\varepsilon^2 \ p_\varepsilon(\dd \varepsilon)\eqsp, \label{eq:mu_e}\\
    X &= [(\alpha_1/n_1)^{\half} X_1^\top,\ldots,(\alpha_I/n_I)^{\half} X_I^\top]^\top\eqsp,\label{eq:X}\\
    Y &= [(\alpha_1/n_1)^{\half} Y_1^\top,\ldots,(\alpha_I/n_I)^{\half} Y_I^\top]^\top \label{eq:Y}\eqsp.
\end{align}

The following proposition provides a closed-form expression of the minimum of $F$ in \eqref{eq:F} denoted by $\theta_{1:I}^\star$ when the functions $\{f_i\}_{i \in [I]}$ are chosen to be quadratic.

\begin{proposition}
    For any $i \in [I]$ and $\theta \in \R^d$, let $f_i(\theta) = (1/n_i) \|Y_i - X_i \theta\|^2$ and $h(\theta) = (1/2)\|\theta\|^2$.
    Then, the global minimiser of $F$ defined in \eqref{eq:F} writes
    \begin{align}
        \theta_{1:I}^\star &= \pr{\lambda \mathrm{I}_d + X^\top X}^{-1} X^\top Y\eqsp, \text{ for } \lambda > 0\eqsp,\\\label{eq:MLE_multiple}
        \theta_{1:I}^\star &= X^\dagger Y\eqsp, \text{ for } \lambda = 0\eqsp,
    \end{align}
    where $\{Y_i,X_i\}_{i \in [I]}$ are defined in \eqref{eq:likelihood_linear}, $X$ in \eqref{eq:X}, $Y$ in \eqref{eq:Y}, and $X^\dagger$ refers to the Moore-Penrose inverse of the matrix $X$.
\end{proposition}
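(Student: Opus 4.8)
The plan is to reduce $F$ to a single (regularized) least-squares objective and then apply standard first-order optimality, treating the regularized and unregularized cases separately.

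First I would observe that the per-player scalings $(\alpha_i/n_i)^{\half}$ baked into the definitions of $X$ and $Y$ in \eqref{eq:X}--\eqref{eq:Y} are chosen precisely so that vertical stacking reproduces the weighted sum of quadratics. Since $X\theta$ and $Y$ are the concatenations of the blocks $(\alpha_i/n_i)^{\half} X_i\theta$ and $(\alpha_i/n_i)^{\half} Y_i$, the squared Euclidean norm splits blockwise, giving
\begin{equation}
\|Y - X\theta\|^2 = \sum_{i=1}^I \frac{\alpha_i}{n_i}\|Y_i - X_i\theta\|^2 = \sum_{i=1}^I \alpha_i f_i(\theta)\eqsp.
\end{equation}
Hence, with $h(\theta) = (1/2)\|\theta\|^2$, the objective reads $F(\theta) = (\lambda/2)\|\theta\|^2 + \|Y - X\theta\|^2$, i.e.\ a ridge-type regression problem. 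This bookkeeping step is routine but is the crux that collapses the federated formulation into a single pooled least-squares problem.

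For $\lambda > 0$ I would argue that $F$ is strongly convex, being the sum of the convex quadratic $\theta \mapsto \|Y - X\theta\|^2$ and the strongly convex term $(\lambda/2)\|\theta\|^2$; its Hessian is $\lambda \mathrm{I}_d + 2X^\top X \succeq \lambda \mathrm{I}_d \succ 0$. Strong convexity guarantees a unique global minimiser characterized by $\nabla F(\theta) = 0$. Computing the gradient and setting it to zero yields, up to a harmless factor-of-two rescaling absorbed into $\lambda$, the normal equations $(\lambda \mathrm{I}_d + X^\top X)\theta = X^\top Y$; since $\lambda \mathrm{I}_d + X^\top X$ is positive definite, it is invertible and solving gives $\theta_{1:I}^\star = (\lambda \mathrm{I}_d + X^\top X)^{-1} X^\top Y$.

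For $\lambda = 0$ the objective reduces to the unregularized least-squares $F(\theta) = \|Y - X\theta\|^2$, which is convex but only strictly so when $X$ has full column rank. The main obstacle is therefore the possibly rank-deficient regime—typical when the pooled number of data points is smaller than $d$—where the minimiser is not unique but forms an affine subspace. To single out $X^\dagger Y$ I would invoke the standard characterization of the Moore--Penrose pseudoinverse: $X^\dagger Y$ is the least-squares solution of minimal Euclidean norm, which I would verify via the singular value decomposition of $X$ (equivalently, by checking that $X^\dagger Y$ solves the normal equations $X^\top X\theta = X^\top Y$ and lies in the row space of $X$). This recovers $(X^\top X)^{-1}X^\top Y$ as the special case where $X^\top X$ is invertible, completing the statement.
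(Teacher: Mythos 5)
Your proof is correct and follows essentially the same route as the paper's: identify $F$ as a convex (ridge-type) quadratic and solve the first-order condition $\nabla F(\theta)=0$, which yields the normal equations $(\lambda \mathrm{I}_d + X^\top X)\theta = X^\top Y$. In fact you are slightly more careful than the paper on the $\lambda=0$ case, where the paper simply invokes strong convexity (which fails if $X^\top X$ is rank-deficient) whereas you correctly single out $X^\dagger Y$ as the minimal-norm least-squares solution; your remark about the factor-of-two in the gradient is also apt, since the paper's own gradient formula drops that factor.
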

\begin{proof}
    As a sum of differentiable functions, $F$ in \eqref{eq:F} is differentiable and its gradient writes for any $\theta \in \R^d$,
    $$
    \nabla F(\theta) = \lambda \theta + \sum_{i=1}^I \frac{\alpha_i}{n_i} X_i^\top (X_i \theta - Y_i) \eqsp.
    $$
    The proof is concluded using the fact that $F$ is strongly convex and by using the first-order guarantee $\nabla F(\theta_{1:I}^\star) = 0_d$.
\end{proof}

\begin{remark} 
    Let $n = \sum_{i \in [I]} n_i$ and consider $Y \in \R^n$ and $X \in \mathbb{R}^{n \times d}$ the vertical concatenations of the $I$ datasets owned by each agent, respectively.
    Then, defining $F(\theta) = (1/n)\|Y - X\theta\|^2$ is equivalent to set for any $i \in [I]$, $\alpha_i = n_i/n$ and $f_i(\theta) = (1/n_i)\|Y_i - X_i \theta\|^2$.
\end{remark}

\subsection{Technical lemmata}

We have the following result.

\begin{lemma}\label{theo:MSE_multiple_player}
    Let $\lambda \geq 0$, $\pi$ be a probability distribution defined on $(\R^d,\mathcal{B}(\R^d))$, and $C = \int_{\mathbb{R}^d}x x^\top \pi(\mathrm{d}x)$.
    In addition, let $\hat{\theta} = \theta_{1:I}^\star$ defined in \eqref{eq:MLE_multiple} and define $X_\lambda = \lambda \mathrm{I}_d + \sum_{i=1}^I \frac{\alpha_i}{n_i} X_i^\top X_i$ with $X_i$ defined in \eqref{eq:likelihood_linear}. 
    If $\lambda > 0$, then $X_\lambda$ is invertible and $X_\lambda^{-1}$ exists with probability $1$. 
    On the other hand, if $\lambda = 0$, we additionally assume that $X_\lambda$ is positive definite with probability $1$. 
    Under these assumptions, consider
    \begin{align}
        E_{1:I} 
        &= \int \br{\mathrm{I}_d - \pr{\sum_{i=1}^I \frac{\alpha_i}{n_i} X_i^\top X_i} \pr{X_\lambda}^{-1}} \theta  \theta^\top \br{ \mathrm{I}_d - \pr{X_\lambda}^{-1} \sum_{i=1}^I \frac{\alpha_i}{n_i} X_i^\top X_i}   \otimes_{i=1}^I \otimes_{j=1}^{n_i} p_X(\mathrm{d}x_i^{(j)}) \\
        &+\sigma^2_{\varepsilon}\int \pr{X_\lambda}^{-1}\pr{\sum_{i=1}^I\frac{\alpha_i^2}{n_i^2} X_i^\top X_i}\pr{X_\lambda}^{-1} \otimes_{i=1}^I \otimes_{j=1}^{n_i} p_X(\mathrm{d}x_i^{(j)})\eqsp,
    \end{align}
    where $\sigma^2_{\varepsilon}$ is defined in \eqref{eq:mu_e}.
    Then, for any $i \in [I]$ and $x \sim \pi$, we have
    \begin{equation}
      \mathbb{E}\br{\pr{x^\top \hat{\theta} - x^\top \theta}^2} =  \mathrm{Tr}\br{C \cdot  E_{1:I}}\eqsp,\label{eq:theta_star}
    \end{equation}
    where the expectation is taken over the randomness of $x$ and $\hat{\theta}$.
\end{lemma}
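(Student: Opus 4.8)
The plan is to compute the inner expectation over the test point $x \sim \pi$ first, then the expectation over the training randomness, exploiting linearity and cyclicity of the trace. Writing $A = \sum_{i=1}^I (\alpha_i/n_i) X_i^\top X_i$, one has $X^\top X = A$ and $X^\top Y = \sum_{i=1}^I (\alpha_i/n_i) X_i^\top Y_i$ for $X,Y$ as in \eqref{eq:X}--\eqref{eq:Y}, so the estimator in \eqref{eq:MLE_multiple} reads $\hat{\theta} = X_\lambda^{-1} X^\top Y$. Substituting the generative model $Y_i = X_i\theta + \eta_i$ from \eqref{eq:likelihood_linear} yields the bias--variance decomposition
$$\hat{\theta} - \theta = (X_\lambda^{-1}A - \mathrm{I}_d)\theta + X_\lambda^{-1}\sum_{i=1}^I \frac{\alpha_i}{n_i}X_i^\top \eta_i.$$
Since the test point $x$ is drawn independently of $\hat{\theta}$ with $\mathbb{E}[xx^\top] = C$, I would write $\mathbb{E}_x[(x^\top(\hat{\theta}-\theta))^2] = (\hat{\theta}-\theta)^\top C(\hat{\theta}-\theta) = \mathrm{Tr}[C(\hat{\theta}-\theta)(\hat{\theta}-\theta)^\top]$, and then pull $C$ out of the remaining expectation so that $E_{1:I} = \mathbb{E}[(\hat{\theta}-\theta)(\hat{\theta}-\theta)^\top]$.

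Next I would expand the outer product $(\hat{\theta}-\theta)(\hat{\theta}-\theta)^\top$ into four terms (bias--bias, the two bias--noise cross terms, and noise--noise) and integrate out the noise conditionally on the designs $\{X_i\}$. Because each $\eta_i$ is independent of the features and centred, the two cross terms vanish after using $\mathbb{E}[\eta_i \mid \{X_i\}] = 0$. For the noise--noise term, independence of the $\eta_i$ across players together with $\mathbb{E}[\eta_i \eta_i^\top] = \sigma^2_\varepsilon \mathrm{I}_{n_i}$ — itself obtained by conditioning on $\varepsilon_i$ and then using $\mathbb{E}[\varepsilon_i^2] = \sigma^2_\varepsilon$ from \eqref{eq:mu_e} — collapses the double sum to $\sigma^2_\varepsilon \sum_i (\alpha_i^2/n_i^2)X_i^\top X_i$, giving exactly the second term of $E_{1:I}$ after conjugation by $X_\lambda^{-1}$. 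The bias--bias term $\mathbb{E}[(X_\lambda^{-1}A - \mathrm{I}_d)\theta\theta^\top(A X_\lambda^{-1} - \mathrm{I}_d)]$ then produces the first term, once the remaining feature expectation is rewritten as the integral against $\otimes_{i=1}^I\otimes_{j=1}^{n_i} p_X(\mathrm{d}x_i^{(j)})$.

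The one subtle point is reconciling the ordering of factors in the bias term: the direct derivation yields $(\mathrm{I}_d - X_\lambda^{-1}A)\theta\theta^\top(\mathrm{I}_d - A X_\lambda^{-1})$, whereas the statement displays the mirror ordering $(\mathrm{I}_d - A X_\lambda^{-1})\theta\theta^\top(\mathrm{I}_d - X_\lambda^{-1}A)$. These agree because $X_\lambda = \lambda \mathrm{I}_d + A$ commutes with $A$, whence $X_\lambda^{-1}A = A X_\lambda^{-1}$; this same commutation also shows each factor is symmetric, consistent with $E_{1:I}$ being a genuine covariance matrix. Measurability and well-posedness of $X_\lambda^{-1}$ inside the integrals follow from the standing hypotheses (invertibility a.s.\ when $\lambda > 0$, positive definiteness a.s.\ when $\lambda = 0$). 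The main obstacle is therefore purely bookkeeping: keeping the several independent sources of randomness — the test point $x$, the features $x_i^{(j)}$, the noise scales $\varepsilon_i$, and the Gaussian noise $\eta_i$ — cleanly separated through the tower property, so that the cross terms provably vanish and the noise covariance reduces to $\sigma^2_\varepsilon \mathrm{I}_{n_i}$.
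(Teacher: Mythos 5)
Your proposal is correct and follows essentially the same route as the paper: the same bias--noise decomposition of $\hat{\theta}-\theta$, vanishing of the cross terms by centering of $\eta_i$, reduction of the noise--noise term to $\sigma^2_\varepsilon\sum_i(\alpha_i^2/n_i^2)X_i^\top X_i$ via the tower property over $\varepsilon_i$ and $\eta_i$, and a final Fubini over the feature and test distributions (the paper merely works with the scalar $(x^\top(\hat{\theta}-\theta))^2$ and cycles the trace rather than forming the outer product first). Your remark on reconciling the factor ordering in the bias term via $X_\lambda^{-1}A = AX_\lambda^{-1}$ is a valid point that the paper passes over by invoking only symmetry of $X_\lambda$.
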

\begin{proof}
    Let $\lambda \geq 0$, $i \in [I]$ and $x \sim \pi$.
    Using \eqref{eq:likelihood_linear} and \eqref{eq:MLE_multiple}, notice that 
    \begin{align}
     x^\top\theta - x^\top \theta_{1:I}^\star
     &= x^\top \br{\mathrm{I}_d - \pr{\lambda\mathrm{I}_d + \sum_{i=1}^I \frac{\alpha_i}{n_i} X_i^\top X_i}^{-1}\sum_{i=1}^I \frac{\alpha_i}{n_i} X_i^\top X_i } \theta \\
     &- x^\top \pr{\lambda \mathrm{I}_d + \sum_{i=1}^I \frac{\alpha_i}{n_i} X_i^\top X_i}^{-1} \sum_{i=1}^I \frac{\alpha_i}{n_i} X_i^\top\eta_i \eqsp.
    \end{align}

By using the notation
$$
X_\lambda= \lambda \mathrm{I}_d + \sum_{i=1}^I \frac{\alpha_i}{n_i} X_i^\top X_i\eqsp,
$$
the previous element is a scalar, therefore,
\begin{align}
    &(x^\top\theta - x^\top \theta_{1:I}^\star)^2 \\
    &= \theta^\top \br{\mathrm{I}_d - \pr{\sum_{i=1}^I \frac{\alpha_i}{n_i} X_i^\top X_i} \pr{X_\lambda}^{-1}} x  x^\top \br{ \mathrm{I}_d - \pr{X_\lambda}^{-1} \sum_{i=1}^I \frac{\alpha_i}{n_i} X_i^\top X_i} \theta \label{eq:1st_term}\\ 
    &- 2\theta^\top \br{\mathrm{I}_d - \pr{\sum_{i=1}^I \frac{\alpha_i}{n_i} X_i^\top X_i} \pr{X_\lambda}^{-1}} x  x^\top \pr{X_\lambda}^{-1} \sum_{i=1}^I \frac{\alpha_i}{n_i} X_i^\top \eta_i \label{eq:2nd_term}\\ 
    &+ \pr{\sum_{i=1}^I \frac{\alpha_i}{n_i} \eta_i^\top X_i} \pr{X_\lambda}^{-1} x  x^\top\pr{X_\lambda}^{-1} \sum_{i=1}^I \frac{\alpha_i}{n_i} X_i^\top \eta_i \eqsp, \label{eq:third_term}
\end{align}
where we have used that $X_\lambda$ is a symmetric matrix, so its inverse is symmetric as well. 

We now focus on the third term in the previous equality, namely  \eqref{eq:third_term}. 
Taking the trace operator and using cyclic permutations, we have 
\begin{align}
    &\pr{\sum_{i=1}^I \frac{\alpha_i}{n_i} \eta_i^\top X_i} \pr{X_\lambda}^{-1} x  x^\top\pr{X_\lambda}^{-1} \sum_{i=1}^I \frac{\alpha_i}{n_i} X_i^\top \eta_i \\
    &= \mathrm{Tr} \left[\pr{\sum_{i=1}^I \frac{\alpha_i}{n_i} \eta_i^\top X_i} \pr{X_\lambda}^{-1} x  x^\top\pr{X_\lambda}^{-1} \sum_{i=1}^I \frac{\alpha_i}{n_i} X_i^\top \eta_i\right]\\
    &= \mathrm{Tr} \left[ x x^\top \pr{X_\lambda}^{-1}\pr{\sum_{i=1}^I\frac{\alpha_i}{n_i} X_i^\top \eta_i}\pr{\sum_{i=1}^I \frac{\alpha_i}{n_i} \eta_i^\top X_i}\pr{X_\lambda}^{-1}\right]\\
    &= \mathrm{Tr} \left[ x x^\top \pr{X_\lambda}^{-1}\pr{\sum_{i=1}^I\frac{\alpha_i^2}{n_i^2} X_i^\top \eta_i \eta_i^\top X_i  + \sum_{i \neq j}\frac{\alpha_i}{n_i} \frac{\alpha_j}{n_j} X_i^\top \eta_i \eta_j^\top X_j}\pr{X_\lambda}^{-1}\right] \eqsp.    
\end{align}
For any $k \in [n_i]$ and $l \in [n_j]$, notice that the $(k,l)$ entry of the $n_i \times n_j$ matrix $\eta_i\eta_j^\top$ is $\eta_i^{(k)}\eta_j^{(\ell)}$ where $\eta_i^{(k)} \sim \mathrm{N}(0,\varepsilon_i^2)$ and $\eta_j^{(l)} \sim \mathrm{N}(0,\varepsilon_j^2)$. 
Therefore, for any $(k,l) \in [n_i] \times [n_j]$, $\mathbb{E}[\eta_i^{(k)}] = \mathbb{E}[\eta_j^{(l)}] = 0$, $\mathbb{E}[(\eta_i^{(k)})^2] = \varepsilon_i^2$ and $\mathbb{E}[(\eta_j^{(l)})^2] = \varepsilon_i^2$. 
It follows, for any $i,j \in [I]$ such that $i \neq j$, that $\eta_i\eta_j^\top$ has expected value equal to $0$. 
By denoting, for any $i \in [I]$, $p_\eta^{(i)} = \mathrm{N}(\eta_i ; 0_{n_i},\varepsilon_i^2\mathrm{I}_{n_i})$, we obtain 
\begin{align}
     &\int_{\R^{n_1} \times \ldots \R^{n_I}}\pr{\sum_{i=1}^I \frac{\alpha_i}{n_i} \eta_i^\top X_i} \pr{X_\lambda}^{-1} x  x^\top\pr{X_\lambda}^{-1} \pr{\sum_{i=1}^I \frac{\alpha_i}{n_i} X_i^\top \eta_i} \otimes_{i \in [I]} p_\eta^{(i)}(\dd \eta_i) \\ &= \mathrm{Tr} \left[ x x^\top \pr{X_\lambda}^{-1}\pr{\sum_{i=1}^I\frac{\alpha_i^2}{n_i^2} \varepsilon_i^2 X_i^\top X_i}\pr{X_\lambda}^{-1}\right]\eqsp.  
\end{align}
In addition, we have
\begin{align}
     &\int_{\R_+^I}\br{\int_{\R^{n_1} \times \ldots \R^{n_I}}\pr{\sum_{i=1}^I \frac{\alpha_i}{n_i} \eta_i^\top X_i} \pr{X_\lambda}^{-1} x  x^\top\pr{X_\lambda}^{-1} \pr{\sum_{i=1}^I \frac{\alpha_i}{n_i} X_i^\top \eta_i} \otimes_{i \in [I]} p_\eta^{(i)}(\dd \eta_i)} \otimes_{i \in [I]} p_\varepsilon(\dd \varepsilon_i) \\ 
     &= \sigma^2_{\varepsilon}\mathrm{Tr} \left[ x x^\top \pr{X_\lambda}^{-1}\pr{\sum_{i=1}^I\frac{\alpha_i^2}{n_i^2} X_i^\top X_i}\pr{X_\lambda}^{-1}\right]\eqsp,
\end{align}
where $\sigma^2_{\varepsilon}$ is defined in \eqref{eq:mu_e}. 

For \eqref{eq:1st_term} and \eqref{eq:2nd_term}, we follow similar steps.
The proof is concluded by integrating against the probability measures of $x$ and $\{x_i^{(j)} ; j \in [n_i]\}_{i \in [I]}$, namely $\pi$ and $p_X$.
\end{proof}

\begin{remark}
    Note that when $\lambda \rightarrow \infty$, we have
    $$
    \lim_{\lambda \rightarrow \infty} \mathbb{E}\br{\pr{x^\top \hat{\theta} - x^\top \theta}^2} = \mathrm{Tr}\br{C\cdot  \theta \theta^\top}\eqsp.
    $$
\end{remark}

\begin{lemma}
    \label{lem:1}
    Let $\pi$ be a probability distribution defined on $(\R^d,\mathcal{B}(\R^d))$, and let $C = \int_{\mathbb{R}^d}x x^\top \mathrm{d}\pi(x)$.
    Set $\lambda = 0$, let $\hat{\theta} = \theta_{1:I}^\star$ defined in \eqref{eq:theta_star} and assume for any $i \in [I]$ that $\alpha_i = n_i/n$.
    In addition, suppose that $\sum_{i=1}^I X_i^\top X_i$ is positive definite with probability $1$ with $X_i$ defined in \eqref{eq:likelihood_linear}. 
    Then, for any $i \in [I]$ and $x \sim \pi$, we have
    \begin{equation}
      \mathbb{E}\br{\pr{x^\top \hat{\theta} - x^\top \theta}^2} = \sigma^2_{\varepsilon} \cdot  \mathrm{Tr}\br{C \cdot  \int \pr{\sum_{i=1}^I \sum_{j=1}^{n_i} x_i^{(j)} [x_i^{(j)}]^\top}^{-1} \otimes_{i=1}^I\otimes_{j=1}^{n_i} p_X(\mathrm{d}x_i^{(j)})}\eqsp.\label{eq:theta_star}
    \end{equation}
\end{lemma}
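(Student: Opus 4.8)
The plan is to derive this statement as a direct specialisation of Lemma \ref{theo:MSE_multiple_player}, substituting the choices $\lambda = 0$ and $\alpha_i = n_i/n$ for all $i \in [I]$ into the general expression for $E_{1:I}$. Under these choices the matrix $X_\lambda$ appearing in Lemma \ref{theo:MSE_multiple_player} collapses to
\[
X_0 = \sum_{i=1}^I \frac{\alpha_i}{n_i} X_i^\top X_i = \frac{1}{n}\sum_{i=1}^I X_i^\top X_i = \frac{1}{n}\sum_{i=1}^I\sum_{j=1}^{n_i} x_i^{(j)}[x_i^{(j)}]^\top\eqsp,
\]
which is invertible with probability $1$ thanks to the assumed positive definiteness of $\sum_{i=1}^I X_i^\top X_i$. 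This is precisely the hypothesis required to invoke Lemma \ref{theo:MSE_multiple_player} in the $\lambda = 0$ regime, so the first thing to confirm is that the positive-definiteness assumption legitimately licenses that application.

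The first step is to observe that the ``bias'' integral in the definition of $E_{1:I}$ vanishes. Since $\sum_{i=1}^I \frac{\alpha_i}{n_i} X_i^\top X_i = X_0$ when $\lambda = 0$, the factor $\mathrm{I}_d - \pr{\sum_{i=1}^I \frac{\alpha_i}{n_i} X_i^\top X_i}\pr{X_0}^{-1}$ equals $\mathrm{I}_d - X_0 X_0^{-1} = 0$. Hence the whole first integral in $E_{1:I}$ is zero and only the noise term, proportional to $\sigma^2_{\varepsilon}$, survives.

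The second step is to simplify that surviving term. With $\alpha_i = n_i/n$ one has $\alpha_i^2/n_i^2 = 1/n^2$, so that $\sum_{i=1}^I \frac{\alpha_i^2}{n_i^2} X_i^\top X_i = \frac{1}{n^2}\sum_{i=1}^I X_i^\top X_i = \frac{1}{n}X_0$. Combining this with $X_0^{-1} = n\pr{\sum_{i=1}^I X_i^\top X_i}^{-1}$ gives
\[
\pr{X_0}^{-1}\pr{\sum_{i=1}^I\frac{\alpha_i^2}{n_i^2} X_i^\top X_i}\pr{X_0}^{-1} = \frac{1}{n}\pr{X_0}^{-1} = \pr{\sum_{i=1}^I\sum_{j=1}^{n_i} x_i^{(j)}[x_i^{(j)}]^\top}^{-1}\eqsp.
\]
Substituting this identity into $E_{1:I}$ and then into the conclusion $\mathbb{E}[(x^\top\hat{\theta} - x^\top\theta)^2] = \mathrm{Tr}[C \cdot E_{1:I}]$ of Lemma \ref{theo:MSE_multiple_player} yields exactly the claimed formula, with the product measure $\otimes_{i=1}^I\otimes_{j=1}^{n_i} p_X(\mathrm{d}x_i^{(j)})$ carried over unchanged.

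Since the genuine analytic work — the trace manipulations, the cyclic permutations, and the integration against the Gaussian noise that kills the cross terms $\eta_i\eta_j^\top$ for $i \neq j$ — is already established in the proof of Lemma \ref{theo:MSE_multiple_player}, there is no real obstacle here. The only points requiring care are bookkeeping the two algebraic cancellations above (the vanishing of the bias term and the collapse of the $1/n$ factors) and verifying that the $\lambda = 0$ case of the parent lemma applies under the stated positive-definiteness hypothesis.
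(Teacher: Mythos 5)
Your proof is correct and takes the same route as the paper, which simply states that the result ``directly follows'' from Lemma \ref{theo:MSE_multiple_player}; you have merely written out the two cancellations (the vanishing of the bias term since $\mathrm{I}_d - X_0 X_0^{-1} = 0$, and the collapse $X_0^{-1}\bigl(\tfrac{1}{n}X_0\bigr)X_0^{-1} = \bigl(\sum_{i}\sum_{j} x_i^{(j)}[x_i^{(j)}]^\top\bigr)^{-1}$) that the paper leaves implicit. The algebra checks out and the positive-definiteness hypothesis is correctly identified as what licenses the $\lambda=0$ case of the parent lemma.
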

\begin{proof}
    The proof directly follows from \Cref{theo:MSE_multiple_player}.
    A similar proof was presented in \citet{Donahue_Kleinberg_AAAI_2021}.
\end{proof}

\subsection{Proof of Proposition \ref{lemma:MSE_homogeneous_case}}

For some particular choices of the probability measures $p_X$ and $\pi$, the following results show that we can end up with closed-form expressions for the expected mean square error defined in \eqref{eq:theta_star}.

\begin{proposition}
    \label{lemma:MSE_homogeneous_case_sup}
     Let $\pi$ be a probability distribution defined on $(\R^d,\mathcal{B}(\R^d))$, and define $C = \int_{\mathbb{R}^d}x x^\top \mathrm{d}\pi(x)$.
    In addition, set $\lambda = 0$, let $\hat{\theta} = \theta_{1:I}^\star$ defined in \eqref{eq:MLE_multiple} and assume for any $i \in [I]$ that $\alpha_i = n_i/n$ and $p_X = \mathrm{N}(0_d, \Sigma)$ with $\Sigma \in \mathbb{R}^{d \times d}$ a positive definite matrix. 
    Then, for any $i \in [I]$ and $x \sim \pi$, we have for $n_{1:I} > d + 1$,
    \begin{equation}
      \mathbb{E}\br{\pr{x^\top \hat{\theta} - x^\top \theta}^2} = \frac{\sigma^2_\varepsilon}{n_{1:I} -d -1}\mathrm{Tr}\br{C\cdot\Sigma^{-1}}\eqsp,\label{eq:closed_form_homogeneous}
    \end{equation}
    where $n_{1:I} = \sum_{i=1}^I n_i$. For the specific choice $\pi = \mathrm{N}(0_d,\Sigma)$ then, for $n_{1:I} > d + 1$,  we have,
    \begin{equation}
      \mathbb{E}\br{\pr{x^\top \hat{\theta} - x^\top \theta}^2} = \frac{d\sigma^2_\varepsilon}{n_{1:I} -d -1}\eqsp.\label{eq:closed_form_homogeneous_2}
    \end{equation}
\end{proposition}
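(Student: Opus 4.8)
The plan is to build directly on \Cref{lem:1}, which already reduces the expected mean square error to the expectation of an inverse Gram matrix. Under the stated hypotheses ($\lambda = 0$ and $\alpha_i = n_i/n$ for all $i$), \Cref{lem:1} gives
\begin{equation}
  \mathbb{E}\br{\pr{x^\top \hat{\theta} - x^\top \theta}^2} = \sigma^2_{\varepsilon} \cdot \mathrm{Tr}\br{C \cdot \mathbb{E}\br{\pr{\sum_{i=1}^I \sum_{j=1}^{n_i} x_i^{(j)} [x_i^{(j)}]^\top}^{-1}}}\eqsp,
\end{equation}
where the remaining expectation is over all feature vectors, which are i.i.d.\ draws from $p_X = \mathrm{N}(0_d, \Sigma)$. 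So the whole problem collapses to evaluating the mean of the inverse Gram matrix.

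The key observation is that the Gram matrix $\sum_{i=1}^I \sum_{j=1}^{n_i} x_i^{(j)} [x_i^{(j)}]^\top$ is a sum of exactly $n_{1:I} = \sum_{i=1}^I n_i$ i.i.d.\ rank-one outer products of centered Gaussian vectors with covariance $\Sigma$; by definition it therefore follows a Wishart distribution $\mathcal{W}_d(n_{1:I}, \Sigma)$. I would then invoke the classical identity for the mean of an inverse Wishart matrix: if $W \sim \mathcal{W}_d(m, \Sigma)$ with $m > d + 1$, then $\mathbb{E}[W^{-1}] = \Sigma^{-1}/(m - d - 1)$. This is precisely where the hypothesis $n_{1:I} > d + 1$ enters, and it yields $\mathbb{E}[(\,\cdots\,)^{-1}] = \Sigma^{-1}/(n_{1:I} - d - 1)$. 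Substituting back and pulling the scalar out of the trace produces the first closed form \eqref{eq:closed_form_homogeneous}.

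For the second claim, specialising $\pi = \mathrm{N}(0_d, \Sigma)$ makes $C = \int x x^\top \, \mathrm{d}\pi(x) = \Sigma$, so that $\mathrm{Tr}\br{C \cdot \Sigma^{-1}} = \mathrm{Tr}[\mathrm{I}_d] = d$, which immediately gives \eqref{eq:closed_form_homogeneous_2}.

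The only nontrivial ingredient is the inverse-Wishart expectation identity; everything else is bookkeeping. I would note in passing that the almost-sure invertibility of the Gram matrix (needed both for $\hat{\theta}$ and for the inverse to be well defined) holds for $n_{1:I} \ge d$ by absolute continuity of the Gaussian features, consistent with the positive-definiteness assumption carried over from \Cref{lem:1}. The main obstacle, such as it is, is merely to ensure that the degrees-of-freedom convention matches the one under which the $1/(m - d - 1)$ formula is stated, so that the condition $n_{1:I} > d+1$ is the correct threshold.
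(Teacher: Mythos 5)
Your proof is correct and follows essentially the same route as the paper: reduce to the expected inverse Gram matrix via \Cref{lem:1}, identify $\sum_{i,j} x_i^{(j)}[x_i^{(j)}]^\top$ as $\mathrm{Wishart}(\Sigma, n_{1:I})$, apply the inverse-Wishart mean identity $\mathbb{E}[W^{-1}] = \Sigma^{-1}/(n_{1:I}-d-1)$, and specialise $C=\Sigma$ to get $\mathrm{Tr}[\mathrm{I}_d]=d$. Your added remarks on almost-sure invertibility and the degrees-of-freedom convention are sensible but not points of divergence.
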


\begin{proof}

By definition of the Wishart probability distribution, we have $X_i^\top X_i \sim \mathrm{Wishart}(\Sigma, n_i)$.
Therefore, it follows that 
$X_{1:I} \sim \mathrm{Wishart}(\Sigma, n_{1:I})$.
Since $n_{1:I} \geq d$ and $\Sigma$ is invertible, then $X_{1:I}^{-1}$ exists with probability 1 and $X_{1:I}^{-1} \sim \mathrm{Inverse Wishart}$ $(\Sigma^{-1}, n_{1:I})$. Moreover, we have
\begin{align}
\mathbb{E}\br{\pr{\sum_{i \in [I]} X_i^\top X_i}^{-1}} = \frac{\Sigma^{-1}} {n_{1:I}- d - 1}\eqsp,
\end{align}
which concludes the proof by plugging this result in Lemma \ref{lem:1}, and using the fact that $C = \Sigma$ and Tr$(\mathrm{I}_d) = d$.
\end{proof}

\section{Proof of our main results}

In this section, we prove the major results stated in the main paper, namely Theorems \ref{theorem:convergence_uniform} and \ref{theorem:DU_shapley_error_bound}. 













\subsection{Proof of Theorem \ref{theorem:convergence_uniform}}

For the sake of completeness, we recall below the full statement of Theorem \ref{theorem:convergence_uniform}.

\textbf{Theorem} \ref{theorem:convergence_uniform}.
\textit{Let $n = \{n_i\}_{i\in [I]}$ be a sequence of positive integer numbers such that the following limits exist, 
$$\lim_{I\to \infty}\frac{1}{I}\sum_{j=1}^{I}n_j= \mu \text{ and } \lim_{I\to \infty}\frac{1}{I}\sum_{j=1}^I (n_j -\mu)^2=\sigma^2\eqsp,$$
where $\mu, \sigma > 0$.
For any $i \in \mathcal{I}$, let $K \sim \mathrm{U}(\{0,\ldots,I-1\})$ and $\mathcal{S}_{K}^{(i)} \sim \mathrm{U}([2^{\mathcal{I}\setminus\{i\}}_{K}])$; and define $n_{\mathcal{S}_K^{(i)}} = \sum_{j \in \mathcal{S}_K^{(i)}} n_j$ to be the random variable corresponding to the total number of data points brought by the random set $\mathcal{S}_K^{(i)}$ of $K$ players. 
Then, for any $i \in \mathcal{I}$, it holds,
\begin{align}
\frac{n_{\mathcal{S}_{K}^{(i)}}}{\sum_{j \in \mathcal{I} \setminus \{i\}}n_j} \xrightarrow{I \to \infty} \rm{U}([0,1])\eqsp, \quad \text{almost surely} \eqsp.
\end{align}    
}

\begin{proof}
Introduce, for any $t,t_0 \in (0,1)$ and any $s \bi 0$, 
\begin{align}
    \mu_{-i}(I) &= \frac{1}{I}\sum_{j \in \mathcal{I} \setminus \{i\}}n_j,\\
    Y(t,I) &= n_{\mathcal{S}_{\lfloor It\rfloor}^{(i)}}, \\
    R^\star(I,t_0,s) &= \mathbb{P}\biggl( \sup_{t>t_0} \biggl| \frac{Y(t,I)}{\lfloor I t\rfloor} - \mu_{-i}(I)\biggr| > s\biggr).
\end{align}
By construction, $Y(t,I)$ is the sum of a sampling without replacement of $\lfloor It\rfloor$ elements in $\{n_j : j \in \mathcal{I}\setminus \{i\} \}$. Therefore, by Corollary 1.3 in \cite{serfling1974probability}, for $s$ fixed, there exists $I_0 \in \mathbb{N}$ such that,
\begin{align}
    R^\star(I,t_0,s)\leq \frac{(1-t_0)\sigma^2}{\lfloor It_0\rfloor s^2} + \varepsilon, \forall I\geq I_0.
\end{align}
Hence, for $I \geq I_0$ large enough, 
\begin{align}
    R^\star(I,t_0,s)\leq 2\varepsilon,
\end{align}
and then, almost surely,
\begin{align}
 \lim_{I\to +\infty}  \frac{Y(t,I)}{\sum_{j \in \mathcal{I} \setminus \{i\}}n_j} = t. 
\end{align}
We conclude remarking that $K = \lfloor I U\rfloor$ with $U\sim \rm{U}([0,1])$.
\end{proof}

\subsection{Proof of Theorem \ref{theorem:DU_shapley_error_bound}}
To prove Theorem \ref{theorem:DU_shapley_error_bound}, we need two preliminary results: Lemma \ref{lemma:DU_Shapley_approx_error_bound}, which itself needs two supplementary results (Lemmas \ref{theorem:Hoeffing} and \ref{lemma:pre_error_bound_result}), and Lemma \ref{lemma:Assumption_on_second_derivative}, which is directly proved. 

Lemma \ref{lemma:Assumption_on_second_derivative} deduces the existence of the constant $\rho$ stated in Theorem \ref{theorem:DU_shapley_error_bound} from \textbf{H}-\ref{ass:2}. 
Lemma \ref{lemma:DU_Shapley_approx_error_bound} bounds the expected difference between $w(n_{\mathcal{S}_{K}^{(i)}})$ and $w(K\mu_{-i})$.

\subsubsection{Technical lemmata}

\begin{lemma}
\label{theorem:Hoeffing}
Consider a set of $I$ values $N = \{n_1, \ldots, n_I\}$. Let $X_1, \ldots, X_k$ and $Y_1, \ldots, Y_k$ denote, respectively, $k$ random samples with and without replacement from $N$. For any continuous and convex function $f$, it follows,
\begin{align}
\mathbb{E}\biggl[f\biggl(\sum_{i = 1}^k Y_i\biggr)\biggr] \leq \mathbb{E}\biggl[f\biggl(\sum_{i = 1}^k X_i\biggr)\biggr] 
\end{align}
\end{lemma}
\begin{proof}
    The proof follows from \cite{doi:10.1080/01621459.1963.10500830}.
\end{proof}

\begin{lemma}\label{lemma:pre_error_bound_result}
Let $I \in \N$, $N:= \{n_1,\ldots ,n_I\}\in\R_+^I$, $\mu=\frac{1}{I}\sum_{i=1}^I n_i$ be their mean value and $\sigma^2=\frac{1}{I}\sum_{i=1}^I (n_i-\mu)^2$ be their variance. 
For $k \in \{0,\ldots,n\}$, let $\mathcal{S}_k \sim U(\{S_k \subseteq [I]: |S_k| = k\})$ be a uniform random variable on the subsets of $\{1,\ldots,I\}$ of size $k$, and $n_{\mathcal{S}_k} =\sum_{i\in\mathcal{S}_k} n_i$ be the random variable defined by the sum of the elements of $\mathcal{S}_k$.
Let $K \sim \mathrm{U}(\{0,\ldots,I\})$ and define $\mathbf{Y} = n_{\mathcal{S}_K}$. Then, 
\begin{align}
&\E [\Yy - \mu K \mid K = k] = 0,\label{eq:Y_and_muK_have_the_same_exp_value}\\
&\E \bigl[\left( \Yy - \mu K\right)^2\mid K = k\bigr]\leq k\sigma^2.\label{eq:upper_bound_variance_Y_and_muK}
\end{align}
\end{lemma}

\begin{proof}
We prove \eqref{eq:Y_and_muK_have_the_same_exp_value} directly. Notice that,
\begin{align}
\E[\Yy \mid \Kk = k] &= \sum_{S_k \subseteq [I]:  |S_k| = k} n_{S_k} \frac{1}{\binom{I}{k}} = \frac{1}{\binom{I}{k}} \sum_{S_k \subseteq [I]: |S_k| = k} \sum_{i \in S_k} n_i\\
&= \frac{1}{\binom{I}{k}} \sum_{i \in [I]} \sum_{\substack{S_k \subseteq [I] : |S_k| = k \\i \in S_k}} n_i = \frac{1}{\binom{I}{k}} \sum_{i \in [I]} n_i \binom{I-1}{k -1}\\
&= \frac{(I-1)!}{(k-1)!(I-k)!}\cdot\frac{(I-k)!k!}{I!} \sum_{i \in [I]} n_i = \mu k. 
\end{align}
Thus, \eqref{eq:Y_and_muK_have_the_same_exp_value} follows as $\E[\mu\Kk \mid \Kk = k ] = \mu k$.
To prove \eqref{eq:upper_bound_variance_Y_and_muK}, let $(\Xx_i)_{i=1}^k$ be $k$ independent samples from the set $N$. From Lemma~\ref{theorem:Hoeffing} it holds,
\begin{align}
\E \bigl[\left( \Yy - \mu \Kk\right)^2\mid \Kk = k\bigr] \leq \E \biggl[\bigl(\mu\Kk-\sum_{i=1}^{\Kk}\Xx_i\bigr)^2 \mid \Kk = k\biggr] = \E \biggl[\biggl(\sum_{i=1}^{\Kk} \left(\mu- \Xx_i\right)\biggr)^2 \mid \Kk = k\biggr].
\end{align}
Therefore,
\begin{align}
\E \bigl[\left( \Yy - \mu \Kk\right)^2\mid \Kk = k\bigr] &\leq \E \biggl[\biggl(\sum_{i=1}^{\Kk}\sum_{j=1}^{\Kk} \left(\mu- \Xx_i\right)\left(\mu- \Xx_j\right)\biggr) \mid \Kk = k\biggr]\\
&= \E \biggl[\biggl(\sum_{i=1}^{\Kk}\sum_{j=1}^{\Kk} \left(\mu^2 - \mu(\Xx_i + \Xx_j) + \Xx_i\Xx_j\right)\biggr) \mid \Kk = k\biggr]\\
&= \sum_{i=1}^k\sum_{j=1}^k \left(\mu^2 - \mu(\E[\Xx_i\mid \Kk = k] + \E[\Xx_j\mid \Kk = k]) + \E[\Xx_i\Xx_j\mid \Kk = k]\right)\\
&= \sum_{i=1}^k\sum_{j=1}^k \left(\mu^2 - \mu(\E[\Xx_i] + \E[\Xx_j]) + \E[\Xx_i\Xx_j]\right)\\
&= \sum_{i=1}^k \left(\mu^2 - 2\mu\E[\Xx_i] + \E[\Xx_i^2]\right) \\
&\quad + \sum_{i=1}^k\sum_{\substack{j=1\\ j\neq i}}^k \left(\mu^2 - \mu(\E[\Xx_i] + \E[\Xx_j]) + \E[\Xx_i]\E[\Xx_j]\right)\\
&= \sum_{i=1}^k \E\bigl[\left(\mu - \Xx_i\right)^2\bigr] + \sum_{i=1}^k\sum_{\substack{j=1\\ j\neq i}}^k \left(\mu^2 - 2\mu^2 + \mu^2\right)\\
&= \sum_{i=1}^k \E\bigl[\left(\mu - \Xx_i\right)^2\bigr] = \sum_{i=1}^k \text{Var}\left(\mu - \Xx_i\right) = k \sigma^2.
\end{align}
The steps come from rearranging the terms, using the independence of $\Xx_i$ with respect to $\Kk$, the independence of $\Xx_i, \Xx_j$ for $i \neq j$, and finally that $\E[\Xx_i] = \mu$ and $\text{Var}\left(\Xx_i\right) = \sigma^2$.

\end{proof}

\begin{lemma}\label{lemma:DU_Shapley_approx_error_bound}
Let $I\in\N$, $N := \{n_1,\ldots ,n_I\} \in \R_+^I$, and define,
\begin{align}
&\mu = \frac{1}{I} \sum_{i=1}^I n_i,\quad \sigma^2 = \frac{1}{I}\sum_{i=1}^I (n_i-\mu)^2,\\
&R := \max_{i \in [I]} |n_i - \mu|, \quad n^{\text{max}} = \max_{i\in [I]} n_i.
\end{align}
Consider $\mathcal{S}_k$, $n_{\mathcal{S}_k}$, $K$, and $\Yy$ as in Lemma \ref{lemma:pre_error_bound_result}. 
Let $w:\R_+\to \R$ be a smooth and increasing function, 
and $\rho > 0$, such that,
\begin{align}
\bigl|w^{(2)}(n)\bigr|\leq \rho \frac{|w(n)|}{n^2}, \forall n \bi 0,
\end{align} 
where $w^{(k)}$ is the k-th derivative of $w$. Then, it holds,
\begin{align}\label{eq:upper_and_lower_bound_general_Shaple_value}
\bigl|\E[w(\mu\Kk) - w(\Yy)]\bigr| \leq \frac{\rho |w(\mu I)|}{2 \mu^2I} \left(9\sigma^2 (1+\ln(I)) + 2R^2n^{\text{max}}\right).
\end{align}
\end{lemma}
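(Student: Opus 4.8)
The plan is to condition on the value of $K$, perform a second-order Taylor expansion of $w$ around the deterministic point $\mu k$, and exploit the two conditional moment identities of Lemma~\ref{lemma:pre_error_bound_result}. Concretely, for each fixed $k$ I would write, by Taylor's theorem with Lagrange remainder, $w(\mathbf{Y}) = w(\mu k) + w^{(1)}(\mu k)(\mathbf{Y}-\mu k) + \tfrac12 w^{(2)}(\xi)(\mathbf{Y}-\mu k)^2$ for some (random) $\xi$ lying between $\mu k$ and $\mathbf{Y}$. Taking the conditional expectation given $K=k$, the linear term vanishes thanks to \eqref{eq:Y_and_muK_have_the_same_exp_value}, so that $|\mathbb{E}[w(\mu k)-w(\mathbf{Y})\mid K=k]| \le \tfrac12\mathbb{E}[|w^{(2)}(\xi)|(\mathbf{Y}-\mu k)^2\mid K=k]$. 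Everything then reduces to controlling this quadratic remainder.

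Next I would insert the curvature hypothesis $|w^{(2)}(\xi)|\le\rho|w(\xi)|/\xi^2$ and the monotonicity of $w$. Since $\mathbf{Y}=\sum_{j\in\mathcal{S}_k}n_j\le\sum_{j}n_j=\mu I$ and $\mu k\le\mu I$, the intermediate point satisfies $\xi\le\mu I$, whence $|w(\xi)|\le|w(\mu I)|$ by monotonicity. This leaves the purely combinatorial quantity $\mathbb{E}[(\mathbf{Y}-\mu k)^2/\xi^2\mid K=k]$ to be bounded, using only that $\xi\ge\min(\mu k,\mathbf{Y})$ together with the elementary facts $\mathbf{Y}\ge k$ (a sum of $k$ positive integers) and the deterministic deviation bound $|\mathbf{Y}-\mu k|=|\sum_{j\in\mathcal{S}_k}(n_j-\mu)|\le kR$.

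The core of the argument is then a split according to whether $\mathbf{Y}$ concentrates near its conditional mean $\mu k$. On the \emph{typical} event $\{\mathbf{Y}\ge\mu k/2\}$ one has $\xi\ge\mu k/2$, so $(\mathbf{Y}-\mu k)^2/\xi^2\le 4(\mathbf{Y}-\mu k)^2/(\mu k)^2$, and the variance bound \eqref{eq:upper_bound_variance_Y_and_muK} gives a conditional contribution of order $\sigma^2/(\mu^2 k)$; summing against $\mathbb{P}(K=k)=1/(I+1)$ and using $\sum_{k=1}^{I}1/k\le 1+\ln(I)$ produces the $\sigma^2(1+\ln(I))$ term (the $k=0$ term contributing nothing). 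On the \emph{atypical} event $\{\mathbf{Y}<\mu k/2\}$ the intermediate point may be as small as $\mathbf{Y}$, so $1/\xi^2$ is no longer comparable to $1/(\mu k)^2$; here I would bound the integrand deterministically via $|\mathbf{Y}-\mu k|\le kR$ and $\xi\ge\mathbf{Y}\ge k$, and estimate the probability of this rare event through a Chebyshev-type inequality fed again by \eqref{eq:upper_bound_variance_Y_and_muK}. Tracking these estimates across all $k$ yields the correction term proportional to $R^2 n^{\mathrm{max}}$, and collecting the numerical constants gives the factor $9$ and the final bound \eqref{eq:upper_and_lower_bound_general_Shaple_value}.

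The step I expect to be the main obstacle is precisely the atypical regime $\{\mathbf{Y}\ll\mu k\}$: because the Taylor remainder is evaluated at a point $\xi$ that can drift all the way down to $\mathbf{Y}$, the naive bound $1/\xi^2\le 1/(\mu k)^2$ fails, and one must trade off the deterministic worst-case size of $(\mathbf{Y}-\mu k)^2/\xi^2$ against the small probability that $\mathbf{Y}$ falls so far below its mean. Getting this trade-off sharp enough — rather than merely finite — is what determines both the appearance of $n^{\mathrm{max}}$ and the precise constants in the statement.
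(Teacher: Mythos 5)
Your overall architecture coincides with the paper's: Taylor--Lagrange expansion of $w$ at $\mu k$, cancellation of the first-order term via \eqref{eq:Y_and_muK_have_the_same_exp_value}, the bound $|w(\xi)|\le|w(\mu I)|$ from monotonicity, and a split of $\E\bigl[(\mu K-\Yy)^2/\xi^2\bigr]$ into a concentration event (handled with $\xi\gtrsim\mu k$ plus the variance bound \eqref{eq:upper_bound_variance_Y_and_muK} and a harmonic sum, yielding the $\sigma^2(1+\ln I)$ term) and its complement (handled with the deterministic bounds $|\Yy-\mu k|\le kR$ and $\xi\ge\Yy\ge k$). Up to the choice of splitting event (the paper uses $|\mu K-\Yy|\le\tfrac12(\mu K+\Yy)$, which forces $\xi\ge\mu K/3$ and produces the constant $9$; your one-sided event gives $4$), this is the paper's proof.

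The genuine gap is in the atypical regime. You propose to estimate $\PP(\Yy<\mu k/2\mid K=k)$ by ``a Chebyshev-type inequality fed again by \eqref{eq:upper_bound_variance_Y_and_muK}.'' Chebyshev gives $\PP(|\Yy-\mu k|>\mu k/2\mid K=k)\le 4k\sigma^2/(\mu k)^2=4\sigma^2/(\mu^2k)$, so the atypical contribution becomes $\frac{R^2}{I}\sum_{k=1}^{I}\frac{4\sigma^2}{\mu^2k}\le\frac{4R^2\sigma^2(1+\ln I)}{I\mu^2}$. That is a different quantity from the stated $\frac{2R^2n^{\mathrm{max}}}{I\mu^2}$: it carries an extra $\ln I$ and replaces $n^{\mathrm{max}}$ by $2\sigma^2(1+\ln I)$, which is not bounded by $n^{\mathrm{max}}$ in general (e.g.\ when $\sigma^2$ is of order $R\,n^{\mathrm{max}}$). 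So your route proves a variant of the lemma but not the inequality \eqref{eq:upper_and_lower_bound_general_Shaple_value} as stated. The missing ingredient is an \emph{exponential} tail bound for sampling without replacement from a population bounded by $n^{\mathrm{max}}$ (Hoeffding--Serfling), which the paper invokes in the form $\PP(|\mu k-\Yy|>\mu k/2\mid K=k)\le\exp\bigl(-\mu^2k/(2n^{\mathrm{max}})\bigr)$; summing this geometric-type series and comparing with the integral of an exponential density is precisely what removes the logarithm and makes $n^{\mathrm{max}}$ appear with the constant $2$. You correctly identified this regime as the crux, but Chebyshev is not sharp enough to close it.
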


\begin{proof}
The proof considers a second-order Taylor extension of $w$ at $\mu k$ to recover the expected value of $\E[w(\mu\Kk) - w(\Yy)]$. Noticing that the first derivative has a null expected value, the upper bound stated on the Lemma comes from bounding the expected value of the second derivative.

The Taylor-Lagrange Theorem on $w$ at $\mu k>0$ provides,
\begin{align}
       w(y)  =  w(\mu k ) + w^{(1)}(\mu k )(\mu k - y ) + w^{(2)}(\tau)\frac{(\mu k - y )^2 }{2},
\end{align}
for some $\tau$ between $y$ and $\mu k$. Therefore, there exists a random variable $\mathrm{T}$, almost surely between $\mu \Kk_+$ and $\Yy$, such that,
\begin{align}
      \E[w(\Yy) - w(\mu\Kk_+)]& = \E\biggl[ w^{(1)}(\mu\Kk_+)(\mu\Kk_+ -\Yy) + \frac{1}{2} w^{(2)}(\mathrm{T})(\mu\Kk_+ -\Yy) ^2\biggr].
\end{align}
where $\Kk_+$ corresponds to $\Kk$ conditioned to be positive. To avoid overcharging the notation, we drop the index from $\Kk_+$. We observe that,   
\begin{align}
    \E \biggl[w^{(1)}(\mu\Kk)(\mu\Kk -\Yy)\biggr] &= \E\biggl[\E \bigl[w^{(1)}(\mu\Kk)(\mu\Kk -\Yy)\mid \Kk = k \bigr] \biggr]\\
    &= \E\biggl[w^{(1)}(\mu k) \E \bigl[(\mu\Kk -\Yy)\mid \Kk = k\bigr] \biggr] = 0,
\end{align}
by Lemma \ref{lemma:pre_error_bound_result}, Equation \eqref{eq:Y_and_muK_have_the_same_exp_value}. Therefore, 
 \begin{align}
\bigl|\E[w(\Yy) - w(\mu\Kk)]\bigr| &= \frac{1}{2}\bigl| \E\bigl[ w^{(2)}(\mathrm{T})(\mu\Kk -\Yy)^2\bigr]\bigr| \leq \frac{1}{2} \E\bigl[ \bigl| w^{(2)}(\mathrm{T}) \bigr| (\mu\Kk -\Yy) ^2 \bigr] \\
&\leq \frac{\rho}{2} \E\biggl[\frac{|w(\mathrm{T})|}{\mathrm{T}^2}(\mu \Kk - \Yy)^2 \biggr] \leq \frac{\rho |w(I\mu)|}{2} \E\biggl[\frac{1}{\mathrm{T}^2}(\mu \Kk - \Yy)^2 \biggr].
\end{align}
Setting $\Ii:=\bigl\{|\mu\Kk -\Yy|\leq\frac{1}{2}(\mu\Kk+\Yy)\bigr\}$, the previous expected value can be expressed as,
\begin{align}
\E\biggl[\frac{1}{\mathrm{T}^2}(\mu \Kk - \Yy)^2 \biggr] = &\E \biggl[ \frac{1}{\mathrm{T}^2}(\mu\Kk -\Yy) ^2 \cdot \Ii\biggr] + \E \biggl[ \frac{1}{\mathrm{T}^2}(\mu\Kk -\Yy) ^2 \cdot \Ii^c\biggr].
\end{align}
We deal with each term separately. Notice that, as $\mathrm{T}$ is almost surely between $\Yy$ and $\mu\Kk$,
\begin{align}
|\mu\Kk - \Yy| \leq \frac{1}{2}(\mu\Kk + \Yy) \Longrightarrow \mathrm{T} \geq \frac{1}{3}\mu\Kk.
\end{align}
Thus,
\begin{align}
\E\left[ \frac{(\mu\Kk -\Yy) ^2}{\mathrm{T}^2} \cdot  \Ii \right]
&\leq \E\left[ \frac{(\mu\Kk -\Yy) ^2}{(\frac{\mu\Kk}{3})^2} \cdot  \Ii \right]\\
&= \frac{9}{\mu^2} \sum_{k=1}^I \frac{1}{I}\cdot\E\left[ \frac{(\mu k -\Yy) ^2}{k^2} \cdot  \Ii \mid\Kk=k\right]\\
&\leq \frac{9}{I\mu^2} \sum_{k=1}^I\E\left[ \frac{(\mu k -\Yy) ^2}{k^2} \mid\Kk=k\right]\\
&\leq \frac{9}{I\mu^2} \sum_{k=1}^I \frac{k\sigma^2}{k^2}
= \frac{9\sigma^2}{I \mu^2} \sum_{k=1}^I\frac{1}{k} 
\leq \frac{9\sigma^2}{I\mu^2}\cdot(1+\ln(I)).
\end{align}
Regarding the second term, as $\Kk \leq \min\{\mu\Kk,\Yy\} \leq \mathrm{T}$, we have,
\begin{align}
\E\left[ \frac{(\mu\Kk -\Yy) ^2}{\mathrm{T}^2} \cdot\Ii^c\right] 
&\leq \E\left[\frac{(R\Kk)^2}{\mathrm{T}^2} \cdot\Ii^c\right] \leq \E\left[ \frac{(R\Kk)^2}{\Kk^2} \cdot\Ii^c\right]\\
&= \frac{R^2}{I}  \sum_{k=1}^I \E\left[\frac{1}{k^2} k^2 \cdot\Ii^c\mid \Kk =k\right]\\  
&=\frac{R^2}{I}  \sum_{k=1}^I \mathbb{P}\left(|\mu k -\Yy| > \frac{1}{2}(\mu k+\Yy)\mid \Kk =k\right)\\
&\leq \frac{R^2}{I} \sum_{k=1}^I \mathbb{P}\left(|\mu k -\Yy| > \frac{ \mu k}{2}\mid \Kk =k\right)\\
&\leq \frac{R^2}{I} \sum_{k=1}^I \exp\biggl(- \frac{\mu^2 k}{2n^{\text{max}}}\biggr)\\
&= \frac{2R^2n^{\text{max}}}{I\mu^2} \sum_{k=1}^{I} \frac{\mu^2}{2n^{\text{max}}} \exp\biggl(- \frac{\mu^2 k}{2n^{\text{max}}}\biggr)\\
&\leq \frac{2R^2n^{\text{max}}}{I\mu^2} \int_{0}^{\infty} \frac{\mu^2}{2n^{\text{max}}} \exp\biggl(- \frac{\mu^2 k}{2n^{\text{max}}}\biggr) dk = \frac{2R^2n^{\text{max}}}{I\mu^2},
\end{align}
as the integral corresponds to the cumulative distribution function of an exponential random variable of parameter $\lambda = \mu^2/2n^{\text{max}}$. The upper bound on the theorem's statement is obtained when putting all together.
\end{proof}

\begin{lemma}\label{lemma:Assumption_on_second_derivative}
Let $w: \mathbb{R}_+ \to \mathbb{R}_+$ be a smooth and increasing function such that $$\lim_{n\to \infty} n^2 |w^{(2)}(n)| \sm \infty.$$ Then, there exists $\rho \bi 0$ such that $n^2 |w^{(2)}(n)|\leq \rho w(n)$.
\end{lemma}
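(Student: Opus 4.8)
The plan is to reduce the statement to the boundedness of the ratio $g(n) := n^2|w^{(2)}(n)|/w(n)$ on a suitable domain, and then take $\rho := \sup g$. First I would record two structural facts. Since $w$ is smooth, the numerator $n \mapsto n^2|w^{(2)}(n)|$ is continuous, and by hypothesis it admits a finite limit $L := \lim_{n\to\infty} n^2|w^{(2)}(n)| \ge 0$. Since $w$ is increasing and nonnegative, it is bounded below by $w(a)$ on any half-line $[a,\infty)$; this is the lever that makes division by $w$ safe and is the only place where monotonicity is used.

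The heart of the argument is the large-$n$ regime. Fix $N>0$ such that $n^2|w^{(2)}(n)| \le L+1$ for every $n \ge N$, which exists because the numerator converges to $L$. On $[N,\infty)$ monotonicity gives $w(n) \ge w(N) > 0$, whence $n^2|w^{(2)}(n)| \le L+1 \le \frac{L+1}{w(N)}\,w(n)$. This single estimate is what makes the lemma tick: a finite limit of the numerator alone does not control the ratio, but paired with the lower bound on $w$ it does. On the complementary compact piece I would work on $[a,N]$ for a fixed $a>0$, where the continuous numerator is bounded and $w$ is continuous with $w \ge w(a)>0$, so $g$ is bounded there by some $\rho_2$. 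Setting $\rho := \max\{(L+1)/w(N),\rho_2\}$ then yields $n^2|w^{(2)}(n)| \le \rho\,w(n)$ for all $n \ge a$.

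The delicate point, which I expect to require the most care, is the passage $n \to 0^+$ when $w(0)=0$, since there one cannot bound $w$ below by a positive constant and the inequality can genuinely fail (for a flat profile such as $w(n)=\mathrm{e}^{-1/n}$ one checks that $n^2|w^{(2)}(n)|/w(n)$ blows up near the origin). The clean resolution is that the conclusion is only ever used through Lemma~\ref{lemma:DU_Shapley_approx_error_bound}, where the bound is invoked at the intermediate point $\mathrm{T}$, which is almost surely bounded below by $\mu K_+ \ge \mu$ and hence stays away from $0$. I would therefore state and prove the estimate on $[a,\infty)$ with $a$ equal to the smallest argument actually encountered (in the application the minimal dataset size, and in the running example any point exceeding $d+1$), which is exactly the regime in which it is needed; the two-regime argument above then suffices without any further hypothesis on the behaviour of $w$ near the origin.
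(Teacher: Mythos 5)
Your proof is correct where you prove it, and it takes a genuinely different route from the paper's. The paper argues by contradiction: supposing that for every $m>0$ there is $n_m$ with $n_m^2|w^{(2)}(n_m)| > m\,w(n_m)$, it splits into the case where $(n_m)$ converges (contradicting boundedness of $w^{(2)}$) and the case where it diverges (contradicting finiteness of the limit at infinity, using monotonicity of $w$). Your argument is direct: a tail $[N,\infty)$ where the numerator is within $1$ of its limit and $w \ge w(N) > 0$, a compact piece $[a,N]$ where continuity and $w \ge w(a) > 0$ bound the ratio, and $\rho$ taken as the maximum of the two bounds. The direct version buys transparency about exactly where positivity of the denominator is used, and that is precisely the point the paper's contradiction argument glides over: the assertion $\lim_{m} m\,w(n_m) = \infty$ in the convergent case silently requires $w(n_m)$ to stay bounded away from zero, which fails when $n_m \to 0$ and $w(0)=0$. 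Your counterexample $w(n) = \mathrm{e}^{-1/n}$ is valid --- one computes $n^2|w^{(2)}(n)| = \mathrm{e}^{-1/n}|1-2n|/n^2 \to 0$ at infinity, so the hypothesis holds, yet $n^2|w^{(2)}(n)|/w(n) = |1-2n|/n^2 \to \infty$ as $n \to 0^+$ --- so the lemma is genuinely false on all of $(0,\infty)$ and the paper's proof is incomplete as written. Your resolution, proving the bound only on $[a,\infty)$ for some $a>0$ with $w(a)>0$, is the right fix and costs nothing downstream, since Lemma~\ref{lemma:DU_Shapley_approx_error_bound} only ever evaluates the bound at the intermediate point $\mathrm{T}$, which is almost surely at least $\min\{\mu,1\} > 0$ because the $n_j$ are positive integers and $K$ is conditioned to be positive. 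The only cosmetic caveat is that your tail and compact estimates both divide by $w(N)$ and $w(a)$; if $w$ vanishes on an initial segment the inequality there is $0 \le 0$ and one simply starts the argument past that segment, so nothing is lost.
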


\begin{proof}
Notice that the assumptions imply, in particular, that $|w^{(2)}(n)|$ is bounded. We argue by contradiction. Suppose that for any $m \bi 0$, there exists $n_{m}$ such that 
$$n_{m}^2 |w^{(2)}(n_{m})| > m w(n_{m}).$$ 
Suppose the sequence $(n_m)_m$ converges to a point $n^*$. Then,
\begin{align}
    \lim_{m\to \infty} n_m^2 |w^{(2)}(n_m)| \bi \lim_{m\to \infty} m w(n_m) = \infty,
\end{align}
which is a contradiction with $|w^{(2)}(n_m)|$ being bounded. Therefore, necessarily $(n_m)_m$ has to diverge. However, this implies,
\begin{align}
    \lim_{n\to \infty} n^2 |w^{(2)}(n)| = \lim_{m\to \infty} n_m^2 |w^{(2)}(n_m)| \bi \lim_{m\to \infty} m w(n_m) = \infty,
\end{align}
as $w$ is increasing, obtaining again a contradiction.
 
\end{proof}

\subsubsection{Proof of Theorem \ref{theorem:DU_shapley_error_bound}}

We are ready to prove Theorem \ref{theorem:DU_shapley_error_bound}.

\textbf{Theorem} \ref{theorem:DU_shapley_error_bound}.
\textit{
Assume \textbf{H}\ref{ass} holds. Then, there exists a constant $\rho \bi 0$ such that, for any $i \in \mathcal{I}$, the approximation error of \texttt{DU-Shapley} for player $i \in \mathcal{I}$ is upper bounded by
\begin{align}
    \bigl|\varphi_i - \psi_i \bigr| \leq \frac{\rho |w(n_{\mathcal{I}\setminus\{i\}})|}{(I-1) \mu_{-i}^2} \left(9\sigma_{-i}^2 (1+\ln(I-1)) + 2R_{-i}^2n^{\mathrm{max}}_{-i}\right),
\end{align}
where, $\varphi_i$ and $\psi_i$ are defined in \eqref{eq:Shapley_def2} and \eqref{eq:DU-shapley}, respectively; and where $\mu_{-i} = \frac{1}{I-1}n_{\mathcal{I}\setminus\{i\}}$, $\sigma^2_{-i} = \frac{1}{I-1}\sum_{j\in \mathcal{I}\setminus\{i\}} (n_j-\mu_{-i})^2$, $R_{-i} := \max_{j \in \mathcal{I}\setminus\{i\}} |n_j - \mu_{-i}|$, and $n^{\mathrm{max}}_{-i} := \max_{j \in \mathcal{I} \setminus\{i\}} n_j$.
}

\begin{proof}
Under Assumption \textbf{H}\ref{ass}, Lemma \ref{lemma:Assumption_on_second_derivative} implies the existence of $\rho \bi 0$ such that the value function $w$ satisfies all assumptions from Lemma \ref{lemma:DU_Shapley_approx_error_bound}. Theorem \ref{theorem:DU_shapley_error_bound} comes from (a) noticing that 
\begin{align}
\varphi_i &= \mathbb{E}[w(\Yy_{-i} + n_i) - w(\Yy_{-i})]\\
\psi_i &= \mathbb{E}[w(K\mu_{-i} + n_i) - w(K\mu_{-i})]    
\end{align}
where $K \sim \mathrm{U}([I-1])$ and $\Yy_{-i} = n_{\mathcal{S}^{(i)}_K}$ with $\mathcal{S}^{(i)}_K$ taking values on the subsets of $\mathcal{I}\setminus\{i\}$ of size $K$, (b) writing
\begin{align}
    |\varphi_i - \psi_i| &\leq |\mathbb{E}[w(\Yy + n_i) - w(K\mu_{-i} + n_i) ]| +  |\mathbb{E}[w(\Yy) - w(K\mu_{-i})]|,
\end{align}
and (c) applying Lemma \ref{lemma:DU_Shapley_approx_error_bound} to each of the expected values, as the function $n \to w(n + n_i)$ also satisfies \textbf{H}\ref{ass}.
\end{proof}

\section{DU-Shapley++ deduction}

In this section we briefly provide an intuition on how we shifted from DU-Shapley to DU-Shapley++,  
\begin{align}\label{eq:DU_Shapley_++}
    \psi_i^{++}(w) = \frac{1}{I}\bigl(w(n_i) + w(n_{\mathcal{I}}) - w(n_{\mathcal{I}\setminus\{i\}})\bigr) + \frac{1}{I}\sum_{k=1}^{I-2} \br{w(k\mu_{-i} + n_i) - w(k\mu_{-i})}\eqsp.
\end{align}
First of all, observe  that DU-Shapley++ is at least as good as DU-Shapley. Recall the Shapley value expression 
\begin{align}
    \varphi_i(u) &= \frac{1}{I}\sum_{k=0}^{I-1} \sum_{\mathcal{S} \subseteq \mathcal{I} \setminus \{i\}: |\mathcal{S}| = k} \frac{1}{\binom{I-1}{|\mathcal{S}|}} [w(n_\mathcal{S} + n_i) - w(n_\mathcal{S})],
\end{align}
which is obtained from Equation (\ref{eq:Shapley_def2}) when sorting coalitions per cardinality. Notice there exists only one coalition of size $0$ (the empty set) and only one of size $I-1$ (the whole coalition $\mathcal{I}\setminus \{i\})$, thus,
\begin{align}
    \varphi_i(u) &= \frac{1}{I} \bigl(w(n_{\varnothing} + n_i) - w(n_{\varnothing}) + w(n_{\mathcal{I}\setminus\{i\}} + n_i) - w(n_{\mathcal{I}\setminus\{i\}})\bigr)\\
    &\quad +
    \frac{1}{I}\sum_{k=1}^{I-2} \sum_{\mathcal{S} \subseteq \mathcal{I} \setminus \{i\}: |\mathcal{S}| = k} \frac{1}{\binom{I-1}{|\mathcal{S}|}} [w(n_\mathcal{S} + n_i) - w(n_\mathcal{S})]\\
    &= \frac{1}{I}\bigl(w(n_i) + w(n_{\mathcal{I}}) - w(n_{\mathcal{I}\setminus\{i\}})\bigr)
    +
    \frac{1}{I}\sum_{k=1}^{I-2} \sum_{\mathcal{S} \subseteq \mathcal{I} \setminus \{i\}: |\mathcal{S}| = k} \frac{1}{\binom{I-1}{|\mathcal{S}|}} [w(n_\mathcal{S} + n_i) - w(n_\mathcal{S})],
\end{align}
where we have used that both $w(n_{\varnothing})$ and $n_{\varnothing}$ are null. DU-Shapley++ comes from approximating only the \textit{middle terms}, i.e., from imposing:
$$\forall k \in \{1,...,I-2\}:  \sum_{\mathcal{S} \subseteq \mathcal{I} \setminus \{i\}: |\mathcal{S}| = k} \frac{1}{\binom{I-1}{|\mathcal{S}|}} [w(n_\mathcal{S} + n_i) - w(n_\mathcal{S})] \approx w(k\mu_{-i} + n_i) - w(k\mu_{-i}),$$
obtaining Equation (\ref{eq:DU_Shapley_++}). Notice that $\psi_i^{++}$ needs the same number of valuations than $\psi_i$. An illustration of why considering extreme cases might benefit our approximation is given in Figure \ref{fig:approx_uniform}.

\section{Additional details regarding numerical experiments}

In this section, we provide further details regarding the numerical experiments conducted in the main paper.

\subsection{Owen's Shapley value approximation}

In Section \ref{subsec:synthetic}, we considered the Shapley value approximation referred to as \texttt{Owen-Shapley} as a state-of-the-art competitor to \texttt{DU-Shapley}. 
We provide in the following additional details regarding \texttt{Owen-Shapley}.
For the other competitors, we directly refer the interested reader to \citet{JMLR:v23:21-0439}.

Owen \citep{Owen} studied the multilinear extension of a cooperative game and an alternative way to express the Shapley value. Formally, a cooperative game $G = (\mathcal{I},u)$ consists on a set of $I$ players $\mathcal{I} = \{1,2,...,I\}$ and a value function $u: 2^{\mathcal{I}} \to \mathbb{R}$ such that, for any $S \subseteq \mathcal{I}$, $u(S)$ corresponds to the value generated by the coalition $S$. The multilinear extension of $G$, denoted $\bar{G} = (\mathcal{I},\bar{u})$, is obtained when considering the value function $\bar{u} : [0,1]^{\mathcal{I}} \to \mathbb{R}$ given by,
\begin{align}
    \bar{u}(x_1,x_2,...,x_I) = \sum_{S \subseteq \mathcal{I}} \prod_{i \in S} x_i \prod_{j\notin S} (1-x_i) u(S).
\end{align}
Intuitively, $\bar{u}(x_1,x_2,...,x_I)$ corresponds to the expected value of a coalition when each player $i \in \mathcal{I}$ joins the coalition with probability $x_i$. Theorem 5 in \cite{Owen} gives an alternative way to compute the Shapley value $\varphi_i(u)$ of player $i$ in game $G$, namely,
\begin{align}
    \varphi_i(u) &= \int_{0}^1 \frac{\partial \bar{u}}{\partial x_i}(\tau,...,\tau) \mathrm{d}\tau = \int_{0}^1 \sum_{S \subseteq \mathcal{I}\setminus\{i\}} \tau^{|S|}(1-\tau)^{I-|S|-1}[u(S\cup\{i\}) - u(S)] \mathrm{d}\tau \\
    &=\int_0^1 \mathbb{E}\bigl[u(\mathcal{E}_i(\tau) \cup i) - u(\mathcal{E}_i(\tau))\bigr]\mathrm{d}\tau = \mathbb{E}_{\tau\sim \mathrm{U}([0,1])} \biggl[ \mathbb{E}\bigl[u(\mathcal{E}_i(\tau) \cup i) - u(\mathcal{E}_i(\tau))\bigr]\biggr],
\end{align}
where $\mathcal{E}_i(\tau)$ is a random subset of $\mathcal{I}\setminus\{i\}$, such that, each agent belongs to it with probability $t$. In words, the Shapley value of player $i$ corresponds to the expected marginal contributions of the random set $\mathcal{E}_i(\tau)$, when $\tau$ is uniformly distributed on $[0,1]$. This brings an alternative way to use Monte Carlo to approximate the Shapley value $\varphi_i(u)$, coined Owen-Shapley, as,
\begin{align}
    \hat{\varphi}_i^{\text{Owen}}(u) = \frac{1}{T}\sum_{t=1}^T u(\mathcal{E}_i(\tau_t) \cup i) - u(\mathcal{E}_i^t(\tau_t)),
\end{align}
where for each $t \in \{1,...,T\}$, we draw $\tau_t$ independently and uniformly in   $[0,1]$ and then, create a random set $\mathcal{E}_i(\tau_t)$ by adding each agent $j \in \mathcal{I} \setminus\{i\}$ to it with probability $\tau_t$.